\pdfoutput=1
\documentclass{article}
\usepackage{iclr2027_conference,times}
\iclrfinalcopy
\usepackage[T1]{fontenc}
\usepackage{amsmath,amssymb,amsthm,graphicx,booktabs,tabularx,longtable}
\usepackage{hyperref,url,enumitem,wrapfig}
\hypersetup{colorlinks=true,linkcolor=blue,citecolor=blue,urlcolor=blue,pdftitle={How Far Do Persona Effects Generalize in Language Models?},pdfauthor={Yufan Zhou, Yuxuan Liu, Enze Ma, Lyumanshan Ye, Zhongqi Yue, Robin De Croon, Yucheng Jin, Katrien Verbert, Zhao Wang}}
\title{How Far Do Persona Effects Generalize\\in Language Models?}
\author{\parbox[t]{\dimexpr\textwidth-2\tabcolsep\relax}{\centering
Yufan Zhou$^{1}$\quad Yuxuan Liu$^{2}$\quad Enze Ma$^{3}$\quad Lyumanshan Ye$^{4}$\quad Zhongqi Yue$^{5}$\\
Robin De Croon$^{1}$\quad Yucheng Jin$^{6\dagger}$\quad Katrien Verbert$^{1\dagger}$\quad Zhao Wang$^{7\dagger}$\\[4pt]
\normalfont\small
$^{1}$KU~Leuven\quad $^{2}$East~China~University~of~Science~and~Technology\quad $^{3}$University~of~Illinois~Chicago\\
$^{4}$Shanghai~Jiao~Tong~University\quad $^{5}$Microsoft~Research\quad $^{6}$Duke~Kunshan~University\quad $^{7}$Zhejiang~University\\[2pt]
$^{\dagger}$Corresponding authors}}

\newcommand{\E}{\mathbb{E}}
\newcommand{\CE}{\operatorname{CE}}
\newcommand{\sigmoid}{\sigma}

\newcommand{\argmin}{\operatorname*{arg\,min}}
\newtheorem{proposition}{Proposition}

\makeatletter
\renewcommand\section{\@startsection{section}{1}{\z@}{-1.4ex plus -0.3ex minus -.2ex}{1.0ex plus 0.2ex minus 0.2ex}{\large\sc\raggedright}}
\renewcommand\subsection{\@startsection{subsection}{2}{\z@}{-1.3ex plus -0.3ex minus -.2ex}{0.6ex plus .1ex}{\normalsize\sc\raggedright}}
\makeatother
\begin{document}
\maketitle
\lhead{Preprint}
\suppressfloats[t]
\begin{abstract}
Persona prompts ask language models to answer as particular kinds of people. We test whether relationships learned from these effects predict responses to new questions and remain useful across models and prompts. Across 57 attributes, three behavioral domains, and seven pairs of open 7 to 9B checkpoints, persona effects can be predictable without being portable. Separate attribute and task gains improve prediction beyond shared scaling significantly in OLMo-3 and Qwen2.5, with the most robust evidence in OLMo-3. In that model, target refitting significantly outperforms gains borrowed from each of the other six pairs. Across model transfers, borrowed gains with one amplitude underperform shared scaling in most directions; allowing two target parameters removes the significant losses but yields no significant benefit over target shared scaling. After rewording, refitting significantly outperforms reuse with one amplitude in all six tested pairs, while changes of examples or country context often preserve reuse value. In the tested prompt transfers, regularized updates outperform both reuse strategies in median at 64 target questions per attribute. A separate survey comparison finds that selecting the more responsive checkpoint can worsen human fit; responsiveness is confounded with training status, and temperature calibration largely removes this cost but not errors in group ordering. Within the tested gain representation, apparent transfer can come from target calibration; source relationships must add predictive value beyond calibration and regularization. Code and data are available at \url{https://github.com/thzva/persona-gain}.
\end{abstract}
\section{Introduction}
Persona prompting lets researchers generate answers that appear to come from particular kinds of people: a retiree in India, an entrepreneur comfortable with risk, a devout rural voter. These answers are increasingly used as evidence about human behavior: to pilot surveys, replicate experiments, and populate social simulations \citep{argyle2023outofone,aher2023simulate,park2024thousand}. Getting a model to change its answers when the persona changes is easy. The hard part is knowing what those changes mean: whether they form \emph{generalizable behavioral regularities}, which \emph{conditions} those regularities survive, and how far they \emph{correspond to real populations}.

When a model or prompt changes, a researcher must decide whether an earlier persona measurement still applies. Existing evaluations examine survey agreement \citep{santurkar2023opinions}, responsiveness to attributes \citep{beck2024sociodemographic}, population diversity \citep{chameleon2026limit}, and sensitivity to wording and format \citep{lutz2025prompt}. They characterize behavior within or across settings, but not what a source relationship contributes to prediction in a new target. A model may respond more strongly while preserving the same attribute relationships, and two fitted patterns may look similar without supporting useful transfer. Moreover, when adaptation uses target data, improved prediction may come from target calibration rather than the borrowed pattern. If both the model and prompt change, an audit may attribute a prompt effect to the model update; a simulation that reuses an earlier relationship may impose a pattern unsupported by the target. The practical question is therefore which parts of an earlier measurement remain informative, and which must be estimated again on the target.

We measure a persona effect as the change in answer log odds when one attribute value changes. A gain describes how this effect scales between checkpoints or prompts. We first test whether gains that vary across attributes and tasks predict responses to new questions better than one shared multiplier. We then compare source reuse, target calibration, and target refitting using the same target fitting and evaluation questions (\S\ref{sec:method}). Gains between Base and trained checkpoints describe training changes; within a deployed model they describe changes between prompts, and we also test direct reuse without a Base (Appendix~\ref{app:direct-transfer}). The behavioral studies cover 57 attributes, three domains, and seven matched checkpoint pairs of open 7 to 9B models. A separate survey study tests the relationship between responsiveness and human fidelity.

\begin{figure}[!t]
\centering\includegraphics[width=\linewidth]{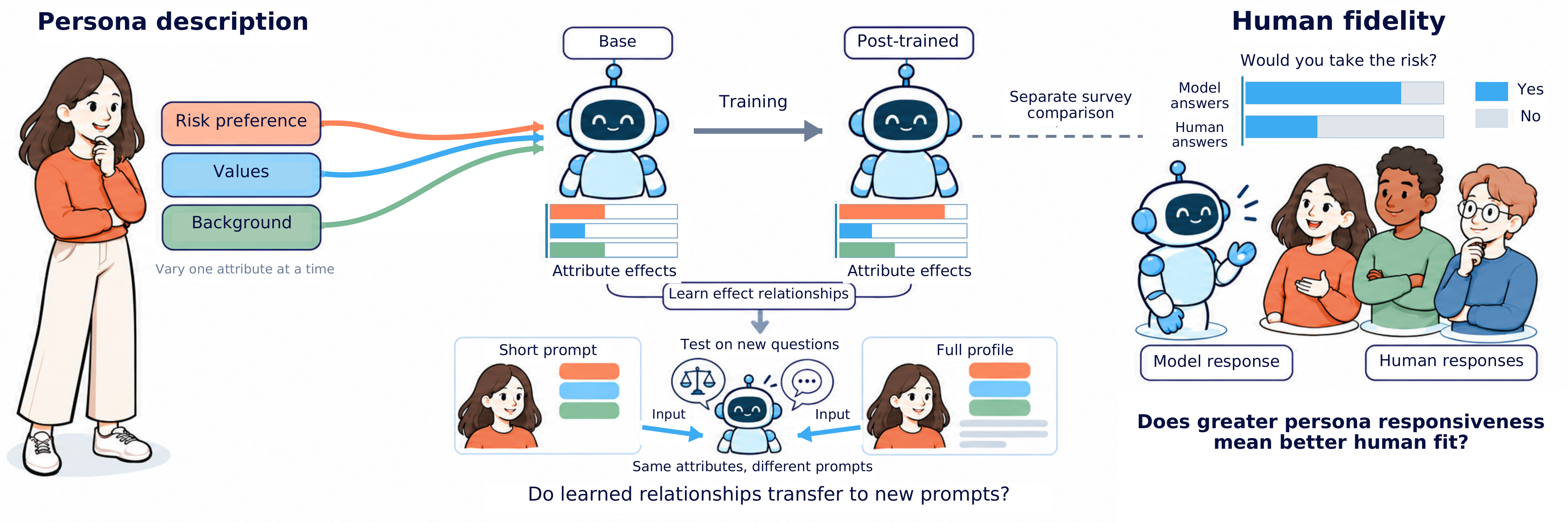}
\caption{\textbf{Predictability, reuse, and human fidelity require separate tests.} We vary one attribute at a time, compare effects before and after training, and learn relationships between them. Below, these relationships are tested on new questions with short prompts or full profiles. Right, human surveys provide a separate evaluation with different attributes, rather than validating the fitted gains. Model and human answer distributions can disagree (blue: Yes; gray: No). The question and bars are illustrative, not measured results.}
\label{fig:workflow}
\end{figure}

Our contribution is a controlled predictive comparison that separates the information supplied by a source relationship from the benefits of target calibration and regularization. It tests whether reuse adds predictive value and how much a target update recovers under the same data budget. Three findings follow:
\begin{itemize}[leftmargin=1.2em,itemsep=2pt,topsep=3pt,parsep=0pt]
\item \textbf{Training reshapes persona effects beyond uniform scaling.} Separate attribute and task gains improve prediction significantly in OLMo-3 and Qwen2.5 after fitting uncertainty and correction across pairs; only OLMo-3 remains significant in every robustness condition (\S\ref{sec:results}).
\item \textbf{Apparent transfer can come from target calibration.} In OLMo-3, the target with the most structure, refitting significantly beats all six borrowed patterns under either reuse predictor; for weakly structured targets the test can only bound source value. After rewording, refitting beats reuse with one amplitude in all six pairs; other prompt changes often leave source patterns useful (\S\ref{sec:transfer}).
\item \textbf{Responsiveness alone does not establish human fidelity.} In the survey comparisons, the more responsive checkpoint is always the trained one. Selecting it can worsen human fit, but calibration largely removes this cost; errors in group ordering remain (\S\ref{sec:human}).
\end{itemize}
Persona effects can be \textbf{predictable without being portable}, and reuse must be established on the target rather than inferred from similarity between fitted patterns.

\section{A predictive framework for persona effects}
\label{sec:method}
We begin with two questions: how does an attribute change a model's answer, and how does training change that response? We measure the first by comparing prompts and describe the second with a fitted multiplier, which we call a gain. We then ask whether these gains predict responses to new questions and remain useful when the context changes.

\subsection{Measuring how an attribute changes an answer}
A single answer cannot tell us how much a persona attribute mattered. We therefore compare two prompts that ask the same question and differ in exactly one attribute value, for example a higher versus lower stated risk tolerance, and measure the resulting shift in the model's preference between answers A and B. Let $z_m(q,P)=\log p_m(A\mid q,P)-\log p_m(B\mid q,P)$ be model $m$'s answer log odds for question $q$ and persona prompt $P$. For attribute $a$, values $v_i,v_j$, and presentation $c$,
\begin{equation}
 d_m(q,a,i,j;c)=z_m(q,P_c(a,v_i))-z_m(q,P_c(a,v_j)),
 \label{eq:contrasts}
\end{equation}
where $P_c(a,v)$ renders value $v$ in presentation $c$.

\paragraph{Coverage.} We measure 57 attributes in 12 categories through 287 statements, spanning demographics, values, psychology, relationships, work, health, and daily habits. Each prompt follows a fixed completion template: a short persona statement, a situation, and a forced choice between A and B after four balanced examples (Appendix~\ref{app:design}). Gains are fitted on 181 opinion, decision, and social judgment questions and tested on 45 newly written ones; every condition of a question stays in the same split, so success means generalizing to unseen questions. Weights balance tasks and attribute categories. T\"ulu refers to T\"ulu-3.

\subsection{Learning how training changes attribute effects}
Let $x$ be an attribute effect in the Base checkpoint and $y$ the same effect after training. The effect can differ from question to question; those differences remain in $x$. What we ask is whether a common rule describes how training changes them. The simplest account of training is \emph{shared scaling}, $\widehat y=gx$: one multiplier that amplifies or attenuates every persona effect alike. It includes no change as the special case $g=1$, allowing it to account for uniform changes in magnitude. We ask whether training does more than this, by letting the multiplier, or \emph{gain}, depend on attribute $a$ and task category $b$:
\begin{equation}
 \widehat g=\argmin_{g\in\mathcal G}\sum_{q\in Q_{\rm tr}}\sum_{a}\sum_{i<j}w_{q,a,ij}\bigl(y_{q,a,ij}-g_{a,b(q)}x_{q,a,ij}\bigr)^2,\qquad
 \widehat y_{q,a,ij}=\widehat g_{a,b(q)}\,x_{q,a,ij},
 \label{eq:gains}
\end{equation}
where $b(q)$ is the task category of question $q$. The class $\mathcal G$ ranges from one shared gain, through additive gains $g_{ab}=\mu+u_a+v_b$, to a separate gain for each combination of attribute and task. A gain describes how two measured effects relate; it is a property of a model pair, not of the attribute. Adding gains cannot worsen the fit to the training questions, so we judge predictions on new questions:
\begin{equation}
 Q^2(\widehat y)=1-\frac{\|y-\widehat y\|_w^2}{\|y\|_w^2},\qquad
 \Delta Q^2=Q^2(\widehat y_{\rm rich})-Q^2(\widehat y_{\rm shared}),
 \label{eq:q2}
\end{equation}
with the same balancing weights. Positive $\Delta Q^2$ means that allowing gains to vary by attribute and task predicts new questions better than one shared multiplier. We report this difference in \emph{points} ($100\times\Delta Q^2$). Because $Q^2$ uses the zero prediction as its reference, its denominator is the uncentered energy of the target contrasts; it is neither an $R^2$ centered on the mean nor an accuracy.

A point removes one percent of the target contrast energy from squared error, so its size in log odds varies by model; Table~\ref{tab:prediction} reports both.

This comparison asks whether training changes more than the overall strength of the response. With gains refitted, $\Delta Q^2$ is invariant to positive global rescaling of either checkpoint (Proposition~\ref{prop:invariance}, Appendix~\ref{app:math}). A global temperature change alone therefore cannot produce an advantage: the advantage must reflect differences across attributes or tasks. To distinguish the attribute contribution from task differences, we also compare against one gain per task in Appendix Table~\ref{tab:olmo-ablation}.

\subsection{Testing whether a learned relationship carries over}
\begin{figure}[t]
\centering
\begin{minipage}[c]{0.25\linewidth}
\includegraphics[width=\linewidth]{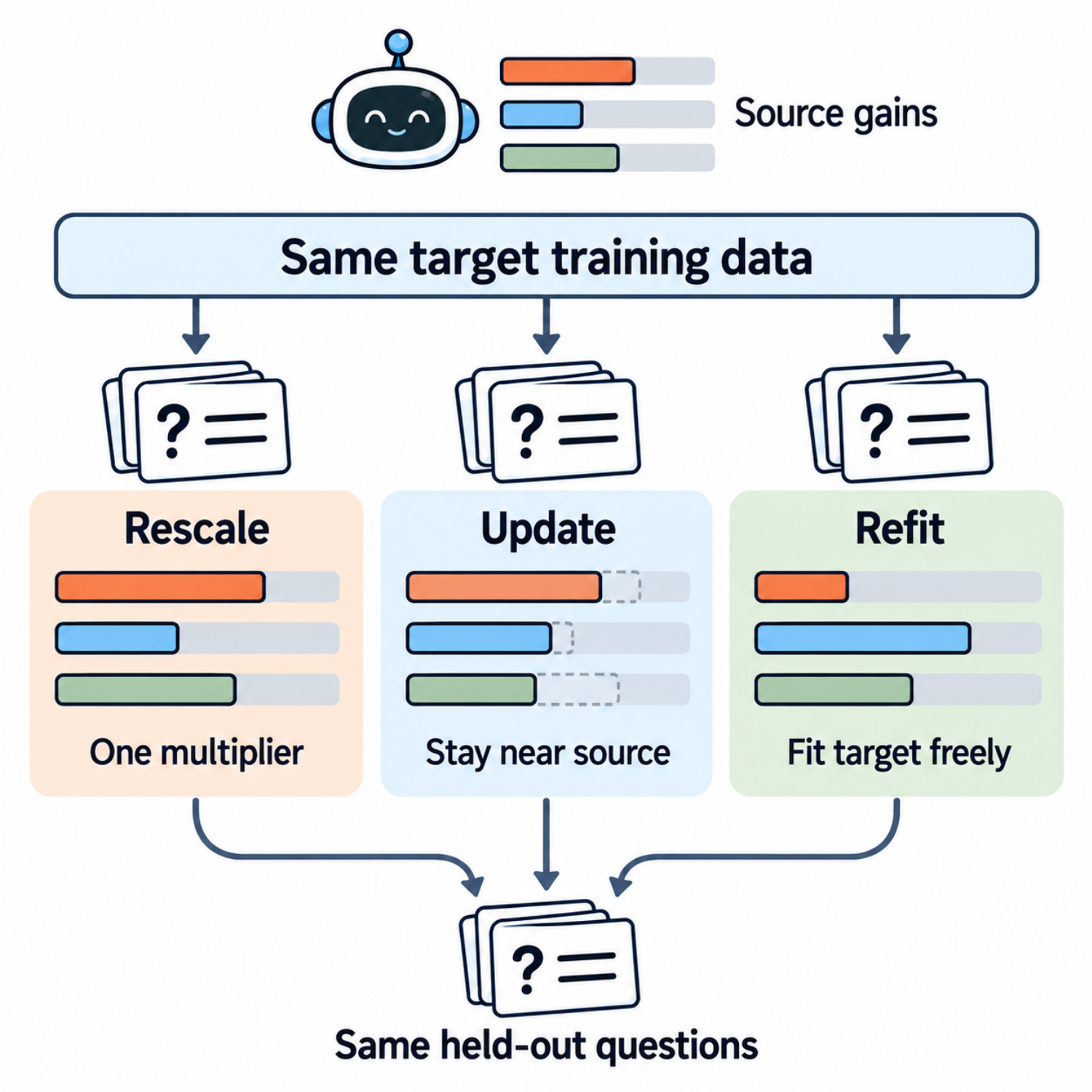}
\end{minipage}\hfill
\begin{minipage}[c]{0.71\linewidth}
\caption{\textbf{Keep, update, or relearn the source gains.} Bars illustrate fitted attribute gains, not model weights. Rescale applies one multiplier; update penalizes departures from the source pattern (dashed guides); refit learns target gains without that penalty. Question cards denote data, not added parameters. All methods use the same target Base effects, fitting questions, and evaluation questions, and are compared with shared scaling. The language models remain fixed.}
\label{fig:predictability-reuse}
\end{minipage}
\end{figure}
To test reuse, we carry gains learned in a source setting into a target with another checkpoint pair, prompt, or persona composition. Every predictor of training gains receives the target's Base effects and predicts its effects after training. This tests whether training reshapes effects similarly across models, given each target's own Base responses. It is a diagnostic for models with accessible Base checkpoints. We also test reuse without a Base, across prompts in \S\ref{sec:wording} and directly between models in Appendix~\ref{app:direct-transfer}.

The central predictor is \textbf{adapted reuse}: it keeps the source gains $g^{\rm src}_{ab}$ and fits a single target amplitude $\beta$, $\widehat y^{\rm tgt}=\beta g^{\rm src}_{ab}x^{\rm tgt}$, which allows a uniform change in overall responsiveness. 

A flexible variant with two target parameters $\widehat y^{\rm tgt}=(u+v g^{\rm src}_{ab})x^{\rm tgt}$ adjusts the average level and the attribute differences separately. It can recover shared scaling at $v=0$, so improvement on new questions measures what the source pattern adds beyond target calibration; like the other gain predictors it remains multiplicative in $x$ and cannot predict an effect where the Base effect is zero.

\textbf{Refitting} learns new gains from target questions, and a \textbf{regularized update} lies in between (Figure~\ref{fig:predictability-reuse}; Appendix~\ref{app:shrinkage}).

Three comparisons tell us what can be reused and what benefits from updating, all on the same new target questions. \emph{Reuse minus shared scaling}: does the source relationship still carry predictive value? \emph{Refitting minus reuse}: how much is lost by keeping the source pattern? \emph{Refitting minus shared scaling}: does the target have learnable structure of its own?

\paragraph{Inference.} Intervals resample whole questions (and profile identities where relevant); each study corrects for its own family of comparisons (Appendix~\ref{app:statistics} specifies which analyses refit predictors within each draw). We call a difference \emph{significant} when its corrected interval excludes zero; each interval also states how large an undetected effect could be, so ``no detected advantage'' is read together with its bound (for flexible reuse across models, the median corrected interval half width is 0.46 points in the original analysis). We use 1 point of $Q^2$ as a prespecified magnitude reference, not a validated threshold of practical utility; Appendix~\ref{app:q2-logodds} translates points into log odds.

\section{Training reshapes persona effects predictably}
\label{sec:results}
\subsection{Training does not simply turn persona effects up or down}
Training can make a model more responsive to personas overall while making it less responsive to a particular attribute. Average magnitude ranges from 0.76 to 4.96 times the Base level across pairs, but the average hides the attributes: in OLMo-3, SFT amplifies 157 of 171 combinations of attribute and task, yet gender contrasts \emph{shrink} in all three tasks. Figure~\ref{fig:breadth} shows the same unevenness across categories and models, even though effects generally keep their direction.

\begin{figure}[!htbp]
\centering
\includegraphics[width=0.72\linewidth]{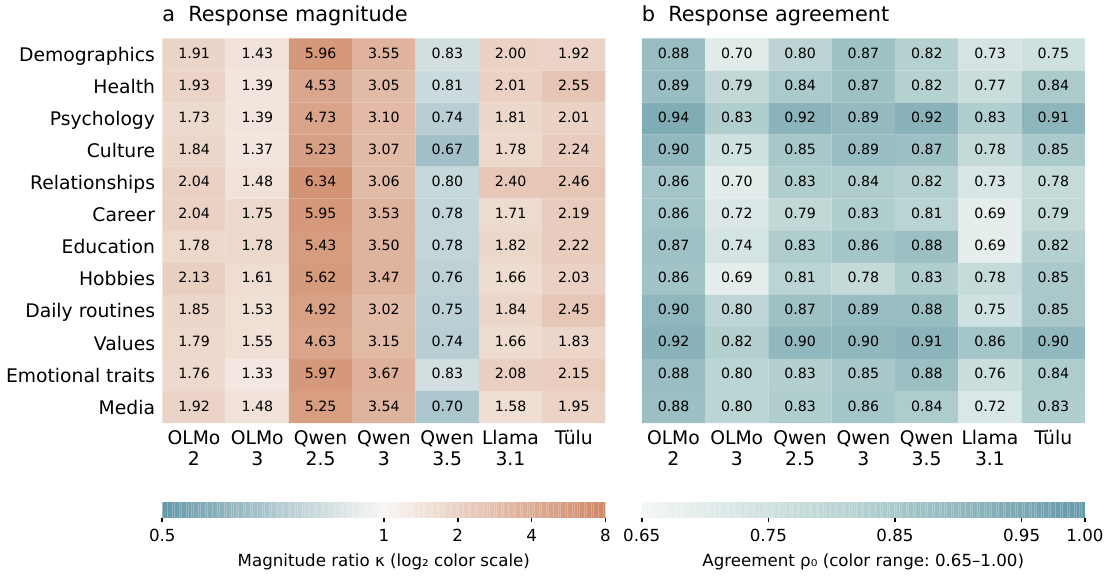}
\caption{\textbf{Training changes some persona effects more than others.} Each cell summarizes one category in one model pair by the median over attributes ($\kappa$ in log space); Qwen2.5 in bfloat16. Left: the magnitude ratio $\kappa$, where one means unchanged strength; colors use a log scale. Right: uncentered agreement $\rho_0$ between Base and trained effects. The summaries cover all 57 attributes.}
\label{fig:breadth}
\end{figure}

\subsection{Prediction improves most reliably in OLMo-3}
Part of this variation predicts new questions. Gains fitted on 181 questions predict 45 new ones better than one shared multiplier by point estimate in six of seven pairs (Table~\ref{tab:prediction}). The improvement in log odds is modest, but it measures predictive value beyond shared scaling. The evidence is strongest in OLMo-3, where the improvement stays significant under the strictest test and under chat interfaces, reversed answer order, and an independently written question set; Qwen2.5 also passes the strictest test, and support elsewhere varies (Appendices~\ref{app:crossmodel}, \ref{app:robustness}).

An additive model with attribute and task terms recovers most of the improvement in the five pairs supported when fitted gains are held fixed (Appendix Table~\ref{tab:olmo-ablation}).

\begin{table}[!htbp]
\centering\small
\caption{\textbf{Prediction gains and scaling across training stages.} The prediction column gives the improvement of separate attribute and task gains (171 per pair, each fitted from the roughly 60 fitting questions of its task) over shared scaling on 45 new questions, in $Q^2$ points, with the weighted root mean square prediction error of both predictors in log odds. Bold values remain significant when fitting uncertainty and correction across seven pairs are included. The stage columns report $Q^2_{\rm shared}$ and the fraction of prediction error remaining after scaling, $R=L_{\rm shared}/L_{\rm identity}$. Lower $R$ means scaling removes more error. The two column groups use distinct evaluations, specified below.}
\label{tab:prediction}
\label{tab:stage-main}
\begin{tabular*}{\linewidth}{@{\extracolsep{\fill}}llrrr@{\hspace{12pt}}rr@{}}
\toprule
& & \multicolumn{3}{c}{Prediction on new questions} & \multicolumn{2}{c}{Stage comparison} \\
\cmidrule(lr){3-5}\cmidrule(l){6-7}
& & & \multicolumn{2}{c}{RMSE (log odds)} & & \\
Model & Transition & $\Delta Q^2$ (points) & shared & gains & $Q^2_{\rm shared}$ & $R$ \\
\midrule
Llama-3.1 & Base $\to$ Instruct & 0.07 & 1.026 & 1.025 & 0.637 & 0.835 \\
\addlinespace[2pt]
OLMo-2 & Base $\to$ SFT & $-0.02$ & 1.079 & 1.080 & 0.787 & 0.633 \\
 & SFT $\to$ DPO & & & & 0.969 & 0.308 \\
\addlinespace[2pt]
OLMo-3 & Base $\to$ SFT & $\mathbf{3.10}$ & 1.054 & 1.010 & 0.639 & 0.966 \\
 & SFT $\to$ DPO & & & & 0.936 & 0.274 \\
\addlinespace[2pt]
Qwen2.5 & Base $\to$ Instruct & $\mathbf{1.28}$ & 3.204 & 3.113 & 0.770 & 0.331 \\
\addlinespace[2pt]
Qwen3 & Base $\to$ Instruct & 0.87 & 3.271 & 3.218 & 0.728 & 0.456 \\
\addlinespace[2pt]
Qwen3.5 & Base $\to$ Instruct & 1.13 & 0.575 & 0.560 & 0.784 & 0.487 \\
\addlinespace[2pt]
T\"ulu & Base $\to$ SFT & 2.11 & 1.094 & 1.053 & 0.755 & 0.636 \\
 & SFT $\to$ DPO & & & & 0.962 & 0.219 \\
\bottomrule
\end{tabular*}
\par\smallskip
{\footnotesize\leftskip=0pt\rightskip=0pt\parfillskip=0pt plus 1fil\noindent Stage columns use cross validation over questions for OLMo-3/T\"ulu and the 181/45 split otherwise; Base$\to$Instruct rows bundle stages; Qwen2.5 in float32. Intervals and results for the final stage: Appendices~\ref{app:crossmodel}, \ref{app:scope-support}.\par}
\end{table}

\subsection{Shared scaling fits the tested DPO steps better}
Across three trajectories, one multiplier describes the step from SFT to DPO far better than the step from Base to SFT (Table~\ref{tab:prediction}): shared scaling leaves 63 to 97\% of the error of predicting no change during SFT but only 22 to 31\% during DPO. Measuring the ratio against each stage's own change improves comparability, but cannot by itself rule out that smaller changes are easier to approximate by scaling; OLMo-3 gives the clearest contrast, because its two stages change effects by similar amounts, yet scaling leaves far less error after DPO (Appendix~\ref{app:scope-support}). Uneven change is not automatically predictable, though: OLMo-2 departs from uniform scaling without a significant gain on new questions.

\section{Predictable is not portable}
\label{sec:transfer}
\subsection{Across models: borrowed gains add little reusable value}
We evaluate 42 directed transfers among seven related checkpoint pairs, not independent replications, retaining the target's Base effects and borrowing the source gains (\S\ref{sec:method}). OLMo-3 provides the clearest case: its own refit gains 3.1 points over shared scaling, while incoming patterns lose 1.8 points on average with one amplitude and gain only 0.19 with two parameters. Refitting significantly beats all six sources under either predictor (Appendix~\ref{app:target-reference}). Qwen2.5 also has structure (1.3 points), but its refitting advantage over flexible reuse is not significant after correction for individual directions. For the other targets, whose structure is small or unresolved, the results mainly show no detected additional reuse benefit. Reuse with one amplitude predicts worse than shared scaling in 40 of 42 directions, significantly in 29 (Figure~\ref{fig:stages}a). Losses also persist under chat interfaces, reversed answer order, and independent questions (Appendix~\ref{app:robustness}). Sharing a Base does not establish useful gain reuse: neither direction between T\"ulu and Llama-3.1 has a significant advantage over shared scaling, with one amplitude or two parameters. Both do outperform shuffled attribute patterns (Appendix~\ref{app:permutation-null}).

Avoiding a transfer loss does not establish a benefit. With two parameters and Qwen2.5 evaluated in float32, no direction differs significantly from shared scaling. The median gain is 0.03 points, and corrected upper bounds are below one point in 39 of 42 directions (40 in bfloat16; Figure~\ref{fig:stages}b). The corresponding counts are 37, 39, and 36 under chat interfaces, averaged answer orders, and independent questions (Appendix~\ref{app:robustness}). These bounds must be read against each target's own structure. The corresponding ratios to target refitting are descriptive and unstable when target gains approach zero (Appendix~\ref{app:target-reference}). Attribute permutations show that source patterns retain some attribute information. Under the tested noise injection model, estimation noise accounts for only a small part of the transfer gap (Appendix~\ref{app:splithalf-noise}). Direct transfer without a Base gives a median $Q^2$ of 0.47, compared with 0.62 to 0.79 for scaling each target's own Base effects (Appendix~\ref{app:direct-transfer}).

\begin{figure}[!t]
\centering
\includegraphics[width=1.0\linewidth]{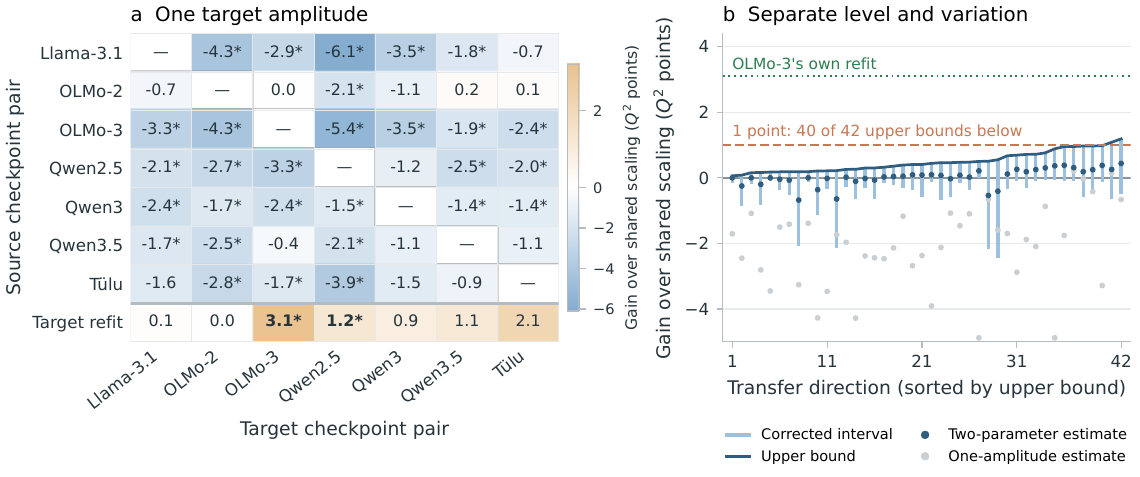}
\caption{\textbf{Borrowed gains hurt with one amplitude and add little with two.} (a) Adapted reuse minus target shared scaling, in $Q^2$ points; rows supply training gains, columns receive them. Asterisks mark intervals excluding zero after correction across 42 directions. The bottom row is each target's own refit, bold when significant across seven targets; it is a reference, not an upper bound. (b) The same 42 directions with two target parameters, sorted by the corrected upper bound of their gain over shared scaling: 40 upper bounds lie below one point with Qwen2.5 in bfloat16 as shown, 39 in float32 (Appendix~\ref{app:controls}). Gray dots show the estimates with one amplitude; lower bounds below $-5$ are clipped. T\"ulu and Llama-3.1 share a base model, so directions are not all independent (Appendix~\ref{app:cross-model-reuse}).}
\label{fig:stages}
\end{figure}

\subsection{Across prompts: rewording carries a cost, other changes need not}
\label{sec:wording}
Changing prompt wording can reduce the predictive value of a source relationship. We fit source gains jointly on two templates and evaluate a target that changes wording, including the answering perspective, examples, or both (Figure~\ref{fig:transfer-refit}). After wording changes, refitting significantly beats reuse with one amplitude in all six pairs. In OLMo-3, T\"ulu, and Llama-3.1, this reuse is significantly worse than shared scaling, and refitting also significantly beats flexible reuse. The latter result holds with one source template and, conditional on the fitted source, with two (Appendices~\ref{app:extensions}, \ref{app:flex-prompt}). Qwen3 retains significant reuse value after wording changes with two source templates. Within a deployed model without a Base reference, refitting also beats unchanged reuse after wording changes in all six models (Appendix~\ref{app:base-free-reuse}).

\begin{figure}[!t]
\centering\includegraphics[width=0.82\linewidth]{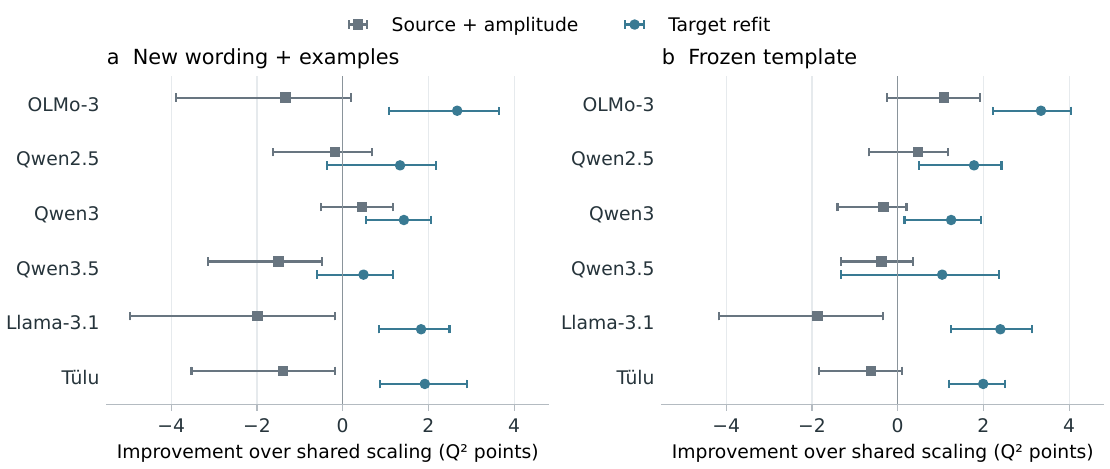}
\caption{\textbf{Refitting improves prediction after prompt changes.} Gray keeps the source pattern and adjusts its amplitude; blue learns new target gains. Values are improvements over target shared scaling on new questions, in $Q^2$ points. Panel (a) changes wording and examples. Panel (b) uses a questionnaire template frozen before the OLMo-3/T\"ulu runs and reused for later pairs. Intervals include fitting uncertainty and correction within each study (Appendices~\ref{app:extensions} and~\ref{app:shrinkage}).}
\label{fig:transfer-refit}
\end{figure}

Coefficient similarity hides this cost: uncentered source and target gain correlations of 0.97 to 0.99 fall to 0.15 to 0.63 once the shared mean is removed, so similarity is no substitute for a transfer test (Appendix~\ref{app:similarity}).

Reuse can nevertheless help. When only the examples change, the pattern fitted on two source templates improves on shared scaling in four of six pairs with either one amplitude or two parameters, conditional on the fitted source. A separate study of 16 psychological, value, and emotion attributes finds useful reuse across five country contexts in five models (Appendix~\ref{app:extensions}). Thus the type of prompt change matters.

\subsection{When reuse fails: update, and at what cost}
\label{sec:remeasure}
How much target data does an update need? At eight questions per attribute, flexible reuse outperforms the update shrunk toward the source in median in 11 of 12 settings. The ordering reverses in 11 settings at 32 questions and all 12 at 64 (Figure~\ref{fig:targetbudget}). Unregularized refitting overtakes reuse with one amplitude at 16 to 32 questions in eleven settings and at 64 in the remaining one. Regularized updates beat reuse with one amplitude at every tested budget and nearly match refitting from 32 questions. We detect no consistent advantage from shrinking toward the source rather than a shared gain: corrected intervals range from $-0.27$ to 0.17 points in OLMo-3 to $-0.91$ to 0.67 in Qwen3.5 (Appendix~\ref{app:shrinkage}).

A template frozen before data collection confirms the refitting advantage prospectively in OLMo-3 and T\"ulu; later runs of Llama-3.1, Qwen2.5, and Qwen3 agree, and Qwen3.5 is unresolved (Appendix Table~\ref{tab:prospective}).

\begin{figure}[!t]
\centering\includegraphics[width=0.86\linewidth]{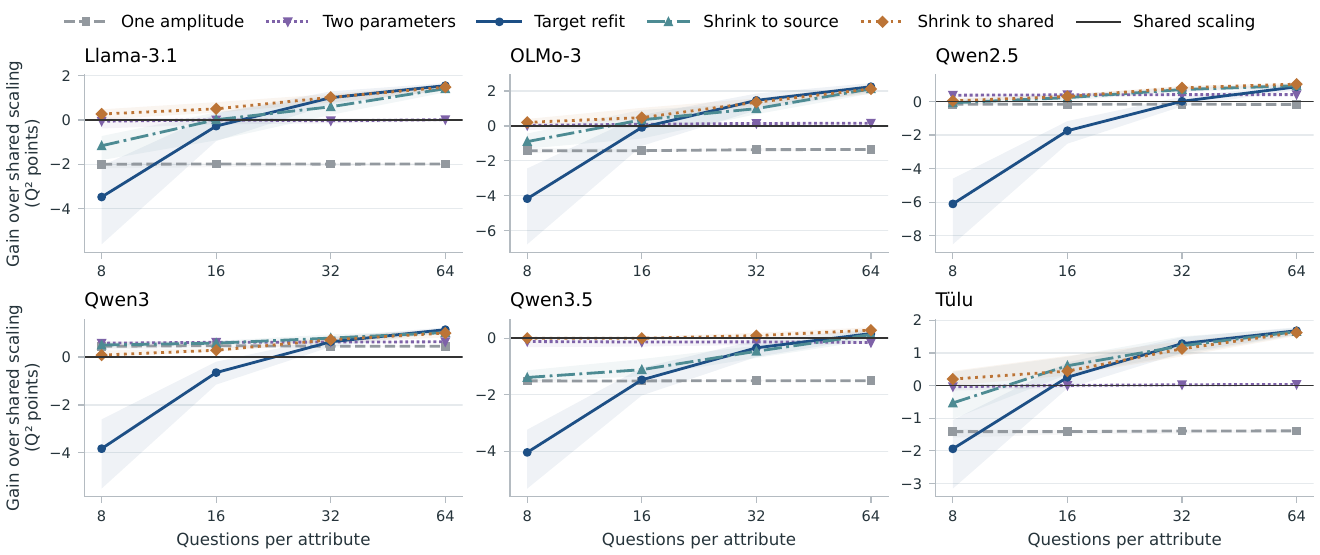}
\caption{\textbf{Regularization helps updates with limited data; the source prior shows no detected advantage.} On the template with joint wording and example changes, all six strategies use the same subsets of target questions. Lines show median improvements over shared scaling, in $Q^2$ points, across 200 subsets; shading is the middle 50\%. Penalties are selected on target fitting questions only. Results for the frozen template: Appendix~\ref{app:shrinkage}.}
\label{fig:targetbudget}
\end{figure}

\section{Composition changes what an attribute does}
\label{sec:composition}
We test whether effects measured for isolated attributes remain stable when other persona information is added.

\subsection{Training changes relative cue influence differently across models}
We pair a country cue, USA versus India, with a risk, planning, or social preference that either bears on the question or not, and compare the size of the country effect between the two (Appendix~\ref{app:composition}). Training shifts this relative influence in opposite directions: down in T\"ulu, Qwen2.5, and Qwen3, up in Qwen3.5, unresolved in OLMo-3 and Llama-3.1 (Figure~\ref{fig:composition}a). The effect is context dependent and model specific; the data do not support a single mechanism.

\begin{figure}[!htbp]
\centering\includegraphics[width=0.76\linewidth]{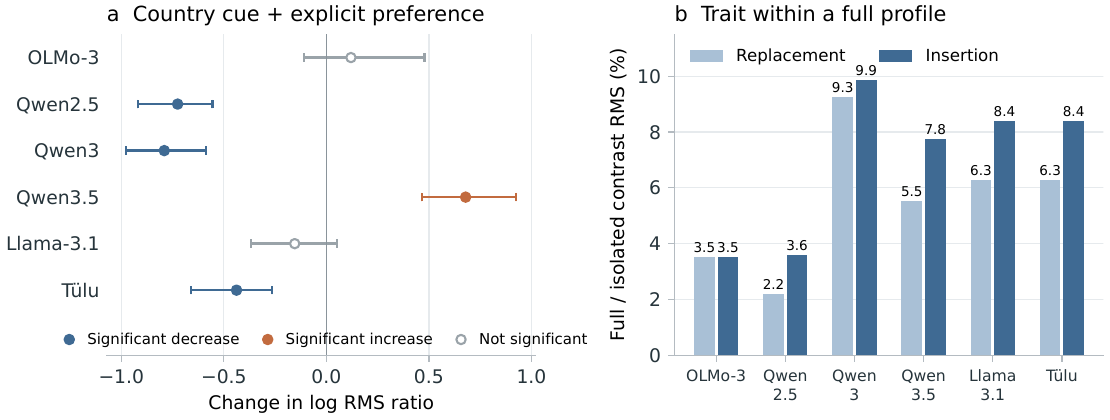}
\caption{\textbf{An attribute acts differently when other persona information is present.} Panel (a) shows how training changes the country effect under relevant versus unrelated preferences, measured as the difference of log RMS ratios. Negative values mean a loss of relative country influence under relevant preferences; 98.75\% intervals correct across four comparisons within each study, and filled markers exclude zero. Panel (b) shows how much of an isolated trait effect remains inside a full profile, as a percentage, for Base models. Panel (b) uses float32 for Qwen2.5 and bfloat16 for the other pairs.}
\label{fig:composition}
\end{figure}

\subsection{Profiles change more than effect magnitude}
We insert one trait statement into, or replace one field of, a full DeepPersona profile \citep{wang2025deeppersona}, holding the rest fixed. For Base models, profile effects retain only 2 to 10\% of isolated effect magnitude in bfloat16, and 3\% for Qwen2.5 in float32 (Figure~\ref{fig:composition}b). These small signals require caution about numerical precision.

The Qwen2.5 float32 insertion results show that rescaling alone is insufficient: target refitting improves on shared scaling by 12.7 $Q^2$ points and on both reuse predictors. Several other models show the same ordering in bfloat16. For replacement, no refitting advantage is significant. These are supporting tests of context dependence; they do not distinguish semantic interactions from effects of prompt length or insertion position. Full comparisons, absolute errors, and precision checks appear in Appendices~\ref{app:composition} and~\ref{app:controls}.

\section{A boundary: responsive is not faithful}
\label{sec:human}
The preceding sections test predictions of model behavior. Human fidelity requires a separate evaluation. On OpinionQA's U.S. survey distributions \citep{santurkar2023opinions}, we compare selecting checkpoints by persona responsiveness with selecting by human agreement. Selection regret is its extra evaluation loss relative to selecting by human agreement on development questions (Appendix~\ref{app:human}).

\begin{figure}[!htbp]
\centering\includegraphics[width=0.72\linewidth]{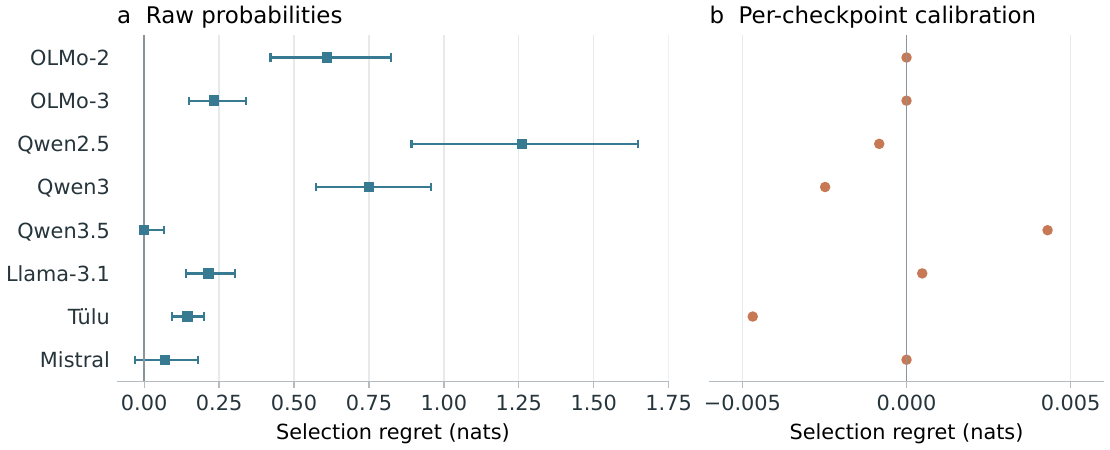}
\caption{\textbf{Choosing the more responsive checkpoint can increase survey loss.} Values show the extra loss on new age and sex survey questions from selecting checkpoints by sensitivity rather than human agreement. Blue uses raw probabilities and corrected intervals; orange applies calibration fitted on human labels to each checkpoint. The raw and calibrated panels use different scales. Orange points are descriptive estimates without displayed intervals.}
\label{fig:calibration}
\end{figure}

\paragraph{Checkpoint selection in these comparisons.} In six of eight checkpoint comparisons, the checkpoint more sensitive to personas is the worse simulator, costing up to 1.41 nats of evaluation loss in Qwen2.5 across eleven survey dimensions (Figure~\ref{fig:calibration} shows the age and sex comparisons); in Qwen3.5 both rules agree, and Mistral is inconclusive. The more sensitive checkpoint is always the trained one. This is a boundary case for using sensitivity to select checkpoints, not evidence that sensitivity independently predicts poor fit.

\paragraph{Calibration reduces selection costs but not group-ordering errors.} One temperature per checkpoint, fitted on human labels, reduces regret to at most 0.005 nats in all eight comparisons. Calibration on 32 questions with group distribution labels comes within 0.002 nats of using all 84, averaged over subsets. Calibrated persona prompts still beat matched prompts without an attribute (Appendix~\ref{app:intervention}). On a separate suite of four survey dimensions, models recover the direction of human group differences only 57 to 70\% of the time, against 95\% agreement between two halves of the human respondents. Base checkpoints recover 54 to 70\%; training significantly changes recovery only in Llama-3.1, where it improves. A positive temperature preserves group ordering within each question. On this suite, calibrated models close only 12 to 29\% of the cross entropy gap between a 50/50 guess and the human distributions (Appendix~\ref{app:human}).

\section{Related work}
\label{sec:related}
\paragraph{Simulating human samples.} Conditioned models can reproduce aggregate survey patterns and classic experiments \citep{argyle2023outofone,aher2023simulate}, but align unevenly with demographic groups and are shaped by answer order \citep{santurkar2023opinions,dominguezolmedo2024questioning}.

\paragraph{Persona attributes and composition.} Attribute relevance shapes simulation quality \citep{hu2024persona,froehling2026attributes}. Rich backstories improve matching of response distributions, while interviews support individual simulation \citep{moon2024anthology,park2024thousand}. Combining attributes can also produce interactions, stable configurations, and collapse that depends on the task \citep{bhattacharyya2026personality,chameleon2026limit}. Our controlled edits test whether relationships learned from isolated attributes still predict behavior within richer personas.

\paragraph{Robustness, steering, and calibration.} Persona and prompt studies document sensitivity to role format, phrasing, and formatting \citep{lutz2025prompt,beck2024sociodemographic,sclar2024formatspread}. Other work quantifies prompt steerability \citep{miehling2025steerability}, distinguishes it from distributional coverage and alignment \citep{sorensen2026spectrum}, and traces diversity in advice personas across OLMo-3 training stages \citep{kumar2026advice}. We test what source relationships add beyond target calibration and regularization on the same target questions. Work on constrained choices and distributional evaluation \citep{dominguezolmedo2024questioning,meister2025distributional} motivates our readout checks and calibration (Appendix~\ref{app:literature}).

\section{Discussion and conclusion}
Predictive structure does not ensure useful transfer. OLMo-3 provides the clearest comparison: it has the largest target refitting gain, and its own attribute gains predict new questions better than every tested borrowed pattern. Across model transfers, flexible reuse removes significant losses without a significant benefit over shared scaling, which separates source value from target calibration.

The evidence concerns forced-choice log odds from seven open 7 to 9B checkpoint pairs on constructed questions. Our conclusions concern multiplicative gains, which cannot represent an effect that post-training creates where the Base has none; transfer through other representations is untested. Profile and survey studies provide boundary evidence (Appendix~\ref{app:limitations}).

\paragraph{A practical comparison.} Given access to the required checkpoints, compare shared scaling, flexible reuse, and a regularized target update on questions separate from fitting. In the tested prompt transfers, regularized updates outperform both reuse strategies in median at 64 questions per attribute (about 36,700 fitting prompts per checkpoint pair; model-transfer budgets are untested). The actionable test is what source information adds beyond target calibration and regularization.

\label{maintextend}
\bibliography{references}
\bibliographystyle{iclr2027_conference}
\clearpage
\appendix
\raggedbottom
\setlength{\parskip}{.5pc}
\makeatletter
\renewcommand\section{\@startsection{section}{1}{\z@}{-1.6ex plus -0.3ex minus -.2ex}{1.0ex plus 0.2ex minus 0.2ex}{\large\sc\raggedright}}
\makeatother
\section{Scope and limitations}
\label{app:limitations}
Our experiments use fixed model releases, constructed questions, and forced-choice A/B readouts, and do not identify the mechanism behind the changes we measure. T\"ulu and Llama share a pretrained backbone, and all pairs share one question pool, so comparisons across pairs are not independent replications. Behavioral readouts are first-token log odds over two options; first-token probabilities can differ from generated answers \citep{wang2024answerc}, and only the human selection study includes a sampled-answer check (Appendix~\ref{app:human}). Chat and answer-order checks keep the forced-choice readout, and the independent question set stays within the same three task definitions (Appendix~\ref{app:robustness}). Our tests hold out questions rather than people, and human evaluation uses historical U.S. surveys; new identities, open-ended generation, other languages, and non-U.S. populations remain to be tested.

Profile effects are small enough to be sensitive to numerical precision. A full float32 rerun changed Qwen2.5's replacement results (Appendix~\ref{app:controls}); the other profile pairs are still evaluated in bfloat16, so their profile results are supporting evidence only. The cross-model and prompt conclusions do not depend on the profile study. The training-gain tests need a Base checkpoint; the deployed-model variant of Appendix~\ref{app:base-free-reuse} does not, but it predicts prompt changes rather than training changes. We have not tested whether better structural prediction implies better human simulation: the output intervention in Appendix~\ref{app:intervention} uses different survey attributes and gains.

\section{Materials and models}
\label{app:design}
Table~\ref{tab:design} maps each experiment to what it changes and how it is evaluated.

\begin{table}[htbp]
\centering\small\renewcommand{\arraystretch}{1.2}
\caption{\textbf{What each experiment asks of a persona measurement.} Behavioral studies predict model contrasts; survey studies test agreement with human response distributions. Core prediction and transfer tests share a question pool; cue competition and independent replication use separate questions.}
\label{tab:design}
\begin{tabularx}{\linewidth}{@{}>{\raggedright\arraybackslash}p{.19\linewidth}>{\raggedright\arraybackslash}X>{\raggedright\arraybackslash}X@{}}
\toprule
Study & What changes & Evaluation unit \\
\midrule
Attribute inventory & 57 attributes; matched checkpoints & Question folds; then 45 new questions \\
Training trajectory & Base, SFT, DPO, final stages & Questions used to evaluate fitted gains \\
Context transfer & Model family, five countries, prompt format & Target questions; gains use source or target training data \\
Cue competition & Country crossed with preference relevance and strength & 24 constructed questions; bootstrap, no prediction holdout \\
Full profiles & One field or trait line in a fixed profile & Question folds within represented profiles \\
Human simulation & Checkpoint selection and calibration & Survey questions with group-distribution targets \\
\bottomrule
\end{tabularx}
\end{table}

\paragraph{Coverage by study.} The seven behavioral pairs are OLMo-2, OLMo-3, T\"ulu, Qwen2.5, Qwen3, Qwen3.5, and Llama-3.1. Prompt and budget studies use the same set without OLMo-2; stage comparisons use OLMo-2, OLMo-3, and T\"ulu. The eight human selection comparisons add Mistral; OLMo selection considers four checkpoints, the others two. The respondent-split audit uses seven pairs, with Mistral in place of OLMo-2.

\paragraph{Where to check each claim.} Table~\ref{tab:evidence-map} links each main claim to its comparison, uncertainty treatment, and full results. Correction families follow the individual studies and are never pooled.

\begin{table}[htbp]
\centering\footnotesize\renewcommand{\arraystretch}{1.15}
\caption{\textbf{Evidence behind the main claims.} Each row gives the models covered, what is compared, how uncertainty is handled, and where the complete results appear.}
\label{tab:evidence-map}
\begin{tabularx}{\linewidth}{@{}>{\raggedright\arraybackslash}p{.14\linewidth}>{\raggedright\arraybackslash}p{.12\linewidth}>{\raggedright\arraybackslash}X>{\raggedright\arraybackslash}X>{\raggedright\arraybackslash}p{.11\linewidth}@{}}
\toprule
Claim & Coverage & Comparison & Uncertainty & Details \\
\midrule
New-question structure & 7 pairs & Cell gains versus shared scaling & Fitting and evaluation questions resampled with refitting; corrected across 7 pairs; fixed-gain intervals also reported & App.~\ref{app:crossmodel}, \ref{app:q2-logodds} \\
Stage differences & 3 trajectories & Shared scaling at SFT versus DPO & Question folds, resampling with refitting for OLMo-3 and T\"ulu; OLMo-2 descriptive & App.~\ref{app:scope-support} \\
Cross-model reuse & 42 directions & One-amplitude and two-parameter reuse versus shared scaling; refitting & Source and target refitted in each draw; corrected across 42 directions; permutation and noise controls & App.~\ref{app:cross-model-reuse} \\
Prompt reuse & 6 pairs; deployed variant 6 models & Adapted and flexible reuse, shared scaling, refitting & Study-specific corrections; prospective and extension runs marked & App.~\ref{app:extensions} \\
Updating costs & 6 pairs & Shared scaling, reuse, refitting, source- and shared-shrunk updates & Penalty chosen on inner folds; corrected across 12 settings; budget curves descriptive & \S\ref{sec:remeasure}; App.~\ref{app:shrinkage} \\
Robustness and precision & 7 pairs & Chat interfaces, answer order, independent questions, extreme attributes, float32 & Frozen rules; study-specific corrections & App.~\ref{app:robustness} \\
Composition & 6 pairs & Relevance change of a country cue; profile reuse versus refitting & Question and identity resampling; study-specific corrections & App.~\ref{app:composition} \\
Human fidelity & 8 selection comparisons & Sensitivity versus human-loss selection; calibration; group directions & Question resampling; respondent-split audit against chance & App.~\ref{app:human} \\
\bottomrule
\end{tabularx}
\end{table}

\paragraph{Attribute inventory.} The inventory contains 57 attributes in 12 categories (Table~\ref{tab:inventory}), rendered through 287 statements. We vary one attribute at a time rather than crossing identities. Eighteen stated ages are reduced to six analysis levels by averaging fixed groups of three log-odds values; other attributes keep their levels. All 287 statements are listed verbatim in the supplementary material (Section~S1).
\label{app:statements}

\begin{table}[htbp]
\centering\footnotesize\renewcommand{\arraystretch}{1.15}
\caption{\textbf{The attributes tested, with one example statement per category.} Statements are experimental conditions rendered through fixed sentence templates, not validated scales.}
\label{tab:inventory}
\begin{tabularx}{\linewidth}{@{}>{\raggedright\arraybackslash}p{.19\linewidth}>{\raggedright\arraybackslash}X>{\raggedright\arraybackslash}p{.30\linewidth}@{}}
\toprule
Category & Attributes & Example statement \\
\midrule
Demographic & age group, gender, country & You grew up and live in India. \\
Health and physical status & health status, chronic constraint, mobility, fitness, sleep quality, dietary constraint & You use a wheelchair for mobility. \\
Psychological and cognitive & social orientation, risk tolerance, cognitive style, life attitude, stress state & You accept moderate risk. \\
Cultural and social context & heritage engagement, language background, cross-cultural experience, religious affiliation, ritual participation & You regularly use two languages in daily life. \\
Relationships and networks & family structure, caregiving, network structure, living arrangement, social support & You regularly care for an elderly relative. \\
Career and work identity & occupation, career stage, job responsibility, employment form, work environment & You work as a self-employed freelancer. \\
Education and learning & education level, enrollment status, education interruption, study field, learning method & You prefer learning through hands-on practice. \\
Hobbies and lifestyle & outdoor interest, arts and entertainment, practical interest & A main leisure interest of yours is pottery. \\
Daily routines & daily schedule, meal habit, exercise routine, time priority & You exercise almost every day. \\
Core values and beliefs & ethical priority, interpersonal value, worldview, religious belief, life priority & A core value that guides your choices is fairness. \\
Emotional and relational skills & emotional awareness, emotion regulation, coping mechanism, resilience, empathy, conflict skill & After setbacks, you generally recover quickly. \\
Media consumption & content interest, media format, information source, media frequency, media participation & You use social media daily. \\
\bottomrule
\end{tabularx}
\end{table}

\paragraph{Questions and prompts.}
The 181 fitting questions comprise 60 opinion, 61 decision, and 60 social judgment items; the 45 evaluation questions add 15 per task. The first 45 evaluation items were written after the original within-pool result was known and frozen before inference; a further 45 items for the independent replication were written without access to any result (Appendix~\ref{app:robustness}). The standard completion template is
\begin{quote}\small\ttfamily
Persona: [attribute statement]\\
Situation: [question or scenario]\\
Which option would you choose?\\
A. [option A]\\
B. [option B]\\
Answer with only A or B.\\
Answer:
\end{quote}
Four balanced examples precede the item, with answers B, A, B, A. We score the space-prefixed answer tokens, and both checkpoints of a pair use the same interface. The social judgment category shares the common choice instruction, so its label marks a topic grouping rather than a distinct cognitive process. Typical items are an opinion statement to agree or disagree with (``When jobs are scarce, men should have more right to a job than women''), a decision between two activities (joining a neighborhood clean-up or visiting an exhibition), and a judgment of whether an observed behavior was acceptable.

The joint-change target changes both wording and examples: each example reads ``Profile of the respondent: \dots\ Scenario: \dots\ Possible answers: A.\ \dots\ B.\ \dots\ Given the profile, the respondent's selection (A or B):'', the four examples are new (answer pattern A, B, A, B), and the persona is described in the third person. The wording-only and example-only targets change one of the two components. The complete joint-change template, with its four examples, appears in the supplementary material (Section~S1).

\paragraph{Stimulus provenance.} Attribute fields and values follow the taxonomy of DeepPersona \citep{wang2025deeppersona}, itself produced by a generative pipeline; each value is rendered through a fixed sentence template. The fitting questions and the first 45 evaluation questions were constructed for this study and frozen before inference; the independent set of 45 questions was written by a separate model-based writer with no access to results. No statement or question was pretested with human respondents.

\paragraph{Models.}
The primary matched pair is \texttt{Olmo-3-1025-7B} and \texttt{Olmo-3-7B-Instruct-SFT} \citep{allenai2025base,allenai2025sft}. Stage comparisons add DPO and final checkpoints for OLMo-3 and T\"ulu-3 \citep{lambert2024tulu}. Base/Instruct comparisons bundle all intervening training stages. Model coverage varies by experiment rather than forming a complete factorial (Table~\ref{tab:checkpoints}).

\begin{table}[htbp]\centering\small\renewcommand{\arraystretch}{1.08}
\caption{\textbf{Exact checkpoints.} Hugging Face identifiers of every evaluated checkpoint. T\"ulu and Llama-3.1 share the Llama-3.1-8B Base checkpoint, evaluated from a weight-verified mirror.}
\label{tab:checkpoints}
\footnotesize\begin{tabularx}{\linewidth}{@{}ll>{\raggedright\arraybackslash}X@{}}
\toprule
Pair & Base & Post-trained \\
\midrule
Llama-3.1 & \texttt{meta-llama/Llama-3.1-8B} & \texttt{meta-llama/Llama-3.1-8B-Instruct} \\
Mistral & \texttt{mistralai/Mistral-7B-v0.3} & \texttt{mistralai/Mistral-7B-Instruct-v0.3} \\
OLMo-2 & \texttt{allenai/OLMo-2-1124-7B} & \texttt{allenai/OLMo-2-1124-7B-SFT}, \texttt{-DPO}, \texttt{-Instruct} \\
OLMo-3 & \texttt{allenai/Olmo-3-1025-7B} & \texttt{allenai/Olmo-3-7B-Instruct-SFT}, \texttt{-DPO}, \texttt{Olmo-3-7B-Instruct} \\
Qwen2.5 & \texttt{Qwen/Qwen2.5-7B} & \texttt{Qwen/Qwen2.5-7B-Instruct} \\
Qwen3 & \texttt{Qwen/Qwen3-8B-Base} & \texttt{Qwen/Qwen3-8B} \\
Qwen3.5 & \texttt{Qwen/Qwen3.5-9B-Base} & \texttt{Qwen/Qwen3.5-9B} \\
T\"ulu & \texttt{meta-llama/Llama-3.1-8B} & \texttt{allenai/Llama-3.1-Tulu-3-8B-SFT}, \texttt{-DPO}, \texttt{Llama-3.1-Tulu-3-8B} \\
\bottomrule
\end{tabularx}
\end{table}

\paragraph{Country contexts and profiles.}
Country-conditioned experiments use the United States, Argentina, Germany, Australia, and India, prompted in English. Full profiles come from 299 synthetic DeepPersona profiles in these five countries (60 per country, 59 for India). Screening excludes profiles whose remaining text repeats the edited attribute and yields one panel per attribute; the panels contain 113 distinct identities (Table~\ref{tab:profile-panels}). A replacement edit substitutes one existing value field with a taxonomy statement; an insertion edit adds one psychological or emotional trait line after the values section, leaving the rest of the profile unchanged. An edit manifest verifies that paired prompts differ only at the edited line. The longest profile prompt has 3,836 tokens, within the 4,096 cap.

\begin{table}[htbp]\centering\footnotesize\renewcommand{\arraystretch}{1.05}
\caption{\textbf{Profile identities per attribute panel.} Counts are evaluated identities, not answer conditions; an identity can appear in several panels. Small panels, especially life priority, limit profile-level precision.}
\label{tab:profile-panels}
\begin{tabular}{@{}llrrrrrr@{}}
\toprule
Edit & Attribute & US & AR & DE & AU & IN & Total \\
\midrule
Replacement & ethical priority & 1 & 3 & 2 & 1 & 2 & 9 \\
 & life priority & 0 & 0 & 0 & 1 & 1 & 2 \\
 & religious belief & 4 & 4 & 4 & 4 & 4 & 20 \\
 & worldview & 1 & 2 & 2 & 1 & 0 & 6 \\
\addlinespace[2pt]
Insertion & cognitive style & 3 & 1 & 3 & 2 & 7 & 16 \\
 & ten other traits$^{*}$ & 4 & 4 & 4 & 4 & 4 & 20 each \\
\bottomrule
\end{tabular}
\par\smallskip{\footnotesize $^{*}$Social orientation, risk tolerance, life attitude, stress state, emotional awareness, emotion regulation, coping mechanism, resilience, empathy, conflict skill.}
\end{table}

\section{Estimation and inference}
\label{app:math}
This section collects the definitions, the one formal result the paper relies on, and the uncertainty procedures. Weights are fixed and nonnegative, all norms use the same observations and weights, fits use training questions, and scores use separate evaluation questions. Further derivations appear in the supplementary material (Section~S2).

\subsection{Readout, summaries, and fits}
For persona $P$ and question $q$, the answer log odds and the probability normalized over A and B are
\begin{equation}
 z_m(q,P)=\ell_m(A\mid q,P)-\ell_m(B\mid q,P),\qquad
 p_m(q,P)=\sigmoid\!\left(z_m(q,P)\right),
 \label{eq:readout}
\end{equation}
where $\ell_m$ is the answer-token log probability. For earlier and later contrasts $x,y$, Figure~\ref{fig:breadth} summarizes magnitude by $\kappa$ and uncentered agreement by $\rho_0$; their product is the projection coefficient $\alpha$:
\begin{equation}
 \kappa=\frac{\|y\|_w}{\|x\|_w},\qquad
 \rho_0=\frac{\langle x,y\rangle_w}{\|x\|_w\|y\|_w},\qquad
 \alpha=\frac{\langle x,y\rangle_w}{\|x\|_w^2}=\kappa\rho_0.
 \label{eq:decomposition}
\end{equation}
\paragraph{Contrasts and weights.} For each question and attribute, the analysis levels are the attribute's statements, except that the 18 stated ages are averaged in fixed groups of three into six levels. Contrasts are all unordered level pairs $i<j$, and each attribute contributes the mean over its pairs, so attributes with more levels do not receive more weight. Complete pairwise contrasts encode centered variation across levels: with equal level weights, the mean squared contrast over the $G(G-1)/2$ pairs is $2G/(G-1)$ times the variance across levels. Each attribute and question receives the weight
\begin{equation}
 w_{q,a}=\underbrace{\frac{1}{12}\cdot\frac{1}{3}\cdot\frac{1}{n_{s(a)}}}_{\text{category, subcategory, attribute}}\;\cdot\;\underbrace{\frac{1}{3}\cdot\frac{1}{n_{b(q)}}}_{\text{task, question}},
 \label{eq:weights}
\end{equation}
where $n_{s(a)}$ is the number of attributes in the subcategory of $a$ and $n_{b(q)}$ the number of questions in the task of $q$ within the split. The 12 categories, their three subcategories, the three tasks, and the questions within a task are thus weighted equally, and the weights sum to one. Pairs share observations and are not independent samples; dependence is handled by resampling whole questions.

Every gain fit is weighted least squares on paired contrasts. For a collection of contrasts with moments $A=\sum_iw_ix_i^2$, $B=\sum_iw_ix_iy_i$, and $C=\sum_iw_iy_i^2$, the squared error of a scalar gain and its minimizer are
\begin{equation}
 \mathcal E(g)=C-2gB+g^2A,
 \qquad \widehat g=B/A\quad(A>0).
 \label{eq:momentfit}
\end{equation}
Shared fits pool the moments before division; cell fits use the moments of each attribute and task cell. The moments reproduce the squared error of enumerating contrasts exactly, and an independent implementation on raw contrast pairs reproduces all principal estimates.

\subsection{Predictors}
With source gains $\widehat g^{s}$, target Base contrasts $x^t$, and target-fitted parameters, the reuse predictors are
\begin{equation}
 \widehat y^{\rm source}=\widehat g^{s}_{ab}x^t,\qquad
 \widehat y^{\rm adapt}=\widehat\beta_t\,\widehat g^{s}_{ab}x^t,\qquad
 \widehat y^{\rm flex}=(\widehat\lambda_0+\widehat\lambda_1\widehat g^{s}_{ab})\,x^t,\qquad
 \widehat y^{\rm refit}=\widehat g^{t}_{ab}x^t.
 \label{eq:transfer}
\end{equation}
Source-only reuse fits no parameters on target training data. Adapted reuse fits one amplitude, $\widehat\beta_t=\langle u,y^t\rangle_w/\|u\|_w^2$ with $u=\widehat g^s x^t$ on target training questions. Flexible reuse fits two parameters and spans the same predictions as $[\lambda_0'+\lambda_1(g^{s}_{ab}-\bar g^{s})]x^t$, so it can keep the source's level, its attribute differences, both, or neither; it contains shared scaling at $\lambda_1=0$. Refitting estimates new cell gains. The regularized updates of Appendix~\ref{app:shrinkage} interpolate between refitting and reuse.

When the source is observed under several prompt formats, one common pattern $g_{ab}$ is fitted jointly with a format amplitude $s_f$ by alternating weighted least squares on $\sum_f\sum_{q,a,ij}w\,(y-s_fg_{ab}x)^2$; only the products $s_fg_{ab}$ are identified.

\subsection{Scale invariance}
A uniform change in response strength cannot manufacture an advantage for separate gains.
\begin{proposition}[Invariance to global response scale]
\label{prop:invariance}
For fixed weights and question splits, replacing $x$ by $sx$ and $y$ by $ty$, with $s,t>0$, leaves the held-out $Q^2$ of each gain model unchanged when its gains are refitted.
\end{proposition}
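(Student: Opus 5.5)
The plan is to show that refitted gains transform equivariantly, so that every held-out prediction is multiplied by exactly $t$. Both the numerator and the denominator of $Q^2$ in \eqref{eq:q2} then scale by $t^2$.

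First, fix a gain class $\mathcal G$: shared, per-task, additive $\mu+u_a+v_b$, or per cell. Denote its weighted least-squares fit on training contrasts $(x,y)$ by $\widehat g(x,y)$, as in \eqref{eq:gains}. Each gain class used here is a cone, in fact a linear subspace of gain vectors. That holds for the shared gain, for cell gains, and for the additive parametrization, which is linear in $(\mu,u,v)$. Hence $c\,\mathcal G=\mathcal G$ for every $c>0$.

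Second, rewrite the training objective under the substitution $x\mapsto sx$, $y\mapsto ty$:
\[
\sum w\,(ty-g\,sx)^2=t^2\sum w\Bigl(y-\tfrac{s}{t}g\,x\Bigr)^2 .
\]
Setting $h=(s/t)g$, which ranges over $\mathcal G$ exactly when $g$ does, the minimizers satisfy $\widehat g(sx,ty)=(t/s)\,\widehat g(x,y)$. For the scalar and cell cases this can be read off directly from \eqref{eq:momentfit}, since $A\mapsto s^2A$ and $B\mapsto stB$, so $B/A\mapsto (t/s)B/A$.

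The main obstacle is uniqueness of the minimizer. If the design is rank-deficient, for instance in the additive parametrization where $\mu,u,v$ are not jointly identified, the argmin is a set rather than a point. In that case I would argue at the level of fitted values. The set of minimizers scales by $t/s$, and every minimizer yields the same fitted function $g_{a,b}$ on the cells that have data, because the fitted values form the unique projection onto a subspace. Predictions for evaluation contrasts use only cells with $A>0$, where the cell gains are identified. I would state this as the standing convention, matching the $A>0$ condition attached to \eqref{eq:momentfit}.

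Third, carry the result to held-out data. The evaluation Base contrasts become $sx^{\rm ev}$, so the predictions become $\widehat g(sx,ty)\,sx^{\rm ev}=t\,\widehat y$. The evaluation targets become $ty$. The weights and splits are fixed by hypothesis, so
\[
Q^2=1-\frac{\|ty-t\widehat y\|_w^2}{\|ty\|_w^2}
\]
is unchanged. Applying this to both the rich and the shared models shows that $\Delta Q^2$ is also invariant. The same argument extends to the reuse predictors in \eqref{eq:transfer}, because their target parameters $\beta$ and $(\lambda_0,\lambda_1)$ form linear spans and are refitted; this extension is a remark rather than part of the claim.
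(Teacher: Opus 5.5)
Your proposal is correct and follows essentially the same argument as the paper: each gain class is closed under positive rescaling, the training error satisfies $\sum_i w_i(ty_i - g'_i s x_i)^2 = t^2\,\mathcal E_{\rm tr}(g)$ under the bijection $g\mapsto (t/s)g$, so held-out predictions scale by $t$ and $t$ cancels from $Q^2$. The only difference is in the non-unique case. The paper appeals to a scale-equivariant selection rule such as minimum norm, whereas you observe that every minimizer gives the same cell gains, and hence the same predictions, on cells with positive training energy; this avoids choosing a rule but relies on your convention that every evaluated cell has $A>0$, which holds in the paper's fully crossed design.
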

\begin{proof}
Each gain class of Equation~\ref{eq:gains} (shared, attribute, task, additive, cell) is closed under multiplication by a nonzero scalar. With $g'=(t/s)g$, the training error satisfies $\mathcal E'_{\rm tr}(g')=\sum_iw_i(ty_i-g'_isx_i)^2=t^2\mathcal E_{\rm tr}(g)$, and $g\mapsto(t/s)g$ is a bijection on the class, so minimizers map to minimizers (under a scale-equivariant rule such as minimum norm when they are not unique). Hence $\widehat y'_{\rm te}=t\,\widehat y_{\rm te}$, and $t$ cancels from numerator and denominator of $Q^2$. Differences between classes are preserved as well, for every fixed split and every paired resample.
\end{proof}
The result requires refitting; it does not license rescaling a target while keeping an unadjusted source coefficient. Fixed penalties, nonuniform temperatures, and nonlinear probability transformations need separate analysis, and human cross entropy is not temperature invariant.

\subsection{Reading $Q^2$ and similarity}
\label{app:q2-logodds}
\paragraph{Points in log odds.} Because $Q^2$ uses the zero prediction as reference, its points are relative. On the 45 new questions, the weighted root mean square of the trained contrasts is 1.72 log odds in OLMo-3, 2.05 in T\"ulu, and 6.68 in Qwen2.5. Shared scaling leaves a root mean square error of 1.054 in OLMo-3, and attribute and task gains reduce it to 1.010: the 3.1-point improvement removes about 4\% of the typical error and 8\% of the squared error. The corresponding reductions are from 1.094 to 1.053 in T\"ulu and from 3.204 to 3.113 in Qwen2.5 (float32). The structure is reproducible but modest in size; its value lies in what it predicts and what fails to transfer.

\paragraph{Why high gain correlation does not guarantee transfer.} Take two equally weighted cells, a source pattern $g_s=(1+\epsilon,1-\epsilon)$ and a target pattern $g_t=(1-\epsilon,1+\epsilon)$, with target contrasts $x^t=(1,1)$ and $y^t=g_t$. Their uncentered correlation is $(1-\epsilon^2)/(1+\epsilon^2)$, about 0.98 at $\epsilon=0.1$, yet adapted reuse has error $4\epsilon^2/(1+\epsilon^2)$ against $\epsilon^2$ for shared scaling and zero for refitting. A dominant common component hides opposite attribute differences, which is why Appendix~\ref{app:similarity} reports centered correlations and why transfer is judged on target predictions.

\paragraph{Attenuation and structure in profiles.} If profile contrasts were uniform rescalings of short-statement contrasts, $x^{\rm profile}=sx^{\rm short}$ and $y^{\rm profile}=ty^{\rm short}$, the Base and post-trained amplitude ratios would be $s$ and $t$ while, by Proposition~\ref{prop:invariance}, every refitted $\Delta Q^2$ would be unchanged. Attenuation alone therefore neither proves nor rules out predictive structure, and the measured amplitude ratios are not signal-to-noise ratios. A given $\Delta Q^2$ corresponds to a smaller absolute error reduction when profile contrasts are weak.

\subsection{Splits, resampling, and multiplicity}
\label{app:statistics}
\paragraph{Analysis status.} The primary comparison of each study was specified before its analysis. Two kinds of follow-up are distinguished. Checks that required new model inference, each with a protocol frozen before the runs, are the chat interfaces, the reversed answer order, the independent question set, the frozen questionnaire template (prospective for OLMo-3 and T\"ulu, later extensions for the other pairs), and the float32 reruns of the Qwen2.5 profile study and core screen. Retrospective analyses of saved outputs, each with a protocol and decision rule frozen with a SHA-256 digest before computation, are the permutation control, the half-sample and noise-injection analyses, flexible reuse across models, prompts, and profiles (including under chat interfaces, averaged answer orders, the independent question set, the two-template source, and at each budget), the deployed-model variant, the extreme-attribute check, direct transfer without a Base, and the Base control of the group-direction audit; the regularized updates, budget curves, target-reference breakdown, matched-stage subset, and centered correlations are retrospective and descriptive or study-specific. The Qwen2.5 profile float32 rerun followed a frozen rule under which float32 results replace bfloat16 results whenever any comparison changes significance; the Qwen2.5 core-screen float32 rerun was frozen with the rule that its results are primary regardless of significance changes (Appendix~\ref{app:controls}). Budget and allocation curves are descriptive. Each study applies its own correction, and corrections are never pooled across studies.

\paragraph{New questions and stages.} Gains are fitted on 181 questions and scored on 45 new ones; every condition of a question stays in one split. Two uncertainty scopes are reported. The narrower one holds fitted gains fixed and resamples evaluation questions. The broader one resamples both fitting and evaluation questions within task, refits every predictor in each draw, and corrects across the seven pairs; it determines the boldface of Table~\ref{tab:prediction}. Stage comparisons for OLMo-3 and T\"ulu use five question-grouped folds on the 181 questions with 5,000 resamples within task and fold, refitting all predictors, and 99\% intervals for adjacent-stage contrasts.

\paragraph{Transfers and prompts.} The original single-amplitude cross-model analysis uses 100,000 task-stratified whole-question draws (the flexible-reuse, direct-transfer, and float32 analyses use 20,000), refitting source gains, target amplitude, shared scaling, and target gains in each; source and target share the resampled fitting questions, and evaluation questions are resampled independently. Bonferroni percentile intervals correct across the 42 directions separately for each contrast, and the seven target-structure checks form their own family. Prompt comparisons fit the common source pattern with format amplitudes. Their primary intervals resample test questions with fitted coefficients fixed; a broader analysis with 10,000 draws also resamples source and target training questions and refits both.

\paragraph{Profiles.} Profile outputs cover only the 45 new questions, so we use a fixed, task-stratified five-fold split with 36 fitting and nine evaluation questions per fold, keeping a question's profiles, attributes, and value pairs together. Source gains come from the disjoint 181 short-statement questions. Each of 10,000 draws resamples target questions within task and fold and applies shared positive exponential multipliers to profile identities, normalized within panels, which preserves sparse panels and identities that appear under several attributes; source and target fits are repeated in every draw. Estimates therefore target new questions within the represented panels, not unseen people.

\section{Predictive structure and training stages}
\label{app:crossmodel}
\paragraph{New questions.} Table~\ref{tab:crossmodel-gates} gives both uncertainty scopes for the comparison in Table~\ref{tab:prediction}, together with the share of shared scaling's squared error that the gains remove, $\Delta Q^2/(1-Q^2_{\rm shared})$. Once fitting uncertainty is propagated and seven pairs are corrected, the advantage is significant for OLMo-3 and Qwen2.5; with fitted gains held fixed, it is also significant for Qwen3, Qwen3.5, and T\"ulu. Additive attribute and task terms recover 71 to 93\% of the full improvement in those five pairs (Table~\ref{tab:olmo-ablation}).

\begin{table}[htbp]
\centering\small\setlength{\tabcolsep}{4pt}
\caption{\textbf{Separate gains improve prediction significantly in OLMo-3 and Qwen2.5.} Cell gains minus shared scaling in $Q^2$ points, with the share of shared scaling's squared error removed. Fixed-gain intervals resample evaluation questions only (97.5\% for OLMo-3 and T\"ulu, 99\% otherwise); refit intervals also resample fitting questions and correct across seven pairs. Bold estimates have refit intervals above zero. Qwen2.5 uses the float32 rerun (Appendix~\ref{app:controls}).}
\label{tab:crossmodel-gates}
\label{tab:prediction-error-reference}
\begin{tabular*}{\linewidth}{@{\extracolsep{\fill}}lrrrrrr@{}}
\toprule
& & & \multicolumn{2}{c}{Fixed gains} & \multicolumn{2}{c}{Refit in each draw} \\
\cmidrule(lr){4-5}\cmidrule(l){6-7}
Model & Estimate & Error removed (\%) & Lower & Upper & Lower & Upper \\
\midrule
Llama-3.1 & 0.07 & 0.2 & $-2.16$ & 1.89 & $-3.46$ & 1.97 \\
OLMo-2 & $-0.02$ & $-0.1$ & $-0.92$ & 0.88 & $-1.50$ & 0.96 \\
OLMo-3 & \textbf{3.10} & 8.2 & 2.14 & 4.20 & 1.34 & 4.32 \\
Qwen2.5 & \textbf{1.28} & 5.6 & 0.54 & 2.01 & 0.15 & 2.07 \\
Qwen3 & 0.87 & 3.2 & 0.12 & 1.68 & $-0.39$ & 1.81 \\
Qwen3.5 & 1.13 & 5.2 & 0.15 & 2.12 & $-0.33$ & 2.15 \\
T\"ulu & 2.11 & 7.4 & 0.74 & 3.22 & $-0.09$ & 3.41 \\
\bottomrule
\end{tabular*}
\end{table}

\begin{table}[htbp]
\centering\small
\caption{\textbf{Additive attribute and task terms capture most of the benefit.} Improvements over shared scaling in $Q^2$ points on the 45 new questions, without regularization. Task fits one gain per task; Additive fits $\mu+u_a+v_b$; Full fits one gain per cell and reproduces Table~\ref{tab:prediction}.}
\label{tab:olmo-ablation}
\begin{tabular}{@{}lrrrr@{}}
\toprule
Model & Attribute & Task & Additive & Full \\
\midrule
Llama-3.1 & 0.83 & $-1.10$ & $-0.38$ & 0.07 \\
OLMo-2 & 0.17 & $-0.35$ & $-0.19$ & $-0.02$ \\
OLMo-3 & 1.84 & 1.09 & 2.61 & 3.10 \\
Qwen2.5 & 0.67 & 0.62 & 1.19 & 1.28 \\
Qwen3 & 0.56 & 0.08 & 0.69 & 0.87 \\
Qwen3.5 & 0.32 & 0.51 & 0.81 & 1.13 \\
T\"ulu & 1.65 & 0.10 & 1.82 & 2.11 \\
\bottomrule
\end{tabular}
\end{table}

\subsection{Training stages}
\label{app:scope-support}
\label{app:stages}
We compare predicting no change, $\widehat y=x$, with shared scaling, $\widehat y=gx$, on the same held-out observations. The ratio $R=L_{\rm shared}/L_{\rm identity}$ is the squared error left after scaling as a fraction of the error of predicting no change; it is neither a norm ratio nor a fraction of neural change. Table~\ref{tab:stage-scaling-scope} also reports, for the two main trajectories, each stage's total change $\|y-x\|_w/\|x\|_w$ and the residual after the best pooled scalar, $\kappa\sqrt{1-\rho_0^2}$, both relative to that stage's earlier endpoint.

The main contrast, $Q^2_{\rm shared}$ of SFT$\to$DPO minus that of Base$\to$SFT, is 0.297 (corrected interval 0.280 to 0.315) for OLMo-3 and 0.207 (0.188 to 0.228) for T\"ulu. The DPO step is not small: in OLMo-3 its total change (0.868) matches the SFT step (0.892), and response magnitude grows by a factor of 1.80, yet the residual after scaling halves. DPO-step ratios of 0.27 and 0.22 still leave structure beyond scaling, so the data support neither exact conservation nor a scale-only mechanism. The final transitions are near identity. Base-to-Instruct endpoints give $(Q^2_{\rm shared},R)$ of $(0.637,0.835)$ for Llama-3.1, $(0.770,0.331)$ for Qwen2.5 (float32), $(0.728,0.456)$ for Qwen3, and $(0.784,0.487)$ for Qwen3.5; they bundle stages and cannot locate a change. Figure~\ref{fig:training-detail} shows where separate gains improve on shared scaling within each transition.

\begin{table}[htbp]\centering\small\setlength{\tabcolsep}{5pt}
\caption{\textbf{Shared scaling fits the tested DPO steps better than the SFT steps.} $R$ is the squared prediction error after scaling divided by the error of predicting no change; total change and residual are relative to each stage's earlier endpoint. A high $Q^2_{\rm shared}$ means scaling predicts well, not that training leaves responses unchanged. OLMo-3 and T\"ulu use question cross-validation; OLMo-2 uses the 181/45 split.}
\label{tab:stage-scaling-scope}
\label{tab:stage-residuals}
\begin{tabular*}{\linewidth}{@{\extracolsep{\fill}}llrrrrr@{}}
\toprule
Trajectory & Transition & $Q^2_{\rm identity}$ & $Q^2_{\rm shared}$ & $R$ & Total change & Residual \\
\midrule
OLMo-2 & Base $\to$ SFT & 0.663 & 0.787 & 0.633 & & \\
 & SFT $\to$ DPO & 0.899 & 0.969 & 0.308 & & \\
 & DPO $\to$ Instruct & 0.977 & 0.994 & 0.263 & & \\
\addlinespace[2pt]
OLMo-3 & Base $\to$ SFT & 0.627 & 0.639 & 0.966 & 0.892 & 0.876 \\
 & SFT $\to$ DPO & 0.767 & 0.936 & 0.274 & 0.868 & 0.453 \\
 & DPO $\to$ Final & 0.990 & 0.990 & 0.966 & & \\
\addlinespace[2pt]
T\"ulu & Base $\to$ SFT & 0.615 & 0.755 & 0.636 & 1.255 & 1.000 \\
 & SFT $\to$ DPO & 0.825 & 0.962 & 0.219 & 0.684 & 0.320 \\
 & DPO $\to$ Final & 0.982 & 0.983 & 0.972 & & \\
\bottomrule
\end{tabular*}
\end{table}

\begin{figure}[htbp]
\centering\includegraphics[width=\linewidth]{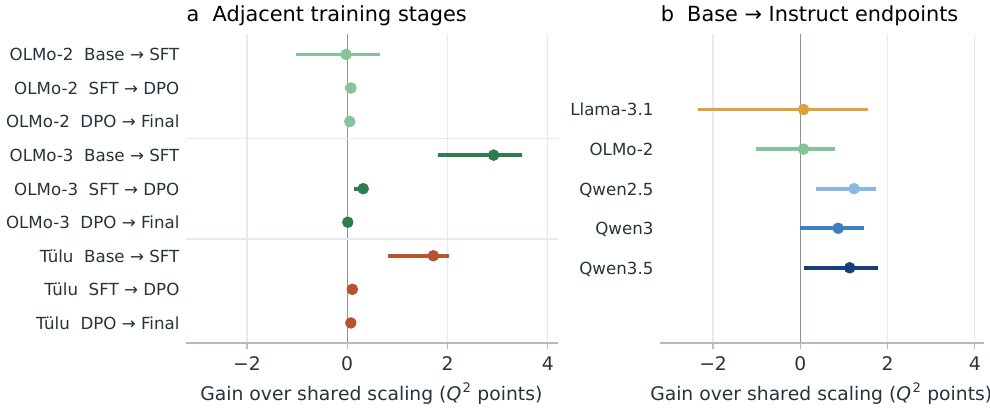}
\caption{\textbf{When do separate gains improve on one shared multiplier?} Positive values favor gains for each attribute and task. (a) Adjacent transitions in OLMo-2, OLMo-3, and T\"ulu; OLMo-3 and T\"ulu use five folds of question cross-validation, OLMo-2 the 181/45 split. (b) Base versus Instruct on the 181/45 split; these endpoints combine stages. Bars are pointwise 95\% paired bootstrap intervals with refitting, uncorrected; colors identify models as in the other figures. Each transition is evaluated separately. Numerical values are in the supplementary material. Qwen2.5 uses bfloat16 here; float32 checks appear in Appendix~\ref{app:controls}.}
\label{fig:training-detail}
\end{figure}

\section{Reuse across models}
\label{app:cross-model-reuse}
\paragraph{Design.} For each of the 42 directions, source gains are fitted on 181 questions. Predictions start from the target's Base effects and are scored against its post-trained effects on 45 new questions. Adapted reuse fits one amplitude and flexible reuse two parameters on the same 181 target questions used by shared scaling and refitting (Equation~\ref{eq:transfer}). T\"ulu and Llama share a backbone and the Qwen releases are related, so these are 42 comparisons, not 42 independent replications.

\paragraph{Results.} Table~\ref{tab:all-crossmodel} gives every direction. One amplitude predicts worse than shared scaling in 40 directions, significantly in 29, and never significantly better. Two parameters remove every significant loss and beat one amplitude in 33 directions, but beat shared scaling significantly in one, with median 0.03 points; the frozen rule for broadly reusable value (at least 22 significant directions) is not met. Their corrected upper bounds lie below one point in 40 directions and below half a point in 26. Refitting still beats flexible reuse by point estimate in 39 directions, significantly in the six into OLMo-3. Table~\ref{tab:two-param} summarizes these counts, including a variant whose source is the additive fit $\mu+u_a+v_b$.
\label{app:two-param-reuse}

\begin{table}[!tp]
\centering\footnotesize
\setlength{\tabcolsep}{2.6pt}
\caption{\textbf{Every cross-model transfer.} Differences in $Q^2$ points on 45 new target questions. One amplitude: adapted reuse minus shared scaling. Two parameters: flexible reuse minus shared scaling. Refit $-$ two: target refitting minus flexible reuse. Bounds are corrected across 42 directions separately for each comparison; bold estimates have intervals that exclude zero. One-amplitude values use 100,000 draws, the others 20,000. Significance and counts use unrounded bounds. Qwen2.5 uses bfloat16 here; float32 checks appear in Appendix~\ref{app:controls}.}
\label{tab:all-crossmodel}
\begin{tabular*}{\linewidth}{@{\extracolsep{\fill}}llrrrrrrrrr@{}}
\toprule
& & \multicolumn{3}{c}{One amplitude} & \multicolumn{3}{c}{Two parameters} & \multicolumn{3}{c}{Refit $-$ two} \\
\cmidrule(lr){3-5}\cmidrule(lr){6-8}\cmidrule(l){9-11}
Source & Target & Est. & Lower & Upper & Est. & Lower & Upper & Est. & Lower & Upper \\
\midrule
Llama-3.1 & OLMo-2 & $\mathbf{-4.28}$ & $-7.93$ & $-2.36$ & $-0.11$ & $-0.65$ & 0.26 & 0.09 & $-1.53$ & 1.03 \\
 & OLMo-3 & $\mathbf{-2.88}$ & $-5.25$ & $-1.40$ & 0.26 & $-0.09$ & 0.69 & \textbf{2.84} & 0.90 & 4.41 \\
 & Qwen2.5 & $\mathbf{-6.09}$ & $-9.19$ & $-4.52$ & \textbf{0.21} & 0.0005 & 0.50 & 1.02 & $-0.27$ & 2.04 \\
 & Qwen3 & $\mathbf{-3.45}$ & $-6.41$ & $-1.64$ & 0.00 & $-0.11$ & 0.18 & 0.87 & $-0.76$ & 2.05 \\
 & Qwen3.5 & $\mathbf{-1.76}$ & $-4.54$ & $-0.18$ & 0.38 & $-0.08$ & 0.95 & 0.75 & $-0.67$ & 1.90 \\
 & T\"ulu & $-0.66$ & $-3.32$ & 0.77 & 0.44 & $-0.49$ & 1.19 & 1.68 & $-0.60$ & 2.93 \\
\addlinespace[2pt]
OLMo-2 & Llama-3.1 & $-0.66$ & $-2.58$ & 0.47 & $-0.54$ & $-2.18$ & 0.51 & 0.61 & $-3.23$ & 2.15 \\
 & OLMo-3 & $-0.03$ & $-1.29$ & 1.02 & 0.19 & $-0.57$ & 0.97 & \textbf{2.91} & 1.09 & 4.34 \\
 & Qwen2.5 & $\mathbf{-2.09}$ & $-3.74$ & $-1.21$ & 0.25 & $-0.09$ & 0.72 & 0.98 & $-0.30$ & 1.88 \\
 & Qwen3 & $-1.08$ & $-2.82$ & 0.39 & 0.00 & $-0.19$ & 0.16 & 0.87 & $-0.79$ & 2.04 \\
 & Qwen3.5 & 0.21 & $-1.08$ & 1.38 & 0.31 & $-0.09$ & 0.95 & 0.82 & $-0.65$ & 2.03 \\
 & T\"ulu & 0.14 & $-1.17$ & 1.10 & 0.25 & $-0.64$ & 1.08 & 1.86 & $-0.39$ & 3.19 \\
\addlinespace[2pt]
OLMo-3 & Llama-3.1 & $\mathbf{-3.26}$ & $-6.91$ & $-0.98$ & $-0.68$ & $-2.07$ & 0.19 & 0.75 & $-3.22$ & 2.38 \\
 & OLMo-2 & $\mathbf{-4.27}$ & $-7.76$ & $-1.99$ & $-0.36$ & $-1.12$ & 0.21 & 0.34 & $-1.08$ & 1.14 \\
 & Qwen2.5 & $\mathbf{-5.39}$ & $-8.72$ & $-3.74$ & 0.37 & $-0.06$ & 0.88 & 0.87 & $-0.46$ & 1.78 \\
 & Qwen3 & $\mathbf{-3.46}$ & $-7.28$ & $-0.80$ & $-0.02$ & $-0.33$ & 0.21 & 0.89 & $-0.83$ & 2.18 \\
 & Qwen3.5 & $\mathbf{-1.88}$ & $-4.44$ & $-0.31$ & 0.18 & $-0.30$ & 0.71 & 0.95 & $-0.49$ & 2.05 \\
 & T\"ulu & $\mathbf{-2.44}$ & $-5.24$ & $-0.92$ & $-0.07$ & $-0.65$ & 0.30 & 2.18 & $-0.39$ & 3.60 \\
\addlinespace[2pt]
Qwen2.5 & Llama-3.1 & $\mathbf{-2.12}$ & $-3.75$ & $-0.53$ & 0.07 & $-0.67$ & 0.46 & 0.00 & $-4.53$ & 1.95 \\
 & OLMo-2 & $\mathbf{-2.68}$ & $-4.73$ & $-1.24$ & 0.09 & $-0.37$ & 0.41 & $-0.11$ & $-1.77$ & 0.93 \\
 & OLMo-3 & $\mathbf{-3.29}$ & $-5.51$ & $-2.01$ & 0.38 & $-0.12$ & 0.98 & \textbf{2.72} & 0.75 & 4.31 \\
 & Qwen3 & $-1.17$ & $-3.64$ & 0.22 & 0.05 & $-0.32$ & 0.39 & 0.82 & $-0.89$ & 2.12 \\
 & Qwen3.5 & $\mathbf{-2.46}$ & $-4.82$ & $-1.28$ & 0.03 & $-0.17$ & 0.32 & 1.10 & $-0.49$ & 2.37 \\
 & T\"ulu & $\mathbf{-1.97}$ & $-3.72$ & $-0.90$ & 0.02 & $-0.29$ & 0.26 & 2.10 & $-0.64$ & 3.62 \\
\addlinespace[2pt]
Qwen3 & Llama-3.1 & $\mathbf{-2.37}$ & $-4.47$ & $-1.20$ & 0.08 & $-0.58$ & 0.41 & $-0.01$ & $-4.51$ & 2.13 \\
 & OLMo-2 & $\mathbf{-1.71}$ & $-3.62$ & $-0.76$ & 0.00 & $-0.16$ & 0.06 & $-0.02$ & $-1.77$ & 1.18 \\
 & OLMo-3 & $\mathbf{-2.38}$ & $-4.68$ & $-0.91$ & $-0.02$ & $-0.31$ & 0.30 & \textbf{3.12} & 0.96 & 4.80 \\
 & Qwen2.5 & $\mathbf{-1.50}$ & $-3.55$ & $-0.51$ & $-0.04$ & $-0.38$ & 0.18 & 1.28 & $-0.01$ & 2.27 \\
 & Qwen3.5 & $\mathbf{-1.39}$ & $-2.77$ & $-0.58$ & 0.00 & $-0.14$ & 0.19 & 1.13 & $-0.55$ & 2.37 \\
 & T\"ulu & $\mathbf{-1.42}$ & $-3.11$ & $-0.46$ & $-0.07$ & $-0.52$ & 0.18 & 2.18 & $-0.49$ & 3.69 \\
\addlinespace[2pt]
Qwen3.5 & Llama-3.1 & $\mathbf{-1.74}$ & $-4.47$ & $-0.13$ & $-0.65$ & $-2.15$ & 0.22 & 0.72 & $-3.05$ & 2.32 \\
 & OLMo-2 & $\mathbf{-2.45}$ & $-5.04$ & $-1.02$ & $-0.25$ & $-0.87$ & 0.08 & 0.23 & $-1.28$ & 1.17 \\
 & OLMo-3 & $-0.42$ & $-2.53$ & 1.03 & 0.25 & $-0.38$ & 0.97 & \textbf{2.85} & 1.09 & 4.26 \\
 & Qwen2.5 & $\mathbf{-2.14}$ & $-4.48$ & $-0.84$ & 0.04 & $-0.23$ & 0.36 & 1.19 & $-0.08$ & 2.14 \\
 & Qwen3 & $-1.10$ & $-3.71$ & 1.03 & 0.02 & $-0.38$ & 0.48 & 0.84 & $-0.60$ & 1.87 \\
 & T\"ulu & $-1.08$ & $-3.09$ & 0.22 & $-0.03$ & $-0.59$ & 0.46 & 2.14 & $-0.46$ & 3.52 \\
\addlinespace[2pt]
T\"ulu & Llama-3.1 & $-1.59$ & $-4.99$ & 0.10 & $-0.41$ & $-2.45$ & 0.55 & 0.48 & $-2.90$ & 2.14 \\
 & OLMo-2 & $\mathbf{-2.81}$ & $-5.41$ & $-1.38$ & $-0.20$ & $-0.83$ & 0.16 & 0.18 & $-1.29$ & 1.10 \\
 & OLMo-3 & $\mathbf{-1.70}$ & $-3.72$ & $-0.13$ & 0.12 & $-0.35$ & 0.67 & \textbf{2.98} & 1.06 & 4.50 \\
 & Qwen2.5 & $\mathbf{-3.91}$ & $-6.46$ & $-2.57$ & 0.09 & $-0.14$ & 0.44 & 1.14 & $-0.17$ & 2.06 \\
 & Qwen3 & $-1.46$ & $-3.90$ & 0.35 & 0.08 & $-0.14$ & 0.47 & 0.79 & $-0.82$ & 1.82 \\
 & Qwen3.5 & $-0.87$ & $-3.10$ & 0.51 & 0.30 & $-0.06$ & 0.76 & 0.83 & $-0.69$ & 2.10 \\
\bottomrule
\end{tabular*}
\end{table}

\begin{table}[htbp]\centering\small\renewcommand{\arraystretch}{1.12}
\caption{\textbf{Separating level and variation removes significant reuse losses but adds little beyond shared scaling.} Counts over 42 directions with corrected intervals; medians in $Q^2$ points; Qwen2.5 in bfloat16 (float32: Appendix~\ref{app:controls}).}
\label{tab:two-param}
\begin{tabular}{@{}lrrrr@{}}
\toprule
Comparison & Above zero & Below zero & Positive point & Median \\
\midrule
One amplitude $-$ shared & 0 & 29 & 2 & $-1.92$ \\
Two parameters $-$ shared & 1 & 0 & 28 & 0.03 \\
Additive source, two parameters $-$ shared & 1 & 0 & 24 & 0.05 \\
Two parameters $-$ one amplitude & 33 & 0 & 42 & 2.10 \\
Refit $-$ two parameters & 6 & 0 & 39 & 0.88 \\
\bottomrule
\end{tabular}
\end{table}

\subsection{Where target structure is established}
\label{app:target-reference}
The bottom row of Figure~\ref{fig:stages} is each target's own refitting advantage, a target-data reference rather than a ceiling. OLMo-3 and Qwen2.5 are the two targets whose advantage survives fitting uncertainty and correction across seven pairs. For them, all 12 incoming transfers fall below shared scaling by point estimate, ten significantly, and refitting significantly beats all 12; with two parameters the incoming patterns gain only 0.19 and 0.15 points on average, against own refits of 3.10 and 1.23, and refitting still beats them significantly in all six directions into OLMo-3 (Table~\ref{tab:target-reference}). Lack of target signal therefore cannot explain these failures. This is a retrospective breakdown of the same data, keeping the original 42-direction correction.

\begin{table}[htbp]\centering\small
\caption{\textbf{Target gains beat borrowed patterns even where target structure is supported.} Means over incoming directions in $Q^2$ points, Qwen2.5 in bfloat16 (its own refit is 1.28 in float32; Table~\ref{tab:prediction}); counts of significant directions under the 42-direction correction. Float32 checks appear in Appendix~\ref{app:controls}.}
\label{tab:target-reference}
\begin{tabular}{@{}lrrrrrrr@{}}
\toprule
Target group & Directions & One amp. & Two par. & Refit & Reuse below & Refit above & Refit above\\
 & & $-$ shared & $-$ shared & $-$ reuse & shared & reuse & two par.\\
\midrule
OLMo-3 & 6 & $-1.78$ & 0.19 & 4.88 & 4/6 & 6/6 & 6/6 \\
Qwen2.5 & 6 & $-3.52$ & 0.15 & 4.75 & 6/6 & 6/6 & 0/6 \\
Other targets & 30 & $-1.91$ & $-0.04$ & 2.74 & 19/30 & 19/30 & 0/30 \\
\bottomrule
\end{tabular}
\end{table}

\subsection{Is the loss an artifact of estimation noise or attribute labels?}
\label{app:splithalf-noise}
\label{app:permutation-null}
\paragraph{Attribute correspondence.} Permuting the 57 rows of each source gain matrix (tasks moved together, 10,000 permutations, own amplitude per permutation) asks whether keeping gains on the correct attributes matters. Real patterns beat the permutation mean in 34 of 42 directions and significantly in seven under correction across directions; the frozen majority rule (22 directions) is not met. Six of the seven significant directions still predict worse than shared scaling, and three of them cross model families. Preserving attribute correspondence can help without making reuse outperform shared scaling. The per-direction values are in the supplementary material.

\paragraph{Estimation noise.} Fitting target cell gains on a random task-stratified half of the fitting questions and their amplitude on the other half, as in adapted reuse (2,000 splits), costs only 0.15 to 0.32 points relative to the full refit, and in the five targets with a positive own advantage the half-sample gains still beat shared scaling by 0.57 to 2.82 points, whereas incoming patterns average $-1.24$ to $-3.52$ (Table~\ref{tab:splithalf-noise}). A second frozen analysis asks how much of the gap a source's own estimation noise could produce. For source $s$, the half-sample difference $(\widehat g^{A}_s-\widehat g^{B}_s)/2$ from a task-stratified split approximates the error of its full estimate. We add it, relative to the source norm, to the normalized target pattern,
\begin{equation}
\tilde g_{s\to t}=\frac{\widehat g_t}{\|\widehat g_t\|}+\frac{\widehat g^{A}_s-\widehat g^{B}_s}{2\,\|\widehat g_s\|},
\end{equation}
refit its amplitude on the target, and score it on the new questions (2,000 splits). The noise cost is the drop from target refitting, not floored at zero; the ratio's denominator is the observed gap, refitting minus adapted reuse. Over the 29 significant losses, the noise cost is 0.15 to 0.63 points against gaps of 1.68 to 7.32, a median of 8\% and at most 17\% of the gap, and every observed reuse value lies below the 2.5\% quantile of its noise-injected distribution (a reference on fixed data, not a corrected test). This is a sensitivity analysis under its injection model, not a decomposition of the transfer loss or an upper bound on the true noise contribution: it assumes the two halves are equally precise and their errors additive and independent, and it cannot capture bias shared by both halves or question-specific noise shared across models. Split-half reliabilities of the cell gains, projected to 181 questions, are 0.80 to 0.92, whereas centered source--target correlations have median 0.24. Among the six Base$\to$SFT directions between OLMo-2, OLMo-3, and T\"ulu, which share a stage label, one amplitude has four significant losses and two parameters no significant gain.
\label{app:matched-stage-reuse}

\begin{table}[htbp]\centering\small\renewcommand{\arraystretch}{1.12}\setlength{\tabcolsep}{4.5pt}
\caption{\textbf{Under the tested noise injection model, estimation noise costs far less than the observed transfer gap.} Qwen2.5 in bfloat16 (own refit 1.28 in float32; Table~\ref{tab:prediction}). $Q^2$ points except reliability. Own refit and Half-sample: target gains from all 181 or half of the questions versus shared scaling. Reuse: mean of six incoming patterns under adapted reuse. Source noise: mean cost of injecting each incoming source's noise into the target gains. Observed gap: mean refitting minus adapted reuse. Reliability: projected split-half correlation of the cell gains. Float32 checks appear in Appendix~\ref{app:controls}.}
\label{tab:splithalf-noise}
\begin{tabular}{@{}lrrrrrrr@{}}
\toprule
& \multicolumn{3}{c}{Target data} & \multicolumn{3}{c}{Incoming patterns} & \\
\cmidrule(lr){2-4}\cmidrule(lr){5-7}
Target & Own refit & Half-sample & Halving cost & Reuse & Source noise & Observed gap & Reliability \\
\midrule
Llama-3.1 & 0.07 & $-0.24$ & 0.32 & $-1.96$ & 0.18 & 2.03 & 0.91 \\
OLMo-2 & $-0.02$ & $-0.17$ & 0.15 & $-3.03$ & 0.31 & 3.01 & 0.86 \\
OLMo-3 & 3.10 & 2.82 & 0.27 & $-1.78$ & 0.25 & 4.88 & 0.90 \\
Qwen2.5 & 1.23 & 0.95 & 0.28 & $-3.52$ & 0.38 & 4.75 & 0.80 \\
Qwen3 & 0.87 & 0.57 & 0.30 & $-1.96$ & 0.31 & 2.82 & 0.87 \\
Qwen3.5 & 1.13 & 0.92 & 0.22 & $-1.36$ & 0.30 & 2.49 & 0.87 \\
T\"ulu & 2.11 & 1.85 & 0.27 & $-1.24$ & 0.22 & 3.35 & 0.92 \\
\bottomrule
\end{tabular}
\end{table}

\subsection{Transferring persona effects directly, without a Base checkpoint}
\label{app:direct-transfer}
Cross-model gain transfer needs each target's Base checkpoint. A practitioner without one may still measure the target directly; here we ask what borrowing another model's measured effects gives, with the same 181 target fitting questions. We therefore predict each target's post-trained contrasts from a source's post-trained contrasts on the same questions, with one target-fitted scale ($\widehat y_t=c\,y_s$) or one scale per attribute and task cell ($\widehat y_t=c_{ab}\,y_s$), fitted on the 181 fitting questions and scored on the 45 new ones (frozen protocol; 20,000 whole-question draws refitting every predictor; Bonferroni across 42 directions). A source's effects, rescaled once, explain a median 0.47 of a target's new-question contrast energy (range 0.38 to 0.70), well below the 0.62 to 0.79 explained by shared scaling of the target's own Base effects; the direct predictor is significantly worse in 40 of 42 directions (median $-25.1$ points). The one exception is T\"ulu into Llama-3.1, two post-trainings of the same base, where the direct predictor explains 0.70 and beats the target's Base-anchored scaling by 6.1 points; the reverse direction does not. Letting the target refit one scale per cell adds significantly in 18 directions (Table~\ref{tab:direct-transfer}). These direct predictors are usually weaker than the Base-anchored baseline, with one directional exception between post-trainings of a shared base; they do not remove the need for target measurements.

\begin{table}[htbp]\centering\footnotesize\setlength{\tabcolsep}{4pt}\renewcommand{\arraystretch}{1.06}
\caption{\textbf{Other models' persona effects predict a target's effects only moderately.} $Q^2$ on 45 new questions. Base shared: the target's own Base effects with one scale (needs the Base). Direct: the six other models' post-trained effects with one target-fitted scale (median and range) or one scale per cell (median). Counts are directions with Bonferroni intervals (42 directions) excluding zero. Qwen2.5 uses bfloat16 here; float32 checks appear in Appendix~\ref{app:controls}.}
\label{tab:direct-transfer}
\begin{tabular*}{\linewidth}{@{\extracolsep{\fill}}lrrrrrr@{}}
\toprule
& Base shared & \multicolumn{2}{c}{Direct, one scale} & Direct, cells & Cells $>$ scale & Direct $<$ Base \\
\cmidrule(lr){3-4}
Target & $Q^2$ & Median & Range & Median & & \\
\midrule
Llama-3.1 & 0.637 & 0.474 & 0.41 to 0.70 & 0.536 & 3/6 & 5/6 \\
OLMo-2 & 0.787 & 0.461 & 0.42 to 0.49 & 0.498 & 3/6 & 6/6 \\
OLMo-3 & 0.624 & 0.411 & 0.38 to 0.45 & 0.455 & 5/6 & 6/6 \\
Qwen2.5 & 0.766 & 0.487 & 0.39 to 0.50 & 0.503 & 1/6 & 6/6 \\
Qwen3 & 0.728 & 0.477 & 0.41 to 0.50 & 0.496 & 2/6 & 6/6 \\
Qwen3.5 & 0.784 & 0.478 & 0.38 to 0.54 & 0.512 & 0/6 & 6/6 \\
T\"ulu & 0.715 & 0.491 & 0.45 to 0.70 & 0.532 & 4/6 & 5/6 \\
\bottomrule
\end{tabular*}
\end{table}

\section{Reuse across prompts}
\label{app:extensions}
The main template-transfer analyses use the 226-question pool, the 57 attributes, and the A/B readout of the main text, with the source pattern fitted jointly on two source templates with a format amplitude each; targets change the wording (including the answering perspective), the examples, or both. Exceptions are stated where they occur: the single-template flexible-reuse check, the deployed-model variant, and the 16-attribute country study.

\paragraph{Wording versus examples.} Table~\ref{tab:format-intervals} gives refitting minus adapted reuse and refitting minus shared scaling for each pair and change. Each group of pairs forms its own correction family: OLMo-3, T\"ulu, and a single-format Qwen2.5 source (six comparisons); Qwen2.5 with the second source format (two, fixed before that inference); Qwen3 and Qwen3.5 (eight); Llama (four, with the Base outputs shared with T\"ulu). Refitting beats adapted reuse after rewording in all six pairs. Changing only the examples leaves the source useful in Qwen3.5 (adapted reuse minus shared scaling 2.08 points, corrected interval 0.94 to 3.14) and T\"ulu (1.48, 0.55 to 2.24). Qwen2.5's wording refit beats adapted reuse without a significant gain over shared scaling; its Instruct readout places little mass on the answer tokens (minimum A+B mass 0.002 on the first-attribute check), so its prompt results are the least certain.

\begin{table}[htbp]
\centering\footnotesize\setlength{\tabcolsep}{3pt}\renewcommand{\arraystretch}{1.1}
\caption{\textbf{Rewording favors refitting in all six pairs; changing examples still permits reuse.} Values in $Q^2$ points. Bounds are corrected within each study, as described in the text; bold estimates have intervals that exclude zero. The last two rows learn the Qwen2.5 source from one format; the main rows use two.}
\label{tab:format-intervals}
\label{tab:sequential-formats}
\begin{tabular*}{\linewidth}{@{\extracolsep{\fill}}llrrrrrr@{}}
\toprule
& & \multicolumn{3}{c}{Refit $-$ adapted reuse} & \multicolumn{3}{c}{Refit $-$ shared} \\
\cmidrule(lr){3-5}\cmidrule(l){6-8}
Model & Change & Est. & Lower & Upper & Est. & Lower & Upper \\
\midrule
Llama-3.1 & Wording & \textbf{3.63} & 1.89 & 5.72 & \textbf{1.47} & 0.39 & 2.18 \\
 & Examples & 0.84 & $-0.33$ & 1.76 & 1.62 & $-0.59$ & 3.04 \\
\addlinespace[2pt]
OLMo-3 & Wording & \textbf{3.91} & 2.40 & 5.45 & \textbf{2.38} & 1.26 & 3.00 \\
 & Examples & \textbf{1.38} & 0.22 & 2.21 & \textbf{2.80} & 0.06 & 4.94 \\
\addlinespace[2pt]
Qwen2.5 & Wording & \textbf{1.70} & 0.08 & 3.13 & 1.29 & $-0.33$ & 2.32 \\
 & Examples & 0.70 & $-0.39$ & 1.57 & \textbf{1.66} & 0.40 & 2.28 \\
\addlinespace[2pt]
Qwen3 & Wording & \textbf{0.81} & 0.10 & 1.39 & \textbf{1.73} & 0.54 & 2.64 \\
 & Examples & 0.55 & $-0.15$ & 1.20 & \textbf{1.17} & 0.28 & 1.68 \\
\addlinespace[2pt]
Qwen3.5 & Wording & \textbf{2.57} & 0.65 & 4.76 & 1.15 & $-0.18$ & 2.15 \\
 & Examples & 0.94 & $-0.16$ & 1.87 & \textbf{3.02} & 1.04 & 4.50 \\
\addlinespace[2pt]
T\"ulu & Wording & \textbf{3.32} & 1.85 & 5.05 & \textbf{2.12} & 1.19 & 2.75 \\
 & Examples & 0.87 & $-0.04$ & 1.69 & \textbf{2.35} & 1.07 & 3.39 \\
\midrule
\multicolumn{8}{@{}l}{\textit{Single-format source (secondary)}} \\
Qwen2.5 & Wording & \textbf{2.33} & 0.29 & 4.32 & 1.29 & $-0.65$ & 2.54 \\
 & Examples & 0.85 & $-0.45$ & 2.08 & \textbf{1.66} & 0.13 & 2.43 \\
\bottomrule
\end{tabular*}
\end{table}

\paragraph{Joint wording and example change.} Table~\ref{tab:alltarget} completes the three comparisons for the joint-change target. Refitting beats adapted reuse in all six pairs by point estimate and significantly in five; it also beats shared scaling significantly in OLMo-3, T\"ulu, Llama-3.1, and Qwen3. In Qwen3.5, refitting beats adapted reuse while structure beyond shared scaling remains unresolved: removing a harmful source pattern can help without establishing useful target gains. With fitted coefficients held fixed, adapted reuse lies below shared scaling for OLMo-3 and T\"ulu ($-1.34$ and $-1.39$ points, both significant); propagating fitting uncertainty widens the OLMo-3 interval to include zero.

\begin{table}[htbp]\centering\small\setlength{\tabcolsep}{4pt}\renewcommand{\arraystretch}{1.06}
\caption{\textbf{Joint wording and example change: all three comparisons.} Paired differences in $Q^2$ points on identical target questions, with intervals including fitting uncertainty. Corrections: 24 comparisons for OLMo-3 and T\"ulu (shared with their profile comparisons), four per contrast for Llama-3.1 and Qwen2.5, four settings for Qwen3 and Qwen3.5. Bold estimates have intervals that exclude zero; n.r.\ means not reported in that study.}
\label{tab:alltarget}
\label{tab:qwen-template-scope}
\begin{tabular*}{\linewidth}{@{\extracolsep{\fill}}lrrrrrrrrr@{}}
\toprule
& \multicolumn{3}{c}{Refit $-$ shared} & \multicolumn{3}{c}{Adapted $-$ shared} & \multicolumn{3}{c}{Refit $-$ adapted} \\
\cmidrule(lr){2-4}\cmidrule(lr){5-7}\cmidrule(l){8-10}
Pair & Est. & Lower & Upper & Est. & Lower & Upper & Est. & Lower & Upper \\
\midrule
Llama-3.1 & \textbf{1.83} & 0.84 & 2.49 & $\mathbf{-1.99}$ & $-4.95$ & $-0.19$ & \textbf{3.82} & 2.01 & 6.40 \\
OLMo-3 & \textbf{2.67} & 1.08 & 3.64 & $-1.34$ & $-3.89$ & 0.19 & \textbf{4.01} & 2.18 & 6.23 \\
Qwen2.5 & 1.34 & $-0.37$ & 2.17 & $-0.17$ & $-1.62$ & 0.68 & 1.51 & $-0.30$ & 2.99 \\
Qwen3 & \textbf{1.43} & 0.55 & 2.06 & n.r. & & & \textbf{0.97} & 0.32 & 1.63 \\
Qwen3.5 & 0.49 & $-0.60$ & 1.17 & $\mathbf{-1.50}$ & $-3.14$ & $-0.48$ & \textbf{1.98} & 0.92 & 3.20 \\
T\"ulu & \textbf{1.92} & 0.87 & 2.89 & $\mathbf{-1.39}$ & $-3.52$ & $-0.17$ & \textbf{3.30} & 1.60 & 5.70 \\
\bottomrule
\end{tabular*}
\end{table}

\subsection{Flexible reuse under prompt changes}
\label{app:flex-prompt}
To test whether the refitting advantage survives the flexible baseline, we fit source cell gains on the original template of each pair and apply them to the wording-only, example-only, and joint targets with two target parameters (frozen protocol; 20,000 draws refitting source and target in each; correction across 14 cells). This uses one source template for every pair, whereas the main prompt analysis uses two. After rewording, flexible reuse stays within 0.6 points of shared scaling in all six pairs, and refitting beats it by point estimate in all six and significantly in OLMo-3, T\"ulu, and Llama-3.1; after the joint change it does so significantly in both pairs (Table~\ref{tab:flex-prompt}). After example-only changes, flexible reuse improves on shared scaling significantly in T\"ulu and Qwen3.5.

\begin{table}[htbp]\centering\footnotesize\setlength{\tabcolsep}{4pt}\renewcommand{\arraystretch}{1.06}
\caption{\textbf{Rewording still favors refitting over flexible reuse.} Differences in $Q^2$ points with intervals corrected across 14 cells; bold estimates have intervals that exclude zero.}
\label{tab:flex-prompt}
\begin{tabular*}{\linewidth}{@{\extracolsep{\fill}}llrrrrrr@{}}
\toprule
& & \multicolumn{3}{c}{Flexible reuse $-$ shared} & \multicolumn{3}{c}{Refit $-$ flexible reuse} \\
\cmidrule(lr){3-5}\cmidrule(l){6-8}
Model & Change & Est. & Lower & Upper & Est. & Lower & Upper \\
\midrule
Llama-3.1 & Wording & $-0.03$ & $-0.91$ & 0.51 & \textbf{1.50} & 0.50 & 2.31 \\
 & Examples & 0.85 & $-0.86$ & 1.95 & 0.77 & $-0.66$ & 1.81 \\
\addlinespace[2pt]
OLMo-3 & Wording & 0.03 & $-0.66$ & 0.51 & \textbf{2.35} & 1.21 & 3.13 \\
 & Examples & 1.76 & $-0.49$ & 3.83 & 1.04 & $-0.18$ & 2.00 \\
 & Both & 0.01 & $-0.87$ & 0.57 & \textbf{2.66} & 1.43 & 3.64 \\
\addlinespace[2pt]
Qwen2.5 & Wording & 0.11 & $-0.43$ & 0.53 & 1.18 & $-0.99$ & 2.67 \\
 & Examples & 0.95 & $-0.11$ & 1.62 & 0.72 & $-0.76$ & 1.80 \\
\addlinespace[2pt]
Qwen3 & Wording & 0.58 & $-0.12$ & 1.32 & 1.15 & $-0.15$ & 2.09 \\
 & Examples & 0.46 & $-0.38$ & 0.98 & 0.72 & $-0.04$ & 1.34 \\
\addlinespace[2pt]
Qwen3.5 & Wording & $-0.09$ & $-0.70$ & 0.16 & 1.23 & $-0.06$ & 2.35 \\
 & Examples & \textbf{1.79} & 0.41 & 3.02 & \textbf{1.23} & 0.22 & 2.48 \\
\addlinespace[2pt]
T\"ulu & Wording & 0.04 & $-0.31$ & 0.34 & \textbf{2.08} & 0.95 & 2.91 \\
 & Examples & \textbf{1.73} & 0.79 & 2.59 & 0.63 & $-0.35$ & 1.38 \\
 & Both & $-0.03$ & $-0.36$ & 0.22 & \textbf{1.95} & 0.92 & 2.99 \\
\bottomrule
\end{tabular*}
\end{table}

\paragraph{With the two-template source.} The analysis above uses one source template, whereas the main prompt analysis fits the source jointly on two. Repeating it with that two-template common pattern held fixed (frozen addendum; 20,000 draws refitting the target predictors only, so these intervals condition on the fitted source and are narrower in kind than those above; correction across 14 cells; full-data anchors reproduce Table~\ref{tab:format-intervals}) gives Table~\ref{tab:flex-prompt-aligned}. After rewording, all six point estimates favor refitting over flexible reuse, significantly in the same three pairs (OLMo-3, T\"ulu, Llama-3.1) and after the joint change in both; but the prespecified rule was not fully met, because in Qwen3 the two-template source retains significant value after rewording (1.00 points over shared scaling), and after example-only changes flexible reuse now beats shared scaling in four pairs rather than two. The richer source carries more reusable information when only the examples change; after rewording, refitting is favored in every pair and significantly in three.

\begin{table}[htbp]\centering\footnotesize\setlength{\tabcolsep}{2.8pt}\renewcommand{\arraystretch}{1.06}
\caption{\textbf{Prompt changes with the two-template source.} Differences in $Q^2$ points with intervals corrected across 14 cells; bold estimates have intervals that exclude zero. Intervals condition on the fitted source pattern and resample only target questions, unlike Table~\ref{tab:flex-prompt}.}
\label{tab:flex-prompt-aligned}
\begin{tabular*}{\linewidth}{@{\extracolsep{\fill}}llrrrrrrrrr@{}}
\toprule
& & \multicolumn{3}{c}{One amplitude $-$ shared} & \multicolumn{3}{c}{Two parameters $-$ shared} & \multicolumn{3}{c}{Refit $-$ two parameters} \\
\cmidrule(lr){3-5}\cmidrule(lr){6-8}\cmidrule(l){9-11}
Model & Change & Est. & Lower & Upper & Est. & Lower & Upper & Est. & Lower & Upper \\
\midrule
Llama-3.1 & Wording & $\mathbf{-2.16}$ & $-4.51$ & $-0.25$ & $-0.06$ & $-1.03$ & 0.61 & \textbf{1.53} & 0.36 & 2.44 \\
 & Examples & 0.78 & $-0.71$ & 2.02 & 1.06 & $-0.18$ & 2.07 & 0.56 & $-1.37$ & 1.64 \\
OLMo-3 & Wording & $\mathbf{-1.53}$ & $-2.70$ & $-0.23$ & 0.14 & $-0.43$ & 0.66 & \textbf{2.24} & 0.99 & 3.01 \\
 & Examples & 1.42 & $-0.59$ & 3.81 & 1.55 & $-0.23$ & 3.56 & 1.25 & $-0.50$ & 2.20 \\
 & Both & $-1.34$ & $-2.52$ & 0.03 & 0.17 & $-0.52$ & 0.77 & \textbf{2.50} & 1.13 & 3.56 \\
Qwen2.5 & Wording & $-0.41$ & $-1.35$ & 0.53 & 0.32 & $-0.20$ & 0.80 & 0.97 & $-1.29$ & 2.30 \\
 & Examples & \textbf{0.96} & 0.14 & 1.67 & \textbf{1.04} & 0.32 & 1.66 & 0.62 & $-0.99$ & 1.60 \\
Qwen3 & Wording & \textbf{0.93} & 0.09 & 1.87 & \textbf{1.00} & 0.28 & 1.83 & 0.73 & $-0.32$ & 1.20 \\
 & Examples & \textbf{0.62} & 0.22 & 1.04 & \textbf{0.66} & 0.32 & 1.01 & 0.52 & $-0.47$ & 1.12 \\
Qwen3.5 & Wording & $\mathbf{-1.42}$ & $-2.81$ & $-0.01$ & $-0.05$ & $-0.58$ & 0.22 & 1.20 & $-0.13$ & 2.32 \\
 & Examples & \textbf{2.08} & 1.08 & 3.06 & \textbf{2.18} & 0.76 & 3.51 & 0.84 & $-0.12$ & 1.38 \\
T\"ulu & Wording & $\mathbf{-1.20}$ & $-2.22$ & $-0.28$ & 0.09 & $-0.21$ & 0.31 & \textbf{2.03} & 0.90 & 2.92 \\
 & Examples & \textbf{1.48} & 0.74 & 2.19 & \textbf{1.51} & 0.87 & 2.13 & 0.85 & $-0.35$ & 1.59 \\
 & Both & $\mathbf{-1.39}$ & $-2.63$ & $-0.41$ & 0.03 & $-0.37$ & 0.30 & \textbf{1.88} & 0.80 & 3.02 \\
\bottomrule
\end{tabular*}
\end{table}

\subsection{A deployed model without a Base reference}
\label{app:base-free-reuse}
Practitioners often have one deployed model and no Base checkpoint. For six post-trained models, we take the persona contrasts under the original template as $x$ and the contrasts under a changed template as $y$. Reusing the measured effects unchanged ($\widehat y=x$) is compared with shared scaling and with refitted attribute and task gains, all fitted on 181 questions and evaluated on 45 new ones (frozen protocol; 20,000 draws; correction across 14 cells). Refitting beats unchanged reuse after rewording in all six models and beats shared scaling in four, with positive point estimates in all six; after the joint change it beats shared scaling in both models (Table~\ref{tab:base-free}). Changing only the examples matters less. For a single deployed model, new wording therefore calls for measuring attribute relationships again, not merely rescaling the old ones.

\begin{table}[htbp]\centering\footnotesize\setlength{\tabcolsep}{3pt}\renewcommand{\arraystretch}{1.1}
\caption{\textbf{Within one deployed model, rewording calls for relearning attribute relationships.} $Q^2$ on new questions for unchanged reuse, shared scaling, and refitted gains, and paired differences in points with intervals corrected across 14 cells; bold estimates have intervals above zero.}
\label{tab:base-free}
\begin{tabular*}{\linewidth}{@{\extracolsep{\fill}}llrrrrrrrrr@{}}
\toprule
& & \multicolumn{3}{c}{$Q^2$} & \multicolumn{3}{c}{Shared $-$ unchanged} & \multicolumn{3}{c}{Refit $-$ shared} \\
\cmidrule(lr){3-5}\cmidrule(lr){6-8}\cmidrule(l){9-11}
Model & Change & Unch. & Shared & Refit & Est. & Lower & Upper & Est. & Lower & Upper \\
\midrule
Llama-3.1 & Wording & 0.685 & 0.715 & 0.749 & \textbf{3.04} & 1.57 & 4.48 & \textbf{3.41} & 2.00 & 4.55 \\
 & Examples & 0.750 & 0.771 & 0.787 & \textbf{2.10} & 0.88 & 3.29 & \textbf{1.52} & 0.46 & 2.41 \\
\addlinespace[2pt]
OLMo-3 & Wording & 0.703 & 0.703 & 0.750 & 0.06 & $-0.26$ & 0.36 & \textbf{4.63} & 3.07 & 5.94 \\
 & Examples & 0.702 & 0.782 & 0.803 & \textbf{7.95} & 2.99 & 15.04 & 2.14 & $-0.08$ & 3.84 \\
 & Both & 0.615 & 0.615 & 0.667 & $-0.02$ & $-0.57$ & 0.33 & \textbf{5.22} & 3.06 & 6.95 \\
\addlinespace[2pt]
Qwen2.5 & Wording & 0.730 & 0.792 & 0.814 & \textbf{6.23} & 2.92 & 10.41 & \textbf{2.16} & 0.81 & 3.02 \\
 & Examples & 0.826 & 0.832 & 0.842 & 0.64 & $-0.53$ & 1.88 & \textbf{0.97} & 0.17 & 1.55 \\
\addlinespace[2pt]
Qwen3 & Wording & 0.279 & 0.810 & 0.826 & \textbf{53.13} & 36.14 & 73.04 & 1.54 & $-0.08$ & 2.47 \\
 & Examples & 0.796 & 0.809 & 0.828 & 1.28 & $-0.33$ & 3.08 & \textbf{1.89} & 0.14 & 3.54 \\
\addlinespace[2pt]
Qwen3.5 & Wording & 0.751 & 0.762 & 0.779 & 1.11 & $-0.73$ & 3.71 & 1.64 & $-0.08$ & 3.16 \\
 & Examples & 0.774 & 0.778 & 0.792 & 0.45 & $-0.96$ & 1.99 & 1.35 & $-0.09$ & 2.31 \\
\addlinespace[2pt]
T\"ulu & Wording & 0.806 & 0.816 & 0.839 & 1.03 & $-0.47$ & 2.28 & \textbf{2.27} & 1.20 & 3.09 \\
 & Examples & 0.800 & 0.806 & 0.813 & 0.65 & $-0.63$ & 2.03 & 0.71 & $-0.09$ & 1.05 \\
 & Both & 0.728 & 0.732 & 0.756 & 0.37 & $-0.45$ & 1.05 & \textbf{2.41} & 1.10 & 3.37 \\
\bottomrule
\end{tabular*}
\end{table}

\subsection{Similar gains, different predictions}
\label{app:similarity}
Table~\ref{tab:centered-scope} compares each source template's gains with the joint-change target gains, using energy weights $E_c=(E_{s,c}+E_{t,c})/2$; the centered correlation subtracts each vector's weighted mean. Uncentered correlations of 0.97 to 0.99 fall to 0.15 to 0.63 once the shared mean is removed. The 95\% intervals come from 2,000 paired bootstrap refits of the training questions and are descriptive. The evidence that fixed-pattern reuse fails comes from the held-out comparisons; correlation neither establishes that failure nor replaces evaluation on the target.

\begin{table}[htbp]\centering\small
\caption{\textbf{Removing the mean reveals differences between apparently similar gains.} Both correlations use the same energy weights. Source 1 and 2 are the two source templates. These correlations do not define a cutoff for successful reuse.}
\label{tab:centered-scope}
\begin{tabular}{@{}llrrrr@{}}
\toprule
& & & & \multicolumn{2}{c}{Centered 95\% interval} \\
\cmidrule(l){5-6}
Pair & Source & Uncentered & Centered & Lower & Upper \\
\midrule
Llama-3.1 & Source 1 & 0.975 & 0.345 & 0.254 & 0.427 \\
 & Source 2 & 0.977 & 0.553 & 0.481 & 0.618 \\
OLMo-3 & Source 1 & 0.975 & 0.458 & 0.369 & 0.536 \\
 & Source 2 & 0.972 & 0.458 & 0.357 & 0.545 \\
Qwen2.5 & Source 1 & 0.985 & 0.406 & 0.286 & 0.516 \\
 & Source 2 & 0.987 & 0.606 & 0.523 & 0.668 \\
Qwen3 & Source 1 & 0.991 & 0.502 & 0.406 & 0.575 \\
 & Source 2 & 0.989 & 0.629 & 0.530 & 0.695 \\
Qwen3.5 & Source 1 & 0.987 & 0.198 & 0.085 & 0.327 \\
 & Source 2 & 0.989 & 0.269 & 0.176 & 0.395 \\
T\"ulu & Source 1 & 0.977 & 0.151 & 0.078 & 0.267 \\
 & Source 2 & 0.981 & 0.378 & 0.280 & 0.488 \\
\bottomrule
\end{tabular}
\end{table}

\subsection{Where reuse holds, and simpler updates that do not help}
\paragraph{Country contexts.} A separate inventory of 16 psychological, value, and emotion attributes is measured under five English-language country contexts. Adapted reuse keeps a source-country pattern and fits its amplitude on target training questions. It beats target shared scaling in all five tested models (Table~\ref{tab:country-reuse-scope}); only OLMo-3's lower bound clears the study's one-point margin. The contexts are synthetic, not a five-country human validation.

\begin{table}[htbp]\centering\small
\caption{\textbf{Source gains remain useful across five country contexts.} Improvements over target shared scaling in $Q^2$ points. Intervals for adapted reuse condition on fitted gains and are corrected within each study (98.3 to 99.2\%).}
\label{tab:country-reuse-scope}
\begin{tabular}{@{}lrrrr@{}}
\toprule
Pair & Refit & Adapted reuse & Lower & Upper \\
\midrule
Mistral & 1.33 & 1.27 & 0.78 & 1.79 \\
OLMo-3 & 2.65 & 2.55 & 1.14 & 4.06 \\
Qwen3 & 1.41 & 1.34 & 0.85 & 1.88 \\
Qwen3.5 & 0.83 & 0.71 & 0.07 & 1.40 \\
T\"ulu & 1.88 & 1.66 & 0.58 & 2.48 \\
\bottomrule
\end{tabular}
\end{table}

\paragraph{Answering role and unmeasured attributes.} Two preregistered checks ask whether a simpler update closes the transfer gap (Table~\ref{tab:simpler-updates}). Matching the answering role (answering as the persona versus predicting the persona's answer, six templates held out in turn) gives no advantage. Measuring some attributes to update the others, through one amplitude per category or two leading principal components of the source templates' gain patterns, does not beat a single amplitude on the held-out attributes; a secondary four-component variant helps T\"ulu. Full refitting, which also uses target data for the evaluated attributes, gains more.
\label{tab:role-match}
\label{tab:heldout-attributes}

\begin{table}[htbp]\centering\small\renewcommand{\arraystretch}{1.05}
\caption{\textbf{Simpler updates do not close the gap.} Differences in $Q^2$ points, means over six held-out templates, with intervals corrected across two families; bold estimates have intervals that exclude zero. The full-refit rows are references with more information.}
\label{tab:simpler-updates}
\begin{tabular}{@{}llrrr@{}}
\toprule
Model & Contrast & Estimate & Lower & Upper \\
\midrule
OLMo-3 & Same role $-$ different role & $\mathbf{-0.39}$ & $-0.69$ & $-0.11$ \\
 & Category $-$ amplitude & $\mathbf{-0.18}$ & $-0.36$ & $-0.07$ \\
 & Two components $-$ amplitude & 0.08 & $-0.09$ & 0.21 \\
 & Four components $-$ amplitude & 0.08 & $-0.12$ & 0.23 \\
 & Amplitude $-$ no adaptation & $\mathbf{1.07}$ & $0.48$ & $1.70$ \\
 & Full refit $-$ amplitude & $\mathbf{1.84}$ & $1.37$ & $2.10$ \\
\addlinespace[2pt]
T\"ulu & Same role $-$ different role & $-0.02$ & $-0.21$ & 0.14 \\
 & Category $-$ amplitude & $-0.14$ & $-0.37$ & 0.0022 \\
 & Two components $-$ amplitude & 0.19 & $-0.12$ & 0.38 \\
 & Four components $-$ amplitude & $\mathbf{0.52}$ & $0.22$ & $0.78$ \\
 & Amplitude $-$ no adaptation & $\mathbf{2.94}$ & $1.98$ & $4.37$ \\
 & Full refit $-$ amplitude & $\mathbf{1.61}$ & $1.25$ & $1.90$ \\
\bottomrule
\end{tabular}
\end{table}

\section{Updating under a measurement budget}
\label{app:shrinkage}
\paragraph{Regularized updates.} A fixed source pattern with one amplitude is a restrictive baseline, so we also fit target gains with a penalty toward an amplitude-adjusted source pattern,
\begin{equation}
\min_{g,\beta}\;\sum_{q,c}w_{qc}(y_{qc}-g_c x_{qc})^2+\lambda\sum_c(g_c-\beta g^{\rm src}_c)^2,
\end{equation}
where $c$ indexes attribute and task cells and $\beta$ is unpenalized. At $\lambda=0$ this is refitting; as $\lambda\to\infty$ it becomes adapted reuse. A matched control replaces the source pattern by a constant, $g^{\rm src}_c=1$, which shrinks cell gains toward one common gain and separates the value of the source pattern from the value of regularization. The penalty grid is $\nu\in\{0,0.001,0.01,0.1,1,10,100,1000,\infty\}$ times the mean target cell energy, selected by five task-stratified inner folds on target training questions only, with ties resolved toward stronger shrinkage. The 10,000-draw bootstrap repeats source fitting, target fitting, and penalty selection, and each contrast is corrected across the twelve settings (six pairs, two target templates).

The source-shrunk update beats adapted reuse in ten of twelve settings, but never differs significantly from the shared-shrunk update, and full refitting never differs significantly from it (Table~\ref{tab:shrinkage}). The source pattern shows no measurable advantage over regularization toward a common gain.

\begin{table}[htbp]\centering\scriptsize\setlength{\tabcolsep}{2.2pt}\renewcommand{\arraystretch}{1.08}
\caption{\textbf{Shrinking toward the source gains gives no detected advantage over shrinking toward a shared gain.} Differences in $Q^2$ points with intervals corrected across twelve settings for each comparison; bold estimates have intervals that exclude zero. Joint: wording and example change; Frozen: the questionnaire template of \S\ref{sec:remeasure}.}
\label{tab:shrinkage}
\begin{tabular*}{\linewidth}{@{\extracolsep{\fill}}llrrrrrrrrrrrr@{}}
\toprule
& & \multicolumn{3}{c}{Source $-$ shared shrunk} & \multicolumn{3}{c}{Source shrunk $-$ adapted} & \multicolumn{3}{c}{Refit $-$ source shrunk} & \multicolumn{3}{c}{Source shrunk $-$ shared} \\
\cmidrule(lr){3-5}\cmidrule(lr){6-8}\cmidrule(lr){9-11}\cmidrule(l){12-14}
Pair & Template & Est. & Low & Up & Est. & Low & Up & Est. & Low & Up & Est. & Low & Up \\
\midrule
Llama-3.1 & Joint & 0.01 & $-0.11$ & 0.24 & \textbf{3.82} & 1.83 & 6.92 & 0.00 & $-0.02$ & 0.05 & \textbf{1.83} & 0.66 & 2.60 \\
 & Frozen & 0.00 & $-0.12$ & 0.23 & \textbf{4.27} & 2.30 & 6.56 & $-0.00$ & $-0.08$ & 0.02 & \textbf{2.40} & 1.18 & 3.34 \\
OLMo-3 & Joint & $-0.01$ & $-0.27$ & 0.17 & \textbf{3.99} & 2.21 & 5.83 & 0.02 & $-0.01$ & 0.30 & \textbf{2.65} & 1.11 & 3.48 \\
 & Frozen & 0.01 & $-0.13$ & 0.20 & \textbf{2.25} & 1.15 & 3.31 & 0.00 & $-0.10$ & 0.13 & \textbf{3.34} & 1.99 & 4.32 \\
Qwen2.5 & Joint & $-0.04$ & $-0.24$ & 0.29 & 1.52 & $-0.21$ & 3.09 & $-0.01$ & $-0.28$ & 0.14 & 1.34 & $-0.34$ & 2.28 \\
 & Frozen & $-0.02$ & $-0.18$ & 0.08 & \textbf{1.28} & 0.01 & 2.44 & 0.03 & $-0.22$ & 0.13 & \textbf{1.75} & 0.49 & 2.47 \\
Qwen3 & Joint & 0.07 & $-0.17$ & 0.50 & \textbf{0.97} & 0.29 & 1.63 & 0.00 & $-0.10$ & 0.15 & \textbf{1.43} & 0.43 & 2.12 \\
 & Frozen & 0.06 & $-0.15$ & 0.17 & \textbf{1.58} & 0.36 & 3.01 & 0.00 & $-0.07$ & 0.17 & \textbf{1.25} & 0.03 & 2.12 \\
Qwen3.5 & Joint & $-0.12$ & $-0.53$ & 0.20 & \textbf{1.89} & 0.81 & 3.17 & 0.10 & $-0.03$ & 0.23 & 0.39 & $-0.87$ & 1.20 \\
 & Frozen & 0.03 & $-0.91$ & 0.67 & 1.54 & $-0.96$ & 3.49 & $-0.12$ & $-1.45$ & 0.72 & 1.16 & $-1.00$ & 2.63 \\
T\"ulu & Joint & 0.01 & $-0.03$ & 0.19 & \textbf{3.31} & 1.70 & 5.52 & $-0.00$ & $-0.03$ & 0.03 & \textbf{1.92} & 1.00 & 2.76 \\
 & Frozen & 0.01 & $-0.09$ & 0.16 & \textbf{2.63} & 1.36 & 4.26 & $-0.01$ & $-0.13$ & 0.13 & \textbf{2.00} & 1.09 & 2.68 \\
\bottomrule
\end{tabular*}
\end{table}

\paragraph{Budget curves.} The same fits were evaluated on 200 nested, task-stratified subsets of the target fitting questions at 8, 16, 32, and 64 questions per attribute (Figure~\ref{fig:targetbudget}; Table~\ref{tab:shrink-budget}). Unregularized refitting loses to adapted reuse at eight questions in every setting. The source-shrunk update has a positive median advantage over adapted reuse at every budget in all twelve settings and trails refitting by at most 0.4 points from 32 questions on. The shared-shrunk update performs similarly and is often better at small budgets. Subset spreads describe the tested settings, not confidence intervals. For scale, sixteen questions per attribute amount to about $287\times16\times2\approx9{,}200$ forward passes per checkpoint pair.
\label{app:budget-displays}

\begin{table}[htbp]\centering\footnotesize\setlength{\tabcolsep}{3.5pt}\renewcommand{\arraystretch}{1.06}
\caption{\textbf{A regularized update beats adapted reuse at every budget.} Median $Q^2$ differences from adapted reuse, in points, over 200 subsets of the target fitting questions: unregularized refitting (Refit) and the update shrunk toward the source gains (Shrunk).}
\label{tab:shrink-budget}
\begin{tabular*}{\linewidth}{@{\extracolsep{\fill}}llrrrrrrrr@{}}
\toprule
& & \multicolumn{2}{c}{$k=8$} & \multicolumn{2}{c}{$k=16$} & \multicolumn{2}{c}{$k=32$} & \multicolumn{2}{c}{$k=64$} \\
\cmidrule(lr){3-4}\cmidrule(lr){5-6}\cmidrule(lr){7-8}\cmidrule(l){9-10}
Pair & Template & Refit & Shrunk & Refit & Shrunk & Refit & Shrunk & Refit & Shrunk \\
\midrule
Llama-3.1 & Joint change & $-1.36$ & 0.85 & 1.76 & 2.02 & 3.02 & 2.60 & 3.52 & 3.40 \\
 & Frozen & $-2.31$ & 0.76 & 1.66 & 2.31 & 3.26 & 3.35 & 3.87 & 3.90 \\
\addlinespace[2pt]
OLMo-3 & Joint change & $-2.65$ & 0.57 & 1.34 & 1.82 & 2.79 & 2.30 & 3.59 & 3.49 \\
 & Frozen & $-4.83$ & 0.07 & $-0.45$ & 0.46 & 1.05 & 1.30 & 1.79 & 1.73 \\
\addlinespace[2pt]
Qwen2.5 & Joint change & $-5.98$ & 0.05 & $-1.52$ & 0.36 & 0.21 & 0.88 & 1.05 & 1.11 \\
 & Frozen & $-6.18$ & 0.05 & $-1.56$ & 0.19 & 0.07 & 0.61 & 0.88 & 0.85 \\
\addlinespace[2pt]
Qwen3 & Joint change & $-4.32$ & 0.02 & $-1.14$ & 0.14 & 0.18 & 0.37 & 0.69 & 0.57 \\
 & Frozen & $-3.21$ & 0.13 & $-0.37$ & 0.49 & 0.78 & 0.81 & 1.28 & 1.30 \\
\addlinespace[2pt]
Qwen3.5 & Joint change & $-2.51$ & 0.11 & 0.01 & 0.39 & 1.16 & 1.03 & 1.66 & 1.63 \\
 & Frozen & $-12.47$ & 0.25 & $-4.23$ & 0.38 & $-0.90$ & 0.87 & 0.55 & 1.27 \\
\addlinespace[2pt]
T\"ulu & Joint change & $-0.48$ & 0.86 & 1.73 & 2.05 & 2.71 & 2.62 & 3.09 & 3.05 \\
 & Frozen & $-2.17$ & 0.62 & 0.79 & 1.59 & 1.84 & 1.88 & 2.33 & 2.40 \\
\bottomrule
\end{tabular*}
\end{table}

\paragraph{Flexible reuse at each budget.} The two-parameter predictor was scored on the same 200 nested subsets (frozen protocol; the regenerated shared-scaling and adapted-reuse scores match the stored draws exactly). Flexible reuse stays within 0.75 points of shared scaling at every budget in 11 of 12 settings (OLMo-3 on the frozen template, at 1.3 points, is the exception) and beats single-amplitude reuse in median throughout. Against the source-shrunk update it is competitive only at the smallest budgets: the update's median advantage over flexible reuse is positive in 1 of 12 settings at 8 questions, 11 at 32, and 12 at 64 (Table~\ref{tab:flex-budget}). Subset summaries are descriptive.

\begin{table}[htbp]\centering\footnotesize\setlength{\tabcolsep}{4pt}\renewcommand{\arraystretch}{1.06}
\caption{\textbf{The regularized update exceeds flexible reuse in median in 11 of 12 settings at 32 questions per attribute and all 12 at 64.} Median $Q^2$ differences in points over 200 subsets of the target fitting questions at $k$ questions per attribute.}
\label{tab:flex-budget}
\begin{tabular*}{\linewidth}{@{\extracolsep{\fill}}llrrrrrrrr@{}}
\toprule
& & \multicolumn{4}{c}{Two parameters $-$ shared scaling} & \multicolumn{4}{c}{Source-shrunk update $-$ two parameters} \\
\cmidrule(lr){3-6}\cmidrule(l){7-10}
Pair & Template & $k=8$ & 16 & 32 & 64 & $k=8$ & 16 & 32 & 64 \\
\midrule
Llama-3.1 & Joint & $-0.05$ & 0.01 & $-0.03$ & 0.02 & $-1.01$ & 0.06 & 0.61 & 1.42 \\
 & Frozen & 0.16 & 0.20 & 0.22 & 0.23 & $-1.26$ & 0.24 & 1.25 & 1.81 \\
OLMo-3 & Joint & 0.03 & 0.10 & 0.14 & 0.16 & $-0.79$ & 0.24 & 0.86 & 1.97 \\
 & Frozen & 1.28 & 1.32 & 1.34 & 1.35 & $-0.03$ & 0.23 & 1.05 & 1.47 \\
Qwen2.5 & Joint & 0.38 & 0.40 & 0.41 & 0.41 & $-0.36$ & $-0.16$ & 0.34 & 0.54 \\
 & Frozen & 0.72 & 0.74 & 0.74 & 0.75 & $-0.14$ & $-0.04$ & 0.36 & 0.58 \\
Qwen3 & Joint & 0.58 & 0.62 & 0.63 & 0.64 & $-0.11$ & $-0.03$ & 0.19 & 0.38 \\
 & Frozen & 0.21 & 0.24 & 0.25 & 0.26 & $-0.41$ & $-0.09$ & 0.23 & 0.70 \\
Qwen3.5 & Joint & $-0.12$ & $-0.13$ & $-0.13$ & $-0.15$ & $-1.09$ & $-0.88$ & $-0.31$ & 0.27 \\
 & Frozen & $-0.14$ & $-0.05$ & $-0.05$ & $-0.02$ & 0.01 & 0.12 & 0.61 & 0.91 \\
T\"ulu & Joint & $-0.03$ & 0.01 & 0.03 & 0.04 & $-0.51$ & 0.63 & 1.22 & 1.63 \\
 & Frozen & 0.29 & 0.30 & 0.31 & 0.31 & $-0.27$ & 0.65 & 0.94 & 1.45 \\
\bottomrule
\end{tabular*}
\end{table}

\paragraph{Frozen questionnaire template.} The template, a 16/32-question recipe, the full-refit reference, and a two-comparison primary rule were written down on 2026-09-22, before any inference on the template. OLMo-3 and T\"ulu were then measured; these two comparisons are the prospective test. Qwen2.5 and Llama-3.1, and afterwards Qwen3 and Qwen3.5, were measured later on the unchanged template, each with its own four-comparison family; because the template had been observed, these are extensions, not independent prospective replications. Both prospective pairs pass and also show structure beyond shared scaling; Llama-3.1, Qwen2.5, and Qwen3 reproduce both contrasts, and both Qwen3.5 contrasts are unresolved (Table~\ref{tab:prospective}). The descriptive recipe asked the 32-question median to exceed full-data adapted reuse and to lie within one point of full refitting: T\"ulu meets both goals, OLMo-3 exceeds adapted reuse but misses the gap goal, Llama-3.1 and Qwen2.5 miss the gap goal, and Qwen3's gap falls below one point.

\begin{table}[htbp]\centering\small\setlength{\tabcolsep}{4pt}\renewcommand{\arraystretch}{1.1}
\caption{\textbf{What refitting gains on the frozen questionnaire.} $Q^2$ points with corrected intervals: two comparisons for the prospective OLMo-3 and T\"ulu test, four within each later extension. Bold estimates have intervals above zero. Budget gaps are medians over subsets of the difference from refitting on all 181 fitting questions, not confidence intervals.}
\label{tab:prospective}
\begin{tabular*}{\linewidth}{@{\extracolsep{\fill}}llrrrrrrrr@{}}
\toprule
& & \multicolumn{3}{c}{Refit $-$ adapted reuse} & \multicolumn{3}{c}{Refit $-$ shared} & \multicolumn{2}{c}{Gap to full refit} \\
\cmidrule(lr){3-5}\cmidrule(lr){6-8}\cmidrule(l){9-10}
Pair & Design & Est. & Lower & Upper & Est. & Lower & Upper & $k=16$ & $k=32$ \\
\midrule
Llama-3.1 & Extension & \textbf{4.26} & 2.56 & 6.18 & \textbf{2.39} & 1.25 & 3.14 & 2.67 & 1.08 \\
OLMo-3 & Prospective & \textbf{2.26} & 1.32 & 3.13 & \textbf{3.34} & 2.22 & 4.04 & 3.04 & 1.33 \\
Qwen2.5 & Extension & \textbf{1.31} & 0.04 & 2.40 & \textbf{1.78} & 0.51 & 2.42 & 3.05 & 1.27 \\
Qwen3 & Extension & \textbf{1.58} & 0.47 & 2.81 & \textbf{1.25} & 0.16 & 1.95 & 1.95 & 0.82 \\
Qwen3.5 & Extension & 1.41 & $-1.26$ & 3.23 & 1.04 & $-1.32$ & 2.36 & 5.93 & 2.31 \\
T\"ulu & Prospective & \textbf{2.62} & 1.59 & 3.83 & \textbf{1.99} & 1.20 & 2.50 & 1.97 & 0.81 \\
\bottomrule
\end{tabular*}
\end{table}

\paragraph{Unregularized crossover and allocation.} Figure~\ref{fig:budget-unregularized} shows the unregularized crossover on the 4 to 181 question grid and an equal-budget allocation comparison: measuring every attribute versus measuring a category-stratified half of the inventory more deeply, with adapted reuse for the unmeasured half. Broad coverage loses at small budgets and wins once each attribute receives enough questions; the median crossover is 16 questions in OLMo-3, T\"ulu, and Llama-3.1, 32 in Qwen3.5, and 64 in Qwen2.5 and Qwen3, for both half- and quarter-inventory comparators. Both analyses are retrospective and descriptive; per-budget values are in the supplementary material.

\begin{figure}[htbp]
\centering\includegraphics[width=.94\linewidth]{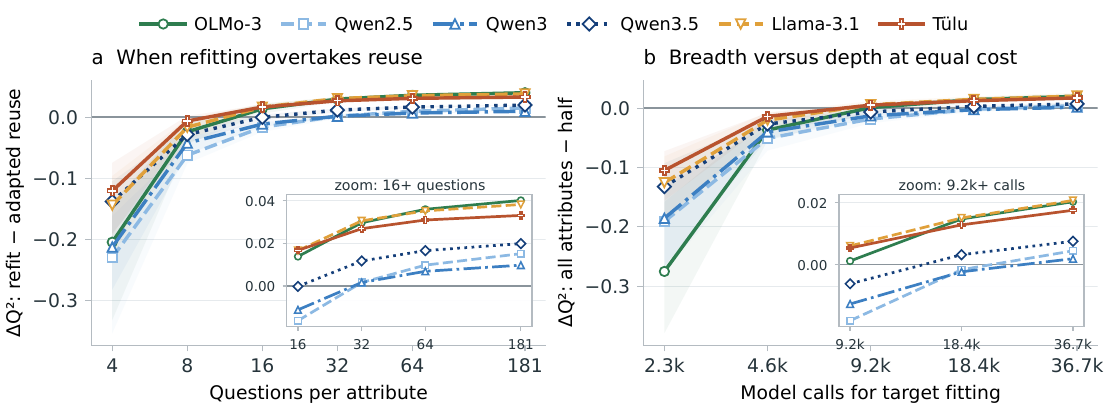}
\caption{\textbf{Unregularized crossover and breadth allocation.} Left: target refitting minus single-amplitude reuse. Right: measuring every attribute minus measuring half the attributes more deeply, at equal numbers of model calls. $\Delta Q^2$ on its original scale (0.01 is one point); medians and interquartile ranges over 200 subsets. Regularized alternatives appear in Figure~\ref{fig:targetbudget}.}
\label{fig:budget-unregularized}
\end{figure}

\section{Robustness and numerical precision}
\label{app:robustness}
These checks ask whether the main comparisons depend on the answer interface, the order of answer options, the particular evaluation questions, a few extreme attributes, or numerical precision. Each has its own protocol and decision rule, frozen before computation, and reuses the attributes, estimators, and weights of the original analysis; loaders first reproduce the original estimates within $10^{-10}$. All checks keep the forced-choice readout. Uncertainty comes from task-stratified whole-question draws that refit source and target in every draw (100,000 for the chat analysis, 20,000 otherwise). Transfer intervals are corrected across 42 directions and structure intervals across seven pairs.

\begin{table}[htbp]\centering\small\renewcommand{\arraystretch}{1.15}
\caption{\textbf{Single-amplitude reuse loses under every tested condition.} Counts refer to the 42 directed transfers with one target-fitted amplitude, corrected across 42 directions within each condition. Medians are adapted reuse minus shared scaling in $Q^2$ points. The last column counts directions in which target refitting significantly beats adapted reuse. Qwen2.5 uses bfloat16 here; float32 checks appear in Appendix~\ref{app:controls}.}
\label{tab:robust-transfer}
\begin{tabular}{@{}lrrrrr@{}}
\toprule
Condition & Below shared & Above shared & Negative point & Median & Refit above reuse \\
\midrule
Completion (original) & 29 & 0 & 40 & $-1.92$ & 31 \\
Chat interface & 29 & 0 & 42 & $-2.30$ & 31 \\
Answer orders averaged & 33 & 0 & 41 & $-2.25$ & 36 \\
Reversed order only & 29 & 0 & 42 & $-2.09$ & 30 \\
Independent questions & 28 & 0 & 39 & $-1.81$ & 28 \\
\bottomrule
\end{tabular}
\end{table}

\begin{table}[htbp]\centering\small\renewcommand{\arraystretch}{1.02}
\caption{\textbf{Most structure estimates are positive across checks, and OLMo-3 is significant in every one.} Target refitting minus shared scaling on held-out questions, in $Q^2$ points, with intervals corrected across seven pairs that include fitting uncertainty. Bold estimates have intervals above zero. Qwen2.5 uses bfloat16 here; float32 checks appear in Appendix~\ref{app:controls}.}
\label{tab:robust-structure}\label{tab:robust-structure-reversed}
\begin{tabular}{@{}llrrr@{}}
\toprule
Pair & Condition & Estimate & Lower & Upper \\
\midrule
Llama-3.1 & Completion (original) & 0.07 & $-3.46$ & 1.97 \\
 & Chat interface & 0.97 & $-3.40$ & 3.33 \\
 & Answer orders averaged & 1.44 & $-0.86$ & 3.06 \\
 & Reversed order only & 1.43 & $-0.67$ & 2.76 \\
 & Independent questions & 1.62 & $-1.04$ & 3.13 \\
\addlinespace[2pt]
OLMo-2 & Completion (original) & $-0.02$ & $-1.50$ & 0.96 \\
 & Chat interface & 0.61 & $-1.49$ & 2.13 \\
 & Answer orders averaged & 0.26 & $-1.49$ & 1.26 \\
 & Reversed order only & 0.25 & $-2.25$ & 1.41 \\
 & Independent questions & 0.22 & $-0.83$ & 0.75 \\
\addlinespace[2pt]
OLMo-3 & Completion (original) & $\mathbf{3.10}$ & $1.34$ & $4.32$ \\
 & Chat interface & $\mathbf{3.13}$ & $1.12$ & $4.42$ \\
 & Answer orders averaged & $\mathbf{3.55}$ & $2.26$ & $4.56$ \\
 & Reversed order only & $\mathbf{3.20}$ & $1.43$ & $4.55$ \\
 & Independent questions & $\mathbf{2.04}$ & $0.07$ & $3.27$ \\
\addlinespace[2pt]
Qwen2.5 & Completion (original) & $\mathbf{1.23}$ & $0.09$ & $2.01$ \\
 & Chat interface & 1.11 & $-0.19$ & 1.90 \\
 & Answer orders averaged & 1.16 & $-0.03$ & 1.85 \\
 & Reversed order only & 0.56 & $-0.88$ & 1.27 \\
 & Independent questions & 0.80 & $-2.00$ & 2.01 \\
\addlinespace[2pt]
Qwen3 & Completion (original) & 0.87 & $-0.39$ & 1.81 \\
 & Chat interface & 1.28 & $-0.11$ & 2.14 \\
 & Answer orders averaged & $\mathbf{1.04}$ & $0.15$ & $1.54$ \\
 & Reversed order only & 0.90 & $-0.42$ & 1.49 \\
 & Independent questions & 1.02 & $-0.26$ & 1.61 \\
\addlinespace[2pt]
Qwen3.5 & Completion (original) & 1.13 & $-0.33$ & 2.15 \\
 & Chat interface & 0.90 & $-1.06$ & 2.14 \\
 & Answer orders averaged & 0.59 & $-1.61$ & 2.37 \\
 & Reversed order only & $-0.53$ & $-4.37$ & 2.48 \\
 & Independent questions & $\mathbf{1.39}$ & $0.12$ & $2.25$ \\
\addlinespace[2pt]
T\"ulu & Completion (original) & 2.11 & $-0.09$ & 3.41 \\
 & Chat interface & 1.97 & $-1.23$ & 3.79 \\
 & Answer orders averaged & 2.03 & $-0.35$ & 3.47 \\
 & Reversed order only & 2.10 & $-0.27$ & 3.64 \\
 & Independent questions & $\mathbf{2.67}$ & $0.55$ & $3.61$ \\
\bottomrule
\end{tabular}
\end{table}

\paragraph{Chat interfaces.} Each post-trained checkpoint is queried through its own chat template, including any default system prompt, with the four examples rendered as prior turns; Qwen3 and Qwen3.5 disable thinking. Base checkpoints keep the completion interface, so gains here describe the change from Base under completion to the post-trained model under its deployed interface. All seven endpoints pass the frozen gate on answer-token mass (means 0.94 to 1.00). Transfer: 29 of 42 directions are significantly negative and none positive, as the frozen rule required; direction-level estimates correlate at 0.73 with the completion results and share their sign in 40 of 42. Structure: OLMo-3 keeps its advantage (3.13 versus 3.10 points), while the Qwen2.5 interval now includes zero.

\paragraph{Answer order.} The original screen presents each question in one option order. Additive position bias cancels in persona contrasts, but an interaction between persona and position would not. We rerun all 13 endpoints with options swapped and map the log odds back. Reversed and original contrasts correlate at 0.71 to 0.94 across endpoints, and 3 to 15\% of contrast energy differs between orders. Averaging the two orders, 33 of 42 transfers are significantly negative and none positive. The frozen structure rule, which required both OLMo-3 and Qwen2.5 to stay significant, is not met: all seven estimates are positive and OLMo-3 rises to 3.55 points, but the Qwen2.5 interval reaches $-0.03$.

\paragraph{Independent questions.} A separate writer with no access to results, gains, or item effects produced 45 new questions, 15 per task, from the task definitions and two examples per task; two items overlapping existing questions were replaced by a mechanical screen, and the set was frozen before inference. Predictors fitted on the original 181 questions are scored on them. Both frozen rules are met: all seven structure estimates are positive, with OLMo-3, T\"ulu, and Qwen3.5 significant, and 28 of 42 transfers are significantly negative with none positive. Direction-level estimates correlate at 0.82 with the original ones.

\paragraph{Flexible reuse under the same conditions.} The two-parameter predictor of Appendix~\ref{app:cross-model-reuse} was rerun on the chat-interface, averaged-order, and independent-question moment matrices (frozen protocols; 20,000 draws refitting source and target; correction across 42 directions). All three conditions reproduce the original conclusion under the prespecified rule (no more than two significantly positive directions and at least 36 corrected upper bounds below one point): flexible reuse never beats shared scaling significantly, its median gain stays below 0.1 points, and its corrected upper bounds lie below one point in 37, 39, and 36 of 42 directions (Table~\ref{tab:flex-conditions}).

\begin{table}[htbp]\centering\footnotesize\setlength{\tabcolsep}{3pt}\renewcommand{\arraystretch}{1.1}
\caption{\textbf{Flexible reuse adds little beyond shared scaling under every tested condition.} Counts over 42 directions with intervals corrected across directions; medians in $Q^2$ points. The last two columns count directions in which two parameters significantly beat one amplitude, and in which target refitting significantly beats two parameters. Significance and counts use unrounded bounds. Qwen2.5 uses bfloat16 here; float32 checks appear in Appendix~\ref{app:controls}.}
\label{tab:flex-conditions}
\begin{tabular*}{\linewidth}{@{\extracolsep{\fill}}lrrrrrrrr@{}}
\toprule
& \multicolumn{6}{c}{Two parameters $-$ shared} & Two $-$ one & Refit $-$ two \\
\cmidrule(lr){2-7}\cmidrule(lr){8-8}\cmidrule(l){9-9}
Condition & $>0$ & $<0$ & Point $>0$ & Median & Upper $<1$ & Upper $<0.5$ & $>0$ & $>0$ \\
\midrule
Completion (original) & 1 & 0 & 28 & 0.03 & 40 & 26 & 33 & 6 \\
Chat interface & 0 & 0 & 26 & 0.02 & 37 & 28 & 38 & 6 \\
Answer orders averaged & 0 & 0 & 26 & 0.04 & 39 & 29 & 37 & 8 \\
Independent questions & 0 & 0 & 30 & 0.07 & 36 & 23 & 32 & 1 \\
\bottomrule
\end{tabular*}
\end{table}

\subsection{Influence of the most responsive attributes}
\label{app:extreme-attributes}
Qwen2.5's post-trained contrasts are large (RMS 6.7 log odds on new questions), so a few strongly responding attributes could dominate $Q^2$. Dropping the six attributes with the largest post-trained contrast energy on the fitting questions, mostly value and attitude fields, leaves the estimates largely intact (Table~\ref{tab:extreme-attributes}): Qwen2.5 keeps a similar estimate (1.05 versus 1.23 points) although its interval now includes zero, OLMo-3 stays significant, and T\"ulu becomes significant.

\begin{table}[htbp]\centering\small\setlength{\tabcolsep}{4pt}\renewcommand{\arraystretch}{1.06}
\caption{\textbf{Dropping the most responsive attributes leaves the structure estimates largely intact.} Cell gains minus shared scaling on new questions, in $Q^2$ points, with intervals corrected across seven pairs; bold estimates have intervals above zero. Qwen2.5 uses bfloat16 here; float32 checks appear in Appendix~\ref{app:controls}.}
\label{tab:extreme-attributes}
\begin{tabular*}{\linewidth}{@{\extracolsep{\fill}}lrrrrrr@{}}
\toprule
& \multicolumn{3}{c}{All 57 attributes} & \multicolumn{3}{c}{Six most responsive dropped} \\
\cmidrule(lr){2-4}\cmidrule(l){5-7}
Pair & Est. & Lower & Upper & Est. & Lower & Upper \\
\midrule
Llama-3.1 & 0.07 & $-3.45$ & 1.98 & 0.36 & $-3.31$ & 2.39 \\
OLMo-2 & $-0.02$ & $-1.49$ & 0.96 & 0.14 & $-1.40$ & 1.15 \\
OLMo-3 & $\mathbf{3.10}$ & $1.34$ & $4.38$ & $\mathbf{2.44}$ & $0.22$ & $3.84$ \\
Qwen2.5 & $\mathbf{1.23}$ & $0.08$ & $1.97$ & 1.05 & $-0.22$ & 1.77 \\
Qwen3 & 0.87 & $-0.40$ & 1.82 & 0.78 & $-0.73$ & 1.93 \\
Qwen3.5 & 1.13 & $-0.33$ & 2.15 & 1.48 & $-0.18$ & 2.63 \\
T\"ulu & 2.11 & $-0.09$ & 3.43 & $\mathbf{2.91}$ & $0.19$ & $4.50$ \\
\bottomrule
\end{tabular*}
\end{table}

\subsection{Numerical checks and the float32 rerun}
\label{app:controls}
Saved outputs carry model and prompt fingerprints, and independent re-implementations reproduce the principal estimates to within $1.4\times10^{-15}$. Numerical robustness is checked more narrowly. In bfloat16, batch composition can change individual log odds: repeats with batch size 16 on fixed subsets give maximum per-endpoint discrepancies of 0.125 to 1.0 log odds for Qwen3 and Qwen3.5, and 2.0 to 2.25 for the Qwen2.5 Instruct prompt readout. Float32 evaluation of three attributes in the OLMo-3 screen changes adjacent-stage $\Delta Q^2$ by at most $1.6\times10^{-5}$; the Qwen2.5 screen was rerun in full (below). Such spot checks do not establish invariance of every decision.

\paragraph{Qwen2.5 core screen in float32.} Because Qwen2.5 is one of the two pairs with significant structure and its profile study changed under float32, both core-screen endpoints were rerun on the complete frozen design in float32 with TF32 disabled (protocol frozen before inference; the loaders applied to the bfloat16 outputs reproduce the stored moments exactly). The precision discrepancy is far smaller than in the profile study: the median relative difference $\|d_{\rm bf16}-d_{\rm fp32}\|/\|d_{\rm bf16}\|$ over attributes is 0.077 for Base and 0.052 for Instruct, and the RMS difference (0.086 and 0.31 log odds) is 7 and 5\% of the float32 contrast RMS (1.17 and 5.97). The main qualitative findings persist, although one marginal significance result and the count of bounds below one point change. New-question structure moves from 1.23 to 1.28 points, with refit interval 0.15 to 2.07 and fixed-gain interval 0.54 to 2.01; the additive ablation gives 1.19; shared scaling leaves 0.331 of the no-change error; category summaries of $\kappa$ and $\rho_0$ change by at most 0.07 and 0.005. Across the 12 transfer directions involving Qwen2.5 (Table~\ref{tab:fp32-core}), every one-amplitude loss keeps its status and refitting still beats flexible reuse significantly in the direction into OLMo-3. Two counts move: the marginal two-parameter gain from Llama-3.1 into Qwen2.5 (lower bound 0.0005 in bfloat16) is no longer significant, so no direction shows a significant flexible-reuse gain, and the upper bound for Qwen2.5 into OLMo-3 rises to 1.001, so 39 rather than 40 of 42 bounds lie below one point. Under the frozen rule the float32 values are primary for Qwen2.5 in Tables~\ref{tab:prediction}, \ref{tab:crossmodel-gates}, and \ref{tab:olmo-ablation}; Figures~\ref{fig:breadth} and~\ref{fig:stages} and Table~\ref{tab:all-crossmodel} keep the bfloat16 values.

\begin{table}[htbp]\centering\footnotesize\setlength{\tabcolsep}{2.6pt}\renewcommand{\arraystretch}{1.06}
\caption{\textbf{Transfers involving Qwen2.5 with its float32 outputs.} Differences in $Q^2$ points with intervals corrected across the 42 directions of the full analysis (20,000 draws); bold estimates have intervals that exclude zero. Other pairs remain in bfloat16. Significance and counts use unrounded bounds.}
\label{tab:fp32-core}
\begin{tabular*}{\linewidth}{@{\extracolsep{\fill}}llrrrrrrrrr@{}}
\toprule
& & \multicolumn{3}{c}{One amplitude $-$ shared} & \multicolumn{3}{c}{Two parameters $-$ shared} & \multicolumn{3}{c}{Refit $-$ two} \\
\cmidrule(lr){3-5}\cmidrule(lr){6-8}\cmidrule(l){9-11}
Source & Target & Est. & Lower & Upper & Est. & Lower & Upper & Est. & Lower & Upper \\
\midrule
Llama-3.1 & Qwen2.5 & $\mathbf{-6.10}$ & $-9.06$ & $-4.57$ & 0.22 & $-0.0049$ & 0.50 & 1.07 & $-0.27$ & 2.06 \\
OLMo-2 & Qwen2.5 & $\mathbf{-2.09}$ & $-3.68$ & $-1.24$ & 0.25 & $-0.07$ & 0.72 & 1.03 & $-0.27$ & 1.96 \\
OLMo-3 & Qwen2.5 & $\mathbf{-5.44}$ & $-8.82$ & $-3.81$ & 0.37 & $-0.04$ & 0.91 & 0.91 & $-0.44$ & 1.79 \\
Qwen3 & Qwen2.5 & $\mathbf{-1.49}$ & $-3.45$ & $-0.48$ & $-0.03$ & $-0.36$ & 0.19 & 1.32 & $-0.0018$ & 2.32 \\
Qwen3.5 & Qwen2.5 & $\mathbf{-2.14}$ & $-4.48$ & $-0.85$ & 0.04 & $-0.22$ & 0.36 & 1.24 & $-0.05$ & 2.15 \\
T\"ulu & Qwen2.5 & $\mathbf{-3.90}$ & $-6.33$ & $-2.60$ & 0.09 & $-0.14$ & 0.41 & 1.19 & $-0.12$ & 2.14 \\
Qwen2.5 & Llama-3.1 & $\mathbf{-2.20}$ & $-3.80$ & $-0.62$ & 0.08 & $-0.62$ & 0.47 & $-0.01$ & $-4.14$ & 2.03 \\
Qwen2.5 & OLMo-2 & $\mathbf{-2.77}$ & $-4.75$ & $-1.24$ & 0.10 & $-0.32$ & 0.38 & $-0.12$ & $-1.72$ & 0.89 \\
Qwen2.5 & OLMo-3 & $\mathbf{-3.39}$ & $-5.57$ & $-2.11$ & 0.39 & $-0.09$ & 1.0011 & $\mathbf{2.70}$ & $0.65$ & $4.31$ \\
Qwen2.5 & Qwen3 & $-1.24$ & $-3.67$ & 0.15 & 0.05 & $-0.32$ & 0.37 & 0.82 & $-0.96$ & 2.11 \\
Qwen2.5 & Qwen3.5 & $\mathbf{-2.51}$ & $-4.80$ & $-1.35$ & 0.03 & $-0.16$ & 0.29 & 1.11 & $-0.60$ & 2.37 \\
Qwen2.5 & T\"ulu & $\mathbf{-2.04}$ & $-3.81$ & $-0.96$ & 0.02 & $-0.24$ & 0.27 & 2.09 & $-0.79$ & 3.57 \\
\bottomrule
\end{tabular*}
\end{table}

\paragraph{Qwen2.5 profiles in float32.} Qwen2.5's profile contrasts are small, and a two-identity diagnostic had shown large bfloat16 versus float32 differences. We therefore reran both Qwen2.5 endpoints on the complete frozen profile design in float32, with TF32 disabled and everything else unchanged, under a protocol frozen before inference whose rule replaces bfloat16 with float32 results whenever any comparison changes significance. On the full panels, the median relative difference $\|d_{\rm bf16}-d_{\rm fp32}\|/\|d_{\rm bf16}\|$ is 0.868 for Base and 0.397 for Instruct; for Base the precision discrepancy (RMS 0.079 log odds) exceeds the float32 signal itself (0.064). All four insertion comparisons keep their significance, with smaller refitting advantages; in the replacement arm, refitting minus adapted reuse and adapted reuse minus shared scaling lose significance, and source-only transfer loses its significance in both arms, although its estimates remain negative ($-0.19$ and $-0.12$ points) (Table~\ref{tab:fp32-profile}). The paper reports the float32 values. The pooled Base attenuation ratio falls from 5.4\% to 3.3\%. The other five profile pairs remain in bfloat16 with batch-size checks, and the Qwen2.5 discrepancy is not a noise floor for them.

\begin{table}[htbp]\centering\small\renewcommand{\arraystretch}{1.12}
\caption{\textbf{Float32 evaluation changes comparisons in both profile arms; refitting advantages for inserted traits remain significant.} Differences in $Q^2$ points with intervals corrected across 16 comparisons (source-only rows use 98.75\% intervals), for the bfloat16 and float32 evaluations of the same frozen design. Bold estimates have intervals that exclude zero. Source only compares source cell gains with a source shared gain, both fitted on short statements and applied without target adjustment.}
\label{tab:fp32-profile}
\begin{tabular}{@{}llrrrrrr@{}}
\toprule
& & \multicolumn{3}{c}{bfloat16} & \multicolumn{3}{c}{float32} \\
\cmidrule(lr){3-5}\cmidrule(l){6-8}
Arm & Comparison & Est. & Lower & Upper & Est. & Lower & Upper \\
\midrule
Replacement & Source only & $\mathbf{-3.88}$ & $-8.07$ & $-2.10$ & $-0.19$ & $-0.62$ & 0.14 \\
Replacement & Refit $-$ shared & 6.52 & $-0.09$ & 22.42 & 5.41 & $-6.72$ & 18.21 \\
Replacement & Adapted $-$ shared & $\mathbf{-1.16}$ & $-2.74$ & $-0.25$ & $-1.65$ & $-4.25$ & 0.67 \\
Replacement & Refit $-$ adapted & $\mathbf{7.68}$ & $0.49$ & $24.16$ & 7.06 & $-5.68$ & 18.70 \\
Replacement & Refit $-$ task-shared & 2.68 & $-2.63$ & 14.08 & 4.45 & $-5.10$ & 21.00 \\
\addlinespace[2pt]
Insertion & Source only & $\mathbf{-3.02}$ & $-5.56$ & $-1.53$ & $-0.12$ & $-0.95$ & 0.58 \\
Insertion & Refit $-$ shared & $\mathbf{18.90}$ & $9.41$ & $25.45$ & $\mathbf{12.74}$ & $5.43$ & $20.04$ \\
Insertion & Adapted $-$ shared & $\mathbf{-3.04}$ & $-5.02$ & $-1.31$ & $\mathbf{-4.38}$ & $-6.90$ & $-2.06$ \\
Insertion & Refit $-$ adapted & $\mathbf{21.94}$ & $11.46$ & $28.79$ & $\mathbf{17.12}$ & $9.37$ & $24.58$ \\
Insertion & Refit $-$ task-shared & $\mathbf{16.39}$ & $8.82$ & $22.89$ & $\mathbf{12.66}$ & $5.54$ & $20.93$ \\
\bottomrule
\end{tabular}
\end{table}

\section{Composition}
\label{app:composition}
\subsection{Country cues combined with preferences}
The design crosses the USA/India country pair with three preference domains (risk, planning, social), eight questions per domain, three strengths, two phrasings, both information orders, and both answer orders. The country pair was chosen in a broader 72-question comparison. Each two-pair study contributes 36,096 evaluations; repeated conditions do not add independent questions. Whole-question bootstrap intervals correct for four comparisons within each study (OLMo-3/T\"ulu, Qwen2.5/Llama-3.1, Qwen3/Qwen3.5). The statistic $C=\log(\mathrm{RMS}_{\rm relevant}/\mathrm{RMS}_{\rm unrelated})$ compares the country effect under a relevant preference with the effect under an unrelated one, and a training change in $C$ is a change in \emph{relative} influence.

Training lowers $C$ in T\"ulu, Qwen2.5, and Qwen3 and raises it in Qwen3.5 (Table~\ref{tab:competition-six-pairs}); in T\"ulu the RMS ratio falls from 1.180 to 0.761, and in Qwen2.5 from 0.975 to 0.472. OLMo-3 and Llama-3.1 show no significant change. Unrelated preferences still move choices, especially for risk and planning, so they are not inert distractors. Leave-one-question and alternative aggregation checks retain T\"ulu's direction, and a repeat of the Qwen3 and Qwen3.5 subset with another batch size keeps their opposite directions. A matched control with non-demographic cues leaves the country-specific residual unresolved, so the data do not establish a competition mechanism unique to demographic cues.

\begin{table}[htbp]\centering\small
\caption{\textbf{Preference relevance and preference strength ask different questions.} Estimates with lower and upper bounds corrected within each study; bold estimates have intervals that exclude zero. A negative relevance change means training reduces country influence under relevant preferences relative to unrelated preferences. The additional effect of strong versus weak preferences increases in Qwen3.5 and remains unresolved in the other five pairs.}
\label{tab:competition-six-pairs}
\begin{tabular*}{\linewidth}{@{\extracolsep{\fill}}lrrrrrr@{}}
\toprule
& \multicolumn{3}{c}{Relevance-specific stage change} & \multicolumn{3}{c}{Additional strength-gradient change} \\
\cmidrule(lr){2-4}\cmidrule(l){5-7}
Pair & Estimate & Lower & Upper & Estimate & Lower & Upper \\
\midrule
Llama & $-0.1543$ & $-0.3681$ & $0.0539$ & $0.1815$ & $-0.1997$ & $0.5800$ \\
OLMo & $0.1198$ & $-0.1077$ & $0.4784$ & $-0.2200$ & $-0.5040$ & $0.1474$ \\
Qwen2.5 & $\mathbf{-0.7252}$ & $-0.9206$ & $-0.5554$ & $-0.2293$ & $-0.5420$ & $0.0949$ \\
Qwen3 & $\mathbf{-0.7904}$ & $-0.9756$ & $-0.5878$ & $-0.1880$ & $-0.5116$ & $0.0864$ \\
Qwen3.5 & $\mathbf{0.6790}$ & $0.4650$ & $0.9260$ & $\mathbf{0.2910}$ & $0.0487$ & $0.4978$ \\
T\"ulu & $\mathbf{-0.4381}$ & $-0.6594$ & $-0.2635$ & $-0.2252$ & $-0.4967$ & $0.0912$ \\
\bottomrule
\end{tabular*}
\end{table}

\subsection{Full profiles}
\label{app:qwen-profile-extension}
Profile construction is described in Appendix~\ref{app:design}. Inside profiles, Base trait effects shrink to 2.2\% (Qwen2.5 replacement, float32) to 9.9\% (Qwen3 insertion) of their isolated size across the six pairs and both arms; post-trained ratios, where reported, are similar or larger (11.3\% for Qwen2.5 and 8.9 to 12.5\% for Qwen3 and Qwen3.5). Neither ratio is a signal-to-noise ratio, and weaker contrasts alone do not identify cue competition, length effects, or suppression.

Table~\ref{tab:profile-target-shared} gives the three comparisons for every pair and arm. Source only compares short-statement cell gains with a short-statement shared gain, with no target adjustment; it is significantly negative in four settings. Refitting on profile data beats shared scaling and adapted reuse for inserted traits in Qwen2.5, Qwen3, Llama-3.1, and Qwen3.5; no replacement setting shows a significant refitting advantage. Table~\ref{tab:profile-baselines} completes the comparison with the two remaining baselines: adapted reuse against shared scaling, and refitting against a task-shared gain that lets scale differ by task. Adapted reuse is significantly worse than shared scaling for inserted traits in Qwen2.5, Qwen3, Qwen3.5, and T\"ulu. Against the task-shared baseline, the insertion advantage remains significant in Llama-3.1, Qwen2.5, and Qwen3 but not in Qwen3.5, which therefore does not establish attribute-specific value beyond task-dependent scaling. The profiles test controlled edits inside synthetic descriptions, and cross-validation splits questions while keeping identities, so the results do not extend to unseen people.

\begin{table}[htbp]\centering\footnotesize\setlength{\tabcolsep}{2.8pt}\renewcommand{\arraystretch}{1.06}
\caption{\textbf{Do gains learned from isolated traits predict effects inside full profiles?} Differences in $Q^2$ points on held-out profile questions; positive values favor the first predictor named. Intervals include source fitting and resampling of profiles and questions, corrected within each study family (24 comparisons for OLMo-3/T\"ulu, 16 each for Qwen2.5/Llama-3.1 and Qwen3/Qwen3.5; source-only rows use four). Bold estimates have intervals that exclude zero. Qwen2.5 uses float32; other pairs bfloat16. Source only compares source cell gains with a source shared gain, both fitted on short statements and applied without target adjustment.}
\label{tab:profile-target-shared}
\label{tab:alltarget-profile}
\begin{tabular*}{\linewidth}{@{\extracolsep{\fill}}llrrrrrrrrr@{}}
\toprule
& & \multicolumn{3}{c}{Source only} & \multicolumn{3}{c}{Refit $-$ shared} & \multicolumn{3}{c}{Refit $-$ adapted} \\
\cmidrule(lr){3-5}\cmidrule(lr){6-8}\cmidrule(l){9-11}
Pair & Edit & Est. & Lower & Upper & Est. & Lower & Upper & Est. & Lower & Upper \\
\midrule
Llama-3.1 & Replacement & $\mathbf{-10.22}$ & $-16.85$ & $-5.99$ & 0.39 & $-7.77$ & 4.84 & 1.49 & $-7.20$ & 7.03 \\
 & Insertion & 0.72 & $-0.59$ & 1.71 & \textbf{12.41} & 1.60 & 24.67 & \textbf{11.67} & 1.13 & 23.57 \\
OLMo-3 & Replacement & 0.56 & $-1.37$ & 3.19 & 0.63 & $-8.48$ & 7.20 & 0.57 & $-7.92$ & 8.51 \\
 & Insertion & 0.77 & $-0.26$ & 1.79 & 3.00 & $-1.27$ & 4.83 & 2.41 & $-1.59$ & 4.16 \\
Qwen2.5 & Replacement & $-0.19$ & $-0.62$ & 0.14 & 5.41 & $-6.72$ & 18.21 & 7.06 & $-5.68$ & 18.70 \\
 & Insertion & $-0.12$ & $-0.95$ & 0.58 & \textbf{12.74} & 5.43 & 20.04 & \textbf{17.12} & 9.37 & 24.58 \\
Qwen3 & Replacement & $-1.79$ & $-4.14$ & 0.02 & 4.63 & $-3.10$ & 11.49 & 5.60 & $-2.43$ & 13.47 \\
 & Insertion & $\mathbf{-5.67}$ & $-7.57$ & $-4.29$ & \textbf{12.63} & 4.65 & 18.28 & \textbf{16.36} & 7.44 & 22.96 \\
Qwen3.5 & Replacement & $-0.08$ & $-1.67$ & 1.67 & 2.61 & $-3.85$ & 7.57 & 3.00 & $-3.46$ & 8.25 \\
 & Insertion & $\mathbf{-1.85}$ & $-2.51$ & $-1.07$ & \textbf{7.09} & 0.44 & 12.65 & \textbf{8.96} & 2.03 & 15.18 \\
T\"ulu & Replacement & $-1.25$ & $-4.36$ & 0.86 & $-0.27$ & $-10.44$ & 4.17 & $-0.18$ & $-10.47$ & 4.66 \\
 & Insertion & $\mathbf{-3.97}$ & $-5.86$ & $-2.62$ & 8.93 & $-4.63$ & 18.81 & 10.35 & $-3.86$ & 20.71 \\
\bottomrule
\end{tabular*}
\end{table}

\begin{table}[htbp]\centering\footnotesize\setlength{\tabcolsep}{3.5pt}\renewcommand{\arraystretch}{1.06}
\caption{\textbf{Remaining profile baselines.} Differences in $Q^2$ points on held-out profile questions, with the same corrections as Table~\ref{tab:profile-target-shared}; bold estimates have intervals that exclude zero. Qwen2.5 uses float32.}
\label{tab:profile-baselines}
\begin{tabular*}{\linewidth}{@{\extracolsep{\fill}}llrrrrrr@{}}
\toprule
& & \multicolumn{3}{c}{Adapted reuse $-$ shared} & \multicolumn{3}{c}{Refit $-$ task-shared} \\
\cmidrule(lr){3-5}\cmidrule(l){6-8}
Pair & Edit & Est. & Lower & Upper & Est. & Lower & Upper \\
\midrule
Llama-3.1 & Replacement & $-1.10$ & $-3.14$ & 0.01 & $-1.08$ & $-7.66$ & 2.71 \\
 & Insertion & 0.74 & $-0.92$ & 2.18 & \textbf{11.26} & 0.95 & 22.31 \\
OLMo-3 & Replacement & 0.06 & $-2.43$ & 2.38 & $-0.43$ & $-10.48$ & 4.56 \\
 & Insertion & 0.58 & $-0.60$ & 1.77 & 2.41 & $-0.80$ & 3.99 \\
Qwen2.5 & Replacement & $-1.65$ & $-4.25$ & 0.67 & 4.45 & $-5.10$ & 21.00 \\
 & Insertion & $\mathbf{-4.38}$ & $-6.90$ & $-2.06$ & \textbf{12.66} & 5.54 & 20.93 \\
Qwen3 & Replacement & $-0.97$ & $-3.07$ & 0.17 & 0.98 & $-5.58$ & 5.84 \\
 & Insertion & $\mathbf{-3.73}$ & $-5.76$ & $-2.03$ & \textbf{11.57} & 4.16 & 17.24 \\
Qwen3.5 & Replacement & $-0.39$ & $-1.91$ & 0.71 & 0.89 & $-4.85$ & 4.60 \\
 & Insertion & $\mathbf{-1.86}$ & $-3.94$ & $-0.36$ & 5.07 & $-0.39$ & 9.59 \\
T\"ulu & Replacement & $-0.10$ & $-0.89$ & 0.54 & $-0.90$ & $-9.69$ & 3.34 \\
 & Insertion & $\mathbf{-1.42}$ & $-2.79$ & $-0.34$ & 7.98 & $-3.71$ & 16.36 \\
\bottomrule
\end{tabular*}
\end{table}

\paragraph{Absolute error.} Table~\ref{tab:profile-rmse} converts the profile comparisons into root mean square error in log odds. Profile contrasts are small in most pairs (post-trained RMS 0.12 to 0.35 log odds) and near one log odd only in Qwen2.5 and Qwen3. Refitting lowers the error of shared scaling by 12.9\% (Qwen2.5) and 9.0\% (Qwen3) for inserted traits, by 5.6 to 7.6\% for inserted traits in Llama-3.1, Qwen3.5, and T\"ulu, and by at most 4.4\% elsewhere, so a large relative $Q^2$ gain in a profile corresponds to a change of a few hundredths to a tenth of a log odd.

\begin{table}[htbp]\centering\footnotesize\setlength{\tabcolsep}{4pt}\renewcommand{\arraystretch}{1.06}
\caption{\textbf{Profile prediction error in log odds.} Post-trained profile contrast RMS and held-out RMSE of shared scaling, adapted reuse, and target refitting under question cross-validation; the last column is the relative reduction from shared scaling to refitting. Qwen2.5 uses float32; other pairs bfloat16.}
\label{tab:profile-rmse}
\begin{tabular*}{\linewidth}{@{\extracolsep{\fill}}llrrrrr@{}}
\toprule
Pair & Edit & Contrast RMS & Shared & Adapted & Refit & Reduction (\%) \\
\midrule
Llama-3.1 & Replacement & 0.203 & 0.197 & 0.198 & 0.196 & +0.2 \\
 & Insertion & 0.353 & 0.325 & 0.323 & 0.300 & +7.6 \\
OLMo-3 & Replacement & 0.122 & 0.111 & 0.110 & 0.110 & +0.4 \\
 & Insertion & 0.124 & 0.115 & 0.115 & 0.113 & +1.8 \\
Qwen2.5 & Replacement & 0.825 & 0.657 & 0.666 & 0.629 & +4.4 \\
 & Insertion & 1.096 & 0.797 & 0.830 & 0.695 & +12.9 \\
Qwen3 & Replacement & 0.676 & 0.624 & 0.628 & 0.607 & +2.8 \\
 & Insertion & 0.992 & 0.850 & 0.871 & 0.773 & +9.0 \\
Qwen3.5 & Replacement & 0.157 & 0.139 & 0.139 & 0.136 & +1.7 \\
 & Insertion & 0.207 & 0.162 & 0.164 & 0.152 & +6.0 \\
T\"ulu & Replacement & 0.206 & 0.201 & 0.201 & 0.202 & -0.1 \\
 & Insertion & 0.267 & 0.243 & 0.245 & 0.230 & +5.6 \\
\bottomrule
\end{tabular*}
\end{table}

\paragraph{Flexible reuse inside profiles.}
\label{app:flex-profile}
We reran the profile analysis for all six pairs and both arms, adding the two-parameter predictor fitted on the same target folds with bootstrap-refitted source gains (frozen protocol; same seeds; 10,000 crossed identity and question draws; correction across 12 settings). For inserted traits, refitting significantly beats flexible reuse in Qwen2.5, Llama-3.1, Qwen3, and Qwen3.5; flexible reuse itself improves on shared scaling in Qwen2.5 and Qwen3, so the source pattern carries some value there. No replacement comparison is significant (Table~\ref{tab:flex-profile}).

\begin{table}[htbp]\centering\footnotesize\setlength{\tabcolsep}{4pt}\renewcommand{\arraystretch}{1.06}
\caption{\textbf{Inserted traits still favor refitting over flexible reuse.} Differences in $Q^2$ points on held-out profile questions, with intervals corrected across 12 settings; bold estimates have intervals that exclude zero.}
\label{tab:flex-profile}
\begin{tabular*}{\linewidth}{@{\extracolsep{\fill}}llrrrrrr@{}}
\toprule
& & \multicolumn{3}{c}{Flexible reuse $-$ shared} & \multicolumn{3}{c}{Refit $-$ flexible reuse} \\
\cmidrule(lr){3-5}\cmidrule(l){6-8}
Pair & Arm & Est. & Lower & Upper & Est. & Lower & Upper \\
\midrule
Llama-3.1 & Replacement & 0.23 & $-1.84$ & 2.47 & 0.16 & $-7.56$ & 4.88 \\
 & Insertion & 0.77 & $-0.22$ & 2.12 & $\mathbf{11.64}$ & $1.71$ & $23.16$ \\
\addlinespace[2pt]
OLMo-3 & Replacement & 0.15 & $-1.60$ & 2.67 & 0.48 & $-7.25$ & 6.70 \\
 & Insertion & 0.55 & $-0.11$ & 1.56 & 2.44 & $-1.13$ & 4.06 \\
\addlinespace[2pt]
Qwen2.5 & Replacement & 0.48 & $-1.31$ & 2.17 & 4.92 & $-6.31$ & 17.54 \\
 & Insertion & $\mathbf{2.48}$ & $0.16$ & $5.04$ & $\mathbf{10.27}$ & $3.42$ & $17.75$ \\
\addlinespace[2pt]
Qwen3 & Replacement & 1.77 & $-0.82$ & 6.43 & 2.86 & $-3.92$ & 8.53 \\
 & Insertion & $\mathbf{1.81}$ & $0.28$ & $3.63$ & $\mathbf{10.82}$ & $3.92$ & $15.95$ \\
\addlinespace[2pt]
Qwen3.5 & Replacement & $-0.46$ & $-2.40$ & 0.71 & 3.07 & $-2.33$ & 7.61 \\
 & Insertion & 0.45 & $-0.95$ & 2.12 & $\mathbf{6.64}$ & $0.36$ & $12.25$ \\
\addlinespace[2pt]
T\"ulu & Replacement & $-0.12$ & $-1.12$ & 0.43 & $-0.15$ & $-8.91$ & 3.39 \\
 & Insertion & 0.07 & $-0.22$ & 0.56 & 8.86 & $-3.86$ & 17.72 \\
\bottomrule
\end{tabular*}
\end{table}

\section{Human surveys}
\label{app:human}
\paragraph{Definitions.} For survey-weighted human proportions $h_{qk}$ of group $k$ and model probabilities $\bar p_m(q,k)$ averaged over answer orders, human cross entropy, persona sensitivity, and selection regret are
\begin{align}
 L(m)&=\E_q\sum_k w_k\,\CE(h_{qk},\bar p_m(q,k)), \label{eq:ce}\\
 S(m)&=\E_q\sum_{k<k'}\pi_{kk'}\left|\bar p_m(q,k)-\bar p_m(q,k')\right|, \label{eq:sensitivity}\\
 R_{\rm select}&=L_{\rm te}\!\left(\arg\max_m S_{\rm tr}(m)\right)-L_{\rm te}\!\left(\arg\min_m L_{\rm tr}(m)\right). \label{eq:regret}
\end{align}
Temperature calibration rescales the clipped logit $\bar z_m=\operatorname{logit}\operatorname{clip}(\bar p_m,10^{-6},1-10^{-6})$ by one scalar per checkpoint, fitted on development questions:
\begin{equation}
 \widehat t_m=\argmin_{t\in[\log 0.05,\log 50]}
 \E_{q\in Q_{\rm dev}}\sum_k w_k\,
 \CE\!\left(h_{qk},\sigmoid(e^{-t}\bar z_m(q,k))\right).
 \label{eq:temperature}
\end{equation}
Any positive scalar preserves the ordering of groups within a question, so calibration cannot reverse an incorrect group difference. Earlier calibration variants with other bounds or penalties are described in the supplementary material.

\begin{figure}[t]
\centering
\includegraphics[width=.94\linewidth]{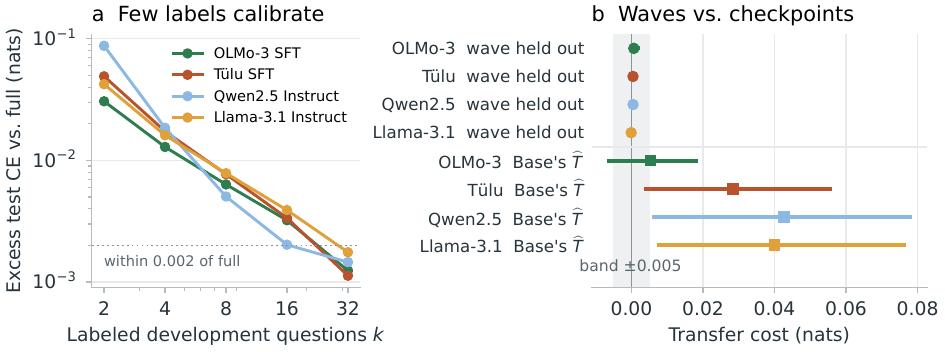}
\caption{\textbf{How many human labels does calibration need, and can it be reused?} Left: extra test loss compared with calibration on 84 development questions with group distribution labels, averaged across label subsets. Right: extra loss when temperatures are reused across survey waves or checkpoints, with 99.375\% intervals. The shaded reference band is $\pm0.005$ nats. Intervals for reuse across waves fall within $\pm0.005$ nats in the four main families. Reuse across checkpoints adds loss in Qwen2.5 and Llama; the difference remains unresolved in OLMo and T\"ulu.}
\label{fig:caltransfer}
\end{figure}

\paragraph{Survey targets.} OpinionQA supplies question/option mappings and survey responses. We keep native two-answer items without collapsing substantive multilevel answers. Human proportions use each wave's survey weights and only valid responses. Eligible groups require at least 100 valid respondents and a Kish effective size $(\sum_iw_i)^2/\sum_iw_i^2\ge100$. The main suite contains 84 questions from 14 waves, selected by fixed hash order with at most six per wave. A further 72 replication questions exclude exact matches, common battery roots, and close textual duplicates; they may share survey waves and respondents with the development questions.

\paragraph{Groups, weights, and readout.} The main groups are ages 18--29, 30--49, 50--64, and 65+, and recorded sex Male/Female. Loss weights give equal mass to the age and sex dimensions and uniform mass within each. Sensitivity weights compare only values within a dimension: each of the six age pairs receives $1/12$ and the sex pair $1/2$, while cross-dimension pairs receive zero. The generic persona states that the respondent is an adult living in the United States; the no-attribute condition keeps that description and omits only the group attribute. For original and reversed option-order log-odds $z^{(0)}$ and $z^{(1)}$, the probability used in survey analyses is
\begin{equation}
 \bar p=\tfrac12[\sigmoid(z^{(0)})+\sigmoid(-z^{(1)})].
\end{equation}
Probabilities are clipped to $[10^{-6},1-10^{-6}]$ for loss and logit computation. Five question-grouped folds separate selection and evaluation, and bootstrap resamples repeat the selection; ties prefer Base. The eleven-dimension suite adds religious attendance, marital status, metropolitan residence, volunteering, education, income, party affiliation, ideology, and region, weighting dimensions equally. Per-dimension results are descriptive rather than eleven independent claims per family.

\begin{table}[ht]
\centering\small
\caption{Selection regret across all admitted model families on the original survey dimensions. Raw intervals are 97.5\% for the original OLMo-3, T\"ulu, Qwen2.5, and Llama comparisons and 99\% for the later extensions. Calibrated controls repeat fitting and selection within training folds. OLMo comparisons select among four checkpoints; other rows compare two.}
\begin{tabular}{lrrrr}
\toprule
Model & Raw regret & Lower & Upper & Calibrated regret \\
\midrule
OLMo-2 & 0.611 & 0.421 & 0.823 & 0.000 \\
OLMo-3 & 0.233 & 0.149 & 0.341 & 0.000 \\
Qwen2.5 & 1.261 & 0.892 & 1.649 & -0.001 \\
Qwen3 & 0.751 & 0.573 & 0.956 & -0.002 \\
Qwen3.5 & 0.000 & -0.005 & 0.065 & 0.004 \\
Llama-3.1 & 0.215 & 0.138 & 0.302 & 0.000 \\
T\"ulu & 0.145 & 0.092 & 0.200 & -0.005 \\
Mistral & 0.070 & -0.032 & 0.181 & 0.000 \\
\bottomrule
\end{tabular}
\end{table}

\paragraph{Replication.} On the eleven-dimension suite, pooled raw regret is 0.329, 0.168, 1.414, and 0.224 nats for OLMo-3, T\"ulu, Qwen2.5, and Llama; these four pooled comparisons and the four dimension-calibration comparisons use 99.375\% intervals. On the 72 replication questions, raw regret is 0.240, 0.150, 1.138, and 0.188 nats for the same four families. On the age/sex dimensions, OLMo-2 and Qwen3 reach 0.611 $[0.421,0.823]$ and 0.751 $[0.573,0.956]$ with 99\% intervals, Mistral is 0.070 $[-0.032,0.181]$, and Qwen3.5 is 0.000 $[-0.005,0.065]$.

\paragraph{Sampled answers and interface scope.}
The sampled readout draws 64 short answers with an eight-token budget per condition. Regret is 0.154 for T\"ulu, 0.594 for Qwen2.5, and 0.217 for Llama, with corrected lower bounds above 0.005. OLMo's post-trained checkpoints do not yield reliably parsed answers, so no matched sampled comparison is reported for OLMo. Some probability interfaces also place low mass on the answer tokens; these conditional A/B distributions should not be read as unconstrained generation quality. A zero-shot structural format and a few other checkpoints likewise lack a usable matched interface, and no results are imputed for them.

\paragraph{Which checkpoint each rule selects.} In all eight comparisons the more persona-sensitive checkpoint is the post-trained one; for example, the sensitivity score rises from 0.078 to 0.160 in Qwen3, from 0.069 to 0.102 in OLMo-2 SFT, and from 0.036 to 0.078 in Mistral, and only slightly from 0.060 to 0.063 in Qwen3.5. Development-set human loss selects Base in the seven comparisons with positive regret and the post-trained checkpoint in Qwen3.5, where both rules agree. Sensitivity selection and post-training are therefore confounded in these data. Calibration removing the regret indicates that the cost lies mainly in the probability scale of post-trained outputs; it does not show that the calibrated models simulate people accurately.

\paragraph{Absolute fit after calibration.} Regret compares two selection rules, not fidelity. On the four-dimension survey suite (religious attendance, marital status, metropolitan residence, volunteering; 72 test questions), calibrated post-trained checkpoints reach a weighted cross entropy of 0.661 to 0.680 nats, against 0.582 for the entropy of the human distributions and 0.693 for a 50/50 prediction (Table~\ref{tab:absolute-fit}). Calibrated models therefore close only 12 to 29\% of the gap between an uninformative guess and a perfect match.

\begin{table}[htbp]\centering\small\renewcommand{\arraystretch}{1.08}
\caption{\textbf{Calibrated models remain far from the human distributions.} Weighted cross entropy in nats on 72 held-out questions of the four-dimension suite. The share of gap closed is $(0.693-\text{CE})/(0.693-0.582)$.}
\label{tab:absolute-fit}
\begin{tabular}{@{}lrrr@{}}
\toprule
Checkpoint & Calibrated CE & Excess over human entropy & Share of gap closed (\%) \\
\midrule
Llama-3.1 & 0.663 & 0.081 & 27 \\
Mistral & 0.674 & 0.092 & 17 \\
OLMo-3 & 0.680 & 0.098 & 12 \\
Qwen2.5 & 0.663 & 0.081 & 27 \\
Qwen3 & 0.670 & 0.088 & 21 \\
Qwen3.5 & 0.663 & 0.081 & 27 \\
T\"ulu & 0.661 & 0.079 & 29 \\
\bottomrule
\end{tabular}
\end{table}

\paragraph{Selecting group differences.} The direction audit uses the four survey dimensions above. Respondents in each wave are split into two halves by a fixed hash of their panel identifier, so a respondent stays in the same half across waves. On the discovery half, a pair of groups within a question and dimension is selected when both groups have an effective sample size of at least 30, the gap is at least 0.05, and it exceeds 1.96 approximate standard errors; the selected pair is kept only if both groups also reach an effective size of 30 in the evaluation half. Of 516 candidate pairs, 141 in 58 questions meet these rules. The evaluation half is used only as the target: human replication is the share of selected gaps whose sign agrees across halves (95\%), and model recovery is the share whose predicted sign matches the evaluation half. Selection of the direction and size of a difference uses only the discovery half; the evaluation half enters only through the sample-size eligibility rule, and no model output is used.

\paragraph{Group directions relative to chance.}
\label{app:direction-chance}
Temperature scaling is monotone, so the direction recovery rates of \S\ref{sec:human} are identical before and after calibration. We test each post-trained checkpoint against the 50\% expected from fair guesses, resampling the 58 questions with all their selected group pairs (10,000 draws; Bonferroni across seven checkpoints). Four checkpoints are detected above chance; for OLMo-3, T\"ulu, and Qwen3.5 no difference from chance is detected (Table~\ref{tab:direction-chance}). No pairwise difference between checkpoints survives correction across 21 comparisons. All rates remain far below the 95\% at which a held-out half of human respondents recovers the same directions. Base checkpoints are scored as a control below.

\begin{table}[htbp]\centering\small\renewcommand{\arraystretch}{1.1}
\caption{\textbf{Models recover some group directions, far fewer than human replication.} Percentage of 141 selected group differences whose direction the model recovers, with uncorrected question clustered 95\% intervals. Reported $p$ values are raw one sided tests against 50\%; the final column applies Bonferroni correction across seven models.}
\label{tab:direction-chance}
\begin{tabular}{@{}lrlrl@{}}
\toprule
Checkpoint & Recovered (\%) & 95\% interval & $p$ & Above chance (Bonferroni) \\
\midrule
Llama-3.1 & 69.5 & [62.1, 76.7] & 0.0001 & Yes \\
Mistral & 67.4 & [59.3, 74.6] & 0.0001 & Yes \\
OLMo-3 & 56.7 & [46.4, 66.0] & 0.1033 & Not detected \\
Qwen2.5 & 67.4 & [59.0, 75.2] & 0.0002 & Yes \\
Qwen3 & 69.5 & [60.3, 78.3] & 0.0001 & Yes \\
Qwen3.5 & 61.0 & [51.4, 69.9] & 0.0142 & Not detected \\
T\"ulu & 61.0 & [52.1, 69.9] & 0.0089 & Not detected \\
\bottomrule
\end{tabular}
\end{table}

\paragraph{Base checkpoints as a control.} The audit above scores only post-trained checkpoints. Scoring each family's Base checkpoint on the same 141 differences (same answer-order averaging; the stored post-trained accuracies are reproduced exactly; question-clustered draws shared by Base and post-trained; Bonferroni across seven families) shows Base recovery of 53.5 to 69.9\%, above chance in four families (Table~\ref{tab:direction-base}). Post-training changes recovery significantly only in Llama-3.1, where it rises by 16.0 points from the shared Llama-3.1 Base; the other six changes, between $-3.5$ and $+7.4$ points, are unresolved. Neither Base nor post-trained checkpoints approach the 95\% replication of the human halves, so the gap in group directions is not created by post-training.

\begin{table}[htbp]\centering\footnotesize\setlength{\tabcolsep}{4pt}\renewcommand{\arraystretch}{1.06}
\caption{\textbf{Group-direction recovery before and after post-training.} Percentage of the 141 selected differences recovered. Bonferroni intervals across seven families for the post-trained minus Base difference; bold estimates have intervals excluding zero. Llama-3.1 and T\"ulu share a Base.}
\label{tab:direction-base}
\begin{tabular*}{\linewidth}{@{\extracolsep{\fill}}lrlrrr@{}}
\toprule
Family & Base (\%) & Base above chance & Post-trained (\%) & Difference & Corrected interval \\
\midrule
Llama-3.1 & 53.5 & Not detected & 69.5 & \textbf{+16.0} & 3.9 to 30.1 \\
Mistral & 62.8 & Yes & 67.4 & +4.6 & -4.6 to 14.4 \\
OLMo-3 & 60.3 & Not detected & 56.7 & -3.5 & -12.8 to 5.1 \\
Qwen2.5 & 63.8 & Yes & 67.4 & +3.5 & -3.9 to 11.3 \\
Qwen3 & 69.9 & Yes & 69.5 & -0.4 & -10.0 to 9.5 \\
Qwen3.5 & 62.8 & Yes & 61.0 & -1.8 & -10.8 to 7.0 \\
T\"ulu & 53.5 & Not detected & 61.0 & +7.4 & -1.1 to 17.1 \\
\bottomrule
\end{tabular*}
\end{table}

\paragraph{Calibration data and reuse.}
The selection controls, calibration transfer tests, and label budget results use Equation~\ref{eq:temperature}. In the descriptive label-budget analysis, 32 human-labeled questions bring all eleven evaluated checkpoints within 0.002 nats of calibration on all 84 development questions; each question supplies group distributions. Label-budget curves average 200 nested development subsets, and their spread describes subsets rather than a population confidence interval. Leaving one survey wave out costs at most 0.0007 nats in all four main families, with 99.375\% intervals within $\pm0.005$. Reusing Base's temperature after training instead adds about 0.043 nats for Qwen2.5 and 0.040 for Llama, while OLMo and T\"ulu remain inconclusive. Calibrated regret near zero means the two selectors reach similar loss, not that all checkpoints are equivalent or that an internal representation has been repaired.

\subsection{Persona benefit and an output intervention}
\label{app:intervention}
This section reports the persona-benefit comparison cited in \S\ref{sec:human} and an output-level intervention that removes the component predicted by fitted gains. The analyses use four survey dimensions: religious attendance, marital status, metropolitan residence, and volunteering. These are recorded behavioral, household, and geographic variables, not psychological traits. The seven checkpoint pairs are OLMo-3, T\"ulu, Qwen2.5, Llama-3.1, Qwen3, Qwen3.5, and Mistral, a set that differs from the seven behavioral pairs of \S\ref{sec:results}, which include OLMo-2 rather than Mistral. Source gains are fitted on the exact survey persona sentences over 60 opinion questions, with 15 further questions for prediction checks, and are distinct from the 57-attribute coefficients of the main text.

\paragraph{Persona benefit.} Against directly matched no-attribute outputs, calibrated persona prompts improve cross entropy by about 0.00579, 0.01065, 0.01517, and 0.00588 nats for OLMo-3, T\"ulu, Qwen2.5, and Llama-3.1. The no-attribute prompt keeps the generic adult U.S. respondent description. In OLMo-3, SFT benefits more from the respondent's group information, yet Base better matches human responses overall. This separates the incremental benefit of persona information from overall simulation quality.

\paragraph{Removing the predicted component.} Let $z_0(k)$ and $z_1(k)$ denote Base and post-trained logits conditioned on group $k$, and let $x_k=z_0(k)-\E_k z_0(k)$. For fitted dimension gain $g_a$ and shared gain $\bar g$, the predicted heterogeneous component is $H_k=(g_a-\bar g)x_k$. We evaluate
\begin{equation}
 p_{\lambda,k}=\sigmoid\!\left(a_\lambda[z_1(k)-\lambda H_k]\right),
\end{equation}
where every condition fits $a_\lambda$ with the same development-label budget. The primary comparison removes the component ($\lambda=1$) and compares it with the unchanged output ($\lambda=0$) and with magnitude-matched shuffled readouts. Across the seven pairs, all fourteen 99.64\% intervals lie within $\pm0.005$ nats; this tolerance equals 33--86\% of the persona benefits above, so it bounds the effect at a chosen resolution rather than proving it negligible. The intervals condition on fitted source gains and fixed human targets, and the intervention acts on outputs, not on an internal neural pathway.

In descriptive checks, removal changes calibrated probabilities by about 0.09--1.25 percentage points RMS across doses from $-1$ to 2 and both temperature choices; a human-supervised correction matched in size has lower loss in five of seven families, with small absolute differences; and in the respondent-split audit of 141 group contrasts, removal's contrast-error intervals span zero in all seven families.

\section{Related work in more detail}
\label{app:literature}
\label{app:related-details}
\paragraph{Simulation targets.} Persona conditioning can reproduce patterns in survey samples and experiments \citep{argyle2023outofone,aher2023simulate}, and interactive agents extend simulation to behavior supported by memory and planning \citep{park2023generative}. These uses have different targets: plausible behavior, individual prediction, and population distributions. OpinionQA documents uneven agreement across demographic groups \citep{santurkar2023opinions}, and synthetic survey responses can misstate the variation and relationships in human data \citep{bisbee2024synthetic}. Richer descriptions can help: generated backstories approximate response distributions \citep{moon2024anthology}, interview-grounded agents predict individuals \citep{park2024thousand}, and survey-derived personas preserve population information \citep{rupprecht2026ggss}. Our profile edits ask a narrower question, how surrounding information changes one attribute's effect; attenuation of that effect does not make a complete profile a worse simulator.

\paragraph{Attributes and composition.} \citet{hu2024persona} relate prompting benefits to the human variation explained by persona variables, and \citet{froehling2026attributes} study which attributes to include for a given question. We instead predict how training changes controlled model contrasts, so our budget comparison concerns measuring a fixed inventory, not choosing which attributes best predict people. \citet{liu2024personasteered} combine demographic descriptions with congruous or incongruous stances; \citet{chameleon2026limit} and \citet{bhattacharyya2026personality} examine collapse and stable configurations when attributes combine. Our preference controls measure one cue's influence conditional on another, and matched profile edits separate uniform attenuation, which shared scaling absorbs, from changed relationships, for which target refitting recovers additional predictive value.

\paragraph{Prompt variation, steering, and reuse.} \citet{beck2024sociodemographic} distinguish sensitivity, performance, and robustness to paraphrase; \citet{lutz2025prompt} compare role formats and demographic priming; FormatSpread measures variation across formats \citep{sclar2024formatspread}. \citet{miehling2025steerability} quantify prompt steerability, while \citet{sorensen2026spectrum} distinguish steerability, distributional coverage, and alignment. Advice research separates available perspectives from their appropriate selection \citep{kumar2026advice}. Our comparison is predictive: it asks what a source relationship adds once the target may recalibrate it, and whether regularization alone would supply the same benefit. A change can leave source information useful while still making target refitting preferable, and failing to detect a reuse advantage does not show that the source contains no information.

\paragraph{Training and the assistant persona.} Post-training has been described as refining an assistant persona \citep{marks2026selection}, and persona vectors, the assistant axis, and persona features in emergent misalignment connect behavior to activation directions \citep{chen2025persona,lu2026assistant,wang2025personafeatures}. These accounts motivate our stage comparisons but study a different object: our gains describe responses to assigned attributes across checkpoints, not the assistant's identity or an internal direction.

\paragraph{Measurement and calibration.} Answer order and constrained choices affect measured opinions \citep{dominguezolmedo2024questioning,rottger2024compass}, and token probabilities can disagree with generated answers \citep{wang2024answerc}; these motivate our answer-order and sampled-answer checks. Temperature scaling \citep{guo2017calibration} and distributional evaluation against human labels \citep{meister2025distributional} motivate calibration as a control fitted on development questions and evaluated on held-out ones. A study-by-study comparison with the closest experiments appears in the supplementary material (Section~S4).

\clearpage
\section*{Ethics statement}
The survey targets describe historical group response distributions, not fixed properties of identities or predictions about individuals. Persona conditioning may reproduce stereotypes and obscure variation within groups. Our experiments do not justify replacing human participation in consequential decisions. We analyze existing survey data and synthetic prompts and report no individual respondents' records.
\section*{Reproducibility statement}
The appendix specifies the prompts, profile selection, weighting, data splits, estimators, and uncertainty calculations needed to reproduce the comparisons; the repository lists every statement and the prompt templates. A public repository (\url{https://github.com/thzva/persona-gain}) provides the question pools and frozen experimental designs, the cached answer log odds of the core attribute experiments, every saved numerical result behind the tables and figures, the analysis and figure scripts, and a map from each reported result to its file. Model weights and raw survey respondent records are not redistributed. Principal numerical estimates were checked using separate implementations on the saved model outputs.
\section*{AI use statement}
AI tools assisted with research and writing; the authors take responsibility for the content of this paper.
\end{document}